\newtheorem{assumption}{Assumption}
\newcommand{\R}{\mathbb{R}}
\newcommand{\N}{\mathbb{N}}
\newcommand{\bmH}{\mathcal{H}}
\newcommand{\bmG}{\mathcal{G}}
\newcommand{\bmF}{\mathcal{F}}
\newcommand{\bmN}{\mathcal{N}}
\newcommand{\bmO}{\mathcal{O}}
\newcommand{\1}{\mathds{1}}
\DeclareMathOperator{\LCBS}{\mathrm{LCB}^{s}_N}
\DeclareMathOperator{\LCBR}{\mathrm{LCB}^{r}_N}
\DeclareMathOperator*{\argmax}{arg\,max}
\newtheorem{theorem}{Theorem}[section]
\newtheorem{lemma}[theorem]{Lemma}
\newtheorem{example}{Example}
\newtheorem{remark}{Remark}
\title{Safely Learning Controlled Stochastic Dynamics}
\author{%
  Luc Brogat‐Motte\\
  Laboratoire des Signaux et Systèmes, CNRS, CentraleSupélec\\
  Université Paris‐Saclay, Gif‐sur‐Yvette, France\\
  Istituto Italiano di Tecnologia, Genoa, Italy\\
  \texttt{luc.brogatmotte@iit.it}
  \And
  Alessandro Rudi\\
  SDA Bocconi, Bocconi University, Milan, Italy\\
  \texttt{alessandro.rudi@sdabocconi.it}
  \And
  Riccardo Bonalli\\
  Laboratoire des Signaux et Systèmes, CNRS, CentraleSupélec\\
  Université Paris‐Saclay, Gif‐sur‐Yvette, France\\
  \texttt{riccardo.bonalli@cnrs.fr}
}
\begin{document}
\maketitle

\begin{abstract}
  We address the problem of safely learning controlled stochastic dynamics from discrete-time trajectory observations, ensuring system trajectories remain within predefined safe regions during both training and deployment. Safety-critical constraints of this kind are crucial in applications such as autonomous robotics, finance, and biomedicine. We introduce a method that ensures safe exploration and efficient estimation of system dynamics by iteratively expanding an initial known safe control set using kernel-based confidence bounds. After training, the learned model enables predictions of the system's dynamics and permits safety verification of any given control. Our approach requires only mild smoothness assumptions and access to an initial safe control set, enabling broad applicability to complex real-world systems. We provide theoretical guarantees for safety and derive adaptive learning rates that improve with increasing Sobolev regularity of the true dynamics. Experimental evaluations demonstrate the practical effectiveness of our method in terms of safety, estimation accuracy, and computational efficiency. 
\end{abstract}

\section{Introduction}

We consider the problem of safely learning the dynamics of controlled continuous-time stochastic systems from discrete-time observations of trajectory data. This setting is common in applications such as robotics, finance, and healthcare, where system dynamics are only partially known and must be estimated from data. A key challenge in these applications is ensuring safety during both the learning phase and subsequent deployment \citep{bonalli2022,lew2024}. As an example, consider an autonomous robot navigating a partially known and turbulent environment, as illustrated in Figure~\ref{fig:example_complex_smooth}. While the deterministic part of the dynamics may be approximately modeled using prior knowledge, the stochastic disturbances (represented by the brown region in Figure~\ref{fig:example_complex_smooth}) due to wind or sensor noise are often unknown and must be learned. Collecting data through naive exploration can result in unsafe trajectories, potentially causing damage to the system or its environment. A second example arises in financial portfolio management, where the drift component of asset prices may be known from historical data, but market volatility remains uncertain. Safety here may correspond to the requirement that the portfolio value stays above a critical threshold with high probability, simulating portfolio loss aversion in risk-sensitive financial decision-making. These examples highlight a common need: learning stochastic dynamics of a system from data, while ensuring safety throughout the process. This requires ensuring that all executed trajectories remain within a predefined safe region with high probability. In addition, at deployment time, the learned model should enable prediction of whether a proposed control input satisfies the safety requirements, including those not encountered during training \citep{wabersich2021predictive, lindemann2023safe}.

\paragraph{Outline of contributions.} The contributions of this work are as follows.
\begin{itemize}
    \item \textbf{Safe learning method.} We derive a method that safely learns controlled stochastic dynamics, where safety is defined as the requirement that system trajectories remain within a designated set of safe states with high probability. Our approach incrementally expands the known safe control set by selecting novel controls to evaluate, collecting corresponding trajectory data, and refining three models: a dynamics model for predicting state evolution, a safety model that estimates the probability of remaining within the safe region, and a reset model that captures the probability of returning the system to its initial state distribution, enabling repeated safe exploration under uncertainty. Alongside these models, we refine uncertainty estimates using kernel-based confidence bounds. After training, these models enable prediction of dynamics, safety, and reset feasibility for any given control, including those not seen during training.
    \item \textbf{Provably safe exploration and adaptive estimation rates.} We prove that the proposed method guarantees safety and derive learning rates for system estimation, with rates that are adaptive to the Sobolev regularity of the underlying dynamics. Crucially, our approach requires only smooth dynamics (with respect to time, state, and control variables) and an initial non-empty set of known safe controls. These mild assumptions make the method applicable to a broad range of complex real-world systems subject to stochastic disturbances, found in diverse areas including robotics, fluid flow control, and chemical reaction control \citep{martin2010true, plappert2018multi, morton2018deep, peters2022causal, fu2024mobile, wei2024diffphycon}.
    \item \textbf{Experimental validation.} We empirically demonstrate the performance of the approach in terms of safety, estimation accuracy, and computational efficiency. Specifically, we evaluate it on a benchmark two-dimensional stochastic dynamical system evolving in a bounded, safety-critical environment under stochastic perturbations (see Figure \ref{fig:example_complex_smooth}). An open-source Python implementation is provided (available at \texttt{github.com/lmotte/dynamics-safe-learn}).
\end{itemize}

 \begin{figure}[t]
    \centering
    \begin{minipage}[t]{0.48\linewidth}
        \centering
        \includegraphics[width=\linewidth]{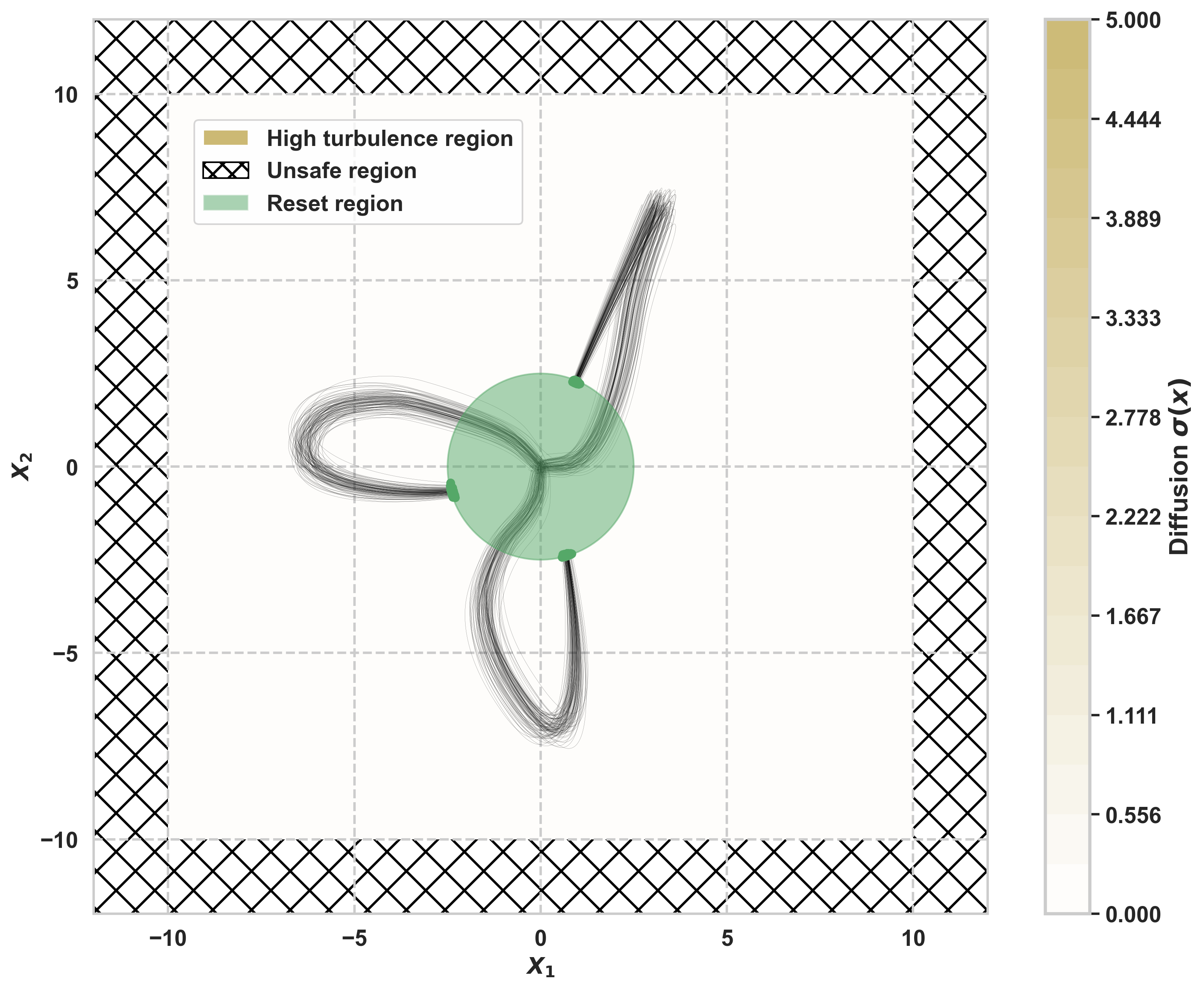}
    \end{minipage}
    \hfill
    \begin{minipage}[t]{0.48\linewidth}
        \centering
        \includegraphics[width=\linewidth]{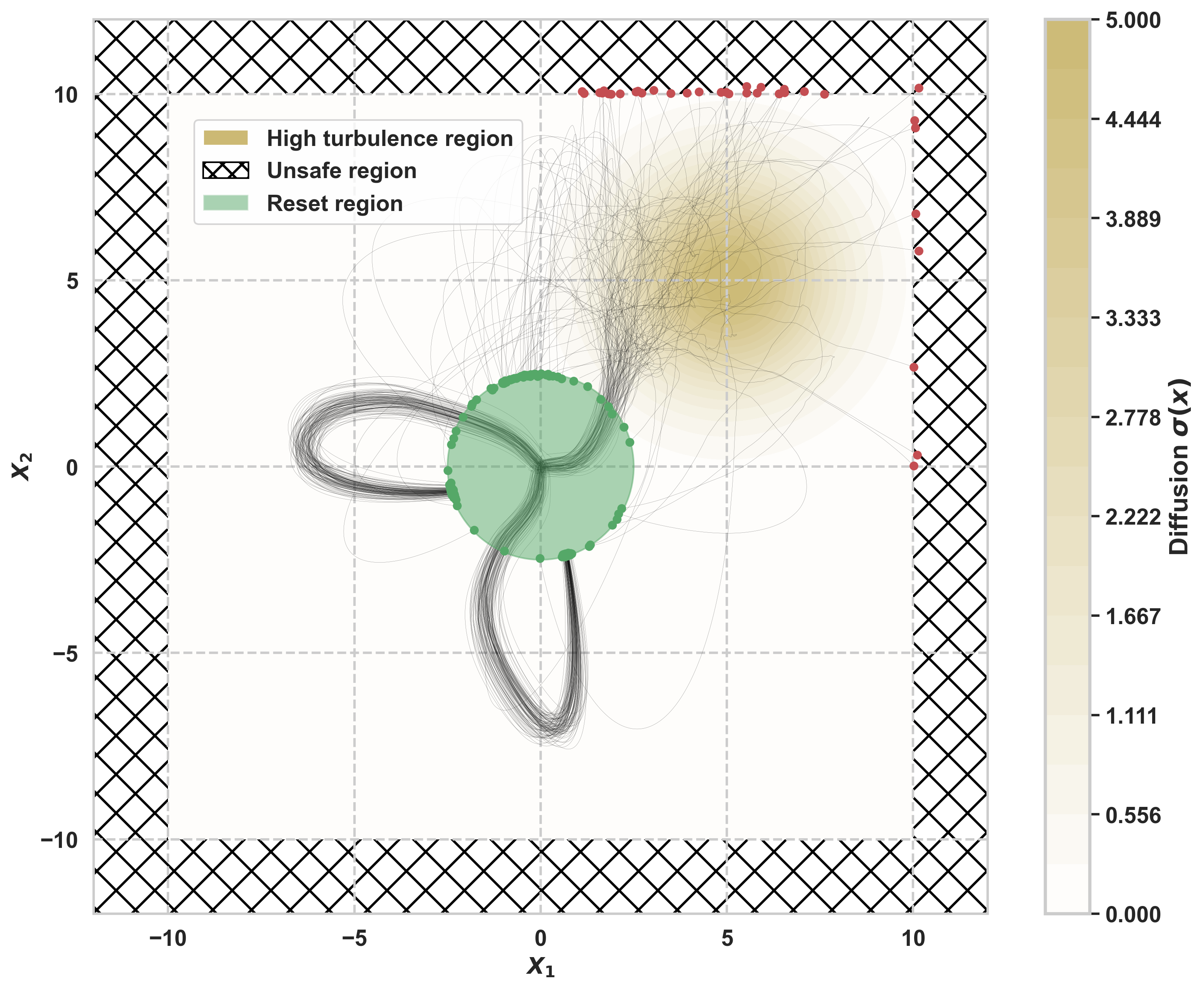}
    \end{minipage}
    \caption{Illustration of a complex, smooth dynamical system under deterministic conditions (left) and stochastic conditions with unknown disturbances (right). Shown are 100 simulated trajectories under three different controls. Ignoring stochastic disturbances (e.g., wind turbulence) can lead to unsafe trajectories (right), emphasizing the necessity of safe estimation methods that explicitly account for uncertainty.}
    \label{fig:example_complex_smooth}
\end{figure}

\section{Background}

We formalize the problem of safely learning controlled stochastic dynamical systems as follows.

\paragraph{Controlled SDE.} Let \(X\) be a dynamical system governed by a non-linear, \(n\)-dimensional controlled SDE \citep{bonalli2023, lew2023},
\begin{align}\label{eq:sde2}
& dX(t) = b(X(t), u(t, X(t)))dt + a(X(t), u(t, X(t))) dW(t), \quad u \in \bmH, \quad X(0) \sim p_0.
\end{align}
Here, \(T_{\max}>0\) is the fixed time horizon, \((b, a)\) are functions mapping \(\R^n \times \R^d\) to \(\R^{n} \times \R^{n \times n}\), \(W(t)\) is an \(n\)-dimensional standard Brownian motion, \(p_0\) is an initial probability density over \(\R^n\), and \(\bmH \subseteq \bmF([0, T_{\max}] \times \R^n, \R^d)\) is a finite-dimensional set of admissible controls, where \( \bmF([0, T_{\max}] \times \R^n, \R^d) \) denotes the space of measurable functions from \( [0, T_{\max}] \times \R^n \) to \( \R^d \).

\begin{example}[Second-order dynamical system]
\label{ex:second_order}
The system
\[
dX(t) \;=\; V(t)\,dt,
\quad
dV(t) \;=\; u\bigl(t, X(t)\bigr)\,dt \;+\; a\bigl(X(t)\bigr)\,dW(t),
\]
captures second-order dynamics under control \(u\) and state-dependent diffusion \(a\bigl(X(t)\bigr)\). Real-world examples include drones navigating turbulent environments (where \(u\) represents thrust or steering, and \(a(\cdot)\) models wind turbulence), fluid-dynamical systems \citep{batchelor2000introduction}, and molecular dynamics (where \(u\) captures external forces such as optical traps, and \(a(\cdot)\) reflects spatially varying thermal fluctuations \citep{volpe2013simulation,marago2013optical,pesce2020optical}).
\end{example}

\paragraph{Safety-critical environments.} Let \(g: \mathbb{R}^n \to \mathbb{R}\) be a function that partitions the state space into safe and unsafe regions. The \textit{safe region} is given by \(\{x \in \mathbb{R}^n : g(x) \geq 0\}\), while the \textit{unsafe region} is given by \(\{x \in \mathbb{R}^n : g(x) < 0\}\).

\paragraph{Safe control.} Let \(X_{u}\) denote the solution to Eq. \eqref{eq:sde2} under the control \(u \in \bmH\) (well-defined by existence and uniqueness; see Sec.3 in \cite{bonalli2025non}). Let \( u_{\theta} : [0, T_{\max}] \times \R^n \to \mathbb{R}^d \) denote a family of controls parameterized by \(\theta \in D \subset \mathbb{R}^m\), where \(D\) is a compact subset of \(\R^m\). Such a finite-dimensional parameterization is a mild assumption that is widely prevalent in many real-world applications, including robotics \citep{berkenkamp2017safe}, process control \citep{seborg2016process}, and financial engineering \citep{merton1975optimum}.

We define the safety level of the control \(u_\theta\) \textit{at} time \(t \in [0, T_{\max}]\) as
\begin{align}
    s(\theta, t) \triangleq \mathbb{P}\left(g(X_{u_{\theta}}(t)) \geq 0\right).
\end{align}
We define the safety level \textit{up to} time \(T \in [0, T_{\max}]\)  of the control \(u_\theta\) as
\begin{align}
    s^{\infty}(\theta, T) \triangleq \inf_{t \in [0, T]} s(\theta, t).
\end{align}

\paragraph{Learning problem.} We formulate the problem of safely learning controlled stochastic dynamical systems as the estimation of the probability density map
\begin{align}
    p: (\theta, t, x) \in D \times [0, T_{\max}] \times \mathbb{R}^n \mapsto p_{\theta}(t, x),
\end{align}
where \(p_{\theta}(t, x)\) is the density of the state \(X_{u_\theta}(t)\) under control \(u_\theta\), well-defined by standard existence and uniqueness results (see Sec.3 in \cite{bonalli2025non}).
To estimate this density, we collect a dataset of trajectories
\begin{align}
    (\theta_k, X_{u_{\theta_k}}(w_i^k, t_l))_{k \in \{1, \ldots, K\},\; i \in \{1, \ldots, Q\},\; l \in \{1, \ldots, M_k\}},
\end{align}
where each \(w_i^k\) denotes an independent Brownian motion sample path driving the stochastic trajectory, and \(M_k\) denotes the number of time steps in trajectory \(k\). All controls \(u_{\theta_k}\) are required to be safe. Specifically, the minimal probability of remaining within the safe region up to the time horizons \((T_k)_{k=1}^K = (t_{M_k})_{k=1}^K \) is constrained by
\begin{align}
    s^{\infty}(\theta_k, T_{k}) \geq 1 - \varepsilon, \quad \text{for each } k \in \llbracket 1, K \rrbracket.
\end{align}

This problem poses a fundamental challenge due to the coupling between learning and safety: accurately estimating the density $p_\theta$ requires data, but collecting data must respect safety constraints defined by $s^{\infty}$, which themselves depend on the very dynamics encoded in $p_\theta$.

\subsection{Related work}

Safe learning in control systems under uncertainty is a central topic in reinforcement learning, with methods built on assumptions such as known dynamics \citep{hewing2020learning}, controllability \citep{turchetta2016safe, mania2022active, selim2022safe}, or recovery policies \citep{hans2008safe, moldovan2012safe}.

Much of the literature focuses on discrete-time or discrete-state systems modeled as Markov Decision Processes (MDPs), where risk-sensitive and safe exploration techniques have been developed \citep{coraluppi1999risk, geibel2005risk, moldovan2012safe, garcia2012safe}. Among these, Safe Bayesian Optimization (BO) methods stand out for providing some of the strongest high-confidence safety guarantees during exploration \citep{sui2015safe, sui2018stagewise, bottero2022information, li2024safe}, particularly in MDP settings with known dynamics or access to safety level evaluations \citep{turchetta2016safe, turchetta2019safe}. In continuous domains, early work focused on designing control policies that avoid unsafe regions \citep{akametalu2014reachability, koller2018learning}, while more recent approaches incorporate offline learning and online adaptation for nonlinear systems \citep{lew2022safe}. Stability-based guarantees via Lyapunov theory offer formal certification but often require full dynamics knowledge \citep{berkenkamp2017safe, richards2018lyapunov}. Safe BO has also been adapted to continuous settings \citep{sukhija2023gosafeopt, prajapat2024safe}, under assumptions such as access to dynamics, safety oracles, or specific control-theoretic properties. Joint estimation of both dynamics and safety remains comparatively underexplored, particularly in continuous-time settings \citep{ahmadi2021safely, dalal2018safe, wabersich2018safe, fisac2018general, wabersich2018linear, khojasteh2020probabilistic, wagener2021safe, wachi2020safe, gu2022review}. 

In contrast to much of the literature, we assume no prior model of the system dynamics or safety function. Instead, we jointly explore and learn both the stochastic dynamics and the safety probabilities from trajectory data. Our method applies to broad classes of continuous-time, continuous-state stochastic systems, and provides provable guarantees on both safety and estimation accuracy. To the best of our knowledge, no prior work provides joint safe exploration and density estimation guarantees in this setting.
\section{Assumptions}

As formalized by the No-Free-Lunch Theorem \citep{devroye2013probabilistic}, learning is only possible under prior assumptions. We now state and discuss the key assumptions used in this work.

\begin{assumption}[Initial safe controls]\label{as:initial_safe_set}  For \(\varepsilon \in [0, 1]\), a non-empty set \(S_0 \subset D \times  [0, T_{\max}]\) is provided such that
\begin{align}
    s(\theta, t) \geq 1 - \varepsilon \quad \text{ for all } \quad (\theta, t) \in S_0.
\end{align}
\end{assumption}
This assumption ensures that at least one control is known to be safe at the outset, allowing safe exploration to begin. In fact, $S_0$ may be as small as a singleton; only one known safe control is required. Without such a point, safe learning cannot be initiated. We express the assumption in set form to allow for larger safe sets, which can accelerate exploration while preserving guarantees. This is a standard assumption in the literature of safe UCB methods \citep{sui2015safe, turchetta2016safe, berkenkamp2017safe}, and is realistic in many applications including robotics \citep{sukhija2023gosafeopt} and safety-critical process control \citep{qin2003survey}, where systems naturally start in safe conditions, e.g., $S_0 = \{(0,\theta): \theta \in D\}$.

\paragraph{Resetting control.} 
Let \(h: \mathbb{R}^n \to \mathbb{R}\) define a region in the state space from which resets are feasible. Specifically, if \(h(X(t)) \geq 0\), then it is feasible to reset the system to the initial distribution \(p_0\). Formally, this means there exists a mechanism (or control) that reinitializes the system from the current state \(X(t)\) to a new state independently sampled from \(p_0\). We define the reset level at time \( t \in [0, T_{\max}] \) for a given control \( u_\theta \) as
\begin{align}
    r(\theta, t) \triangleq \mathbb{P}\left(h(X_{u_{\theta}}(t)) \geq 0\right).
\end{align}
The function \(h\) delimits a region of the state space from which resets are feasible. Larger reset regions correspond to greater operational flexibility. In simulated environments, where resets are effectively cost-free, the reset region can cover the entire state space \(\mathbb{R}^n\). In contrast, real-world systems typically require substantial resources or manual intervention, making resets feasible only in restricted regions (e.g., near the original distribution \(p_0\)).

\begin{assumption}[Initial resetting controls]\label{as:reset_set} For \(\xi \in [0, 1]\), a non-empty set \(R_0 \subset D \times [0, T_{\max}]\) is provided such that
\begin{align}
    r(\theta, t) \geq 1 - \xi \quad \text{ for all } \quad (\theta, t) \in R_0.
\end{align}
\end{assumption}
This assumption guarantees the existence of at least one control capable of returning the system to the reset region with high probability. Assumption~\ref{as:reset_set} enables the generation of independent sample paths starting from the same initial conditions, which is crucial for evaluating variance and managing uncertainty during safe exploration.  As with Assumption~\ref{as:initial_safe_set}, one known reset point is sufficient, though larger reset sets accelerate learning. A simple case is $R_0=\{(0,\theta): \theta \in D\}$, corresponding to systems that can always be reset from the initial condition. In practice, reset feasibility depends on system constraints: for instance, batch chemical reactors can often be reset only in early phases, before irreversible reactions occur \citep{seborg2016process}. In contrast, many autonomous systems like drones or driving robots can usually be reset, at least during training.

\begin{assumption}[Smoothness of system dynamics]\label{as:smoothness}
The map \(p\) lies in the Sobolev space \(H^\nu(\R^{n+m+1})\) with \(\nu > \frac{1}{2}\max(n,\,m+1)\), where \(n\) and \(m\) denote the state and control parameter dimensions, respectively. Moreover, \(\sup_{x\in\R^n}\bigl\|p(\cdot,\cdot,x)\bigr\|_{H^\nu(\R^{m+1})} < +\infty\), \(\sup_{(\theta,t)\in D\times[0,T_{\max}]}\|p(\theta, t,\cdot)\|_{H^\nu(\R^{n})}<\infty\).
\end{assumption}

% This smoothness assumption ensures that the system dynamics are sufficiently regular for our purposes. For example, it is satisfied under regularity conditions on the drift and diffusion terms, as shown in \citet{bonalli2025non}. It is standard in statistical learning theory and underpins our convergence guarantees \citep{pillaud2018statistical}.

This smoothness assumption ensures that the system dynamics are sufficiently regular for our purposes. It is standard in statistical learning theory and underpins our convergence guarantees \citep{pillaud2018statistical}. Sobolev regularity of the drift and diffusion terms in the underlying stochastic differential equation is expected to imply Sobolev regularity of the resulting state densities under standard conditions. This follows from classical results in parabolic PDE theory, where solutions typically gain regularity relative to the coefficients, roughly two derivatives in space and one in time. A formal analysis of this connection is beyond the scope of the present work and is left for future investigation (see \citet{bonalli2025non} for related results).

\section{Proposed method}\label{sec:method}

We propose a method for safely exploring and learning system dynamics over a parameterized control space \(\bmH = \{u_\theta \mid \theta \in D \subset \mathbb{R}^m\}\). Following the safe UCB framework \citep{sui2015safe, sui2018stagewise, bottero2022information}, our goal is to select controls that reduce model uncertainty while ensuring, with high probability, that trajectories (i) remain within the safe region and (ii) end in the reset region. We jointly learn three models: a dynamics model (state densities), a safety model (safety probabilities), and a reset model (reset probabilities), each equipped with confidence bounds from a shared kernel. This enables active exploration under high-probability constraints. After training, the learned models support inference on unseen inputs and yield a certified control set that can be deployed with safety guarantees.

The known safe-resettable set is expanded iteratively by alternating between system estimation (Section \ref{sec:system-estimation}) and safe sampling (Section \ref{sec:safe-sampling}), leveraging prior knowledge of the initial safe and reset sets as well as the regularity of the dynamics $(\theta, t, x) \mapsto p_\theta(t, x)$.

A step-by-step breakdown of the overall method is provided in Appendix~\ref{app:impl_details}, with algorithm tables for each module and their computational complexities.

\subsection{Initialization}\label{sec:initialization}

Let $N \in \mathbb{N}$ denote the current iteration. We initialize at \(N=0\) using the known safe set \(S_0 \subset D \times [0, T_{\max}]\) and reset set \(R_0 \subset D \times [0, T_{\max}]\). We define the initial safe-resettable set
\[
\Gamma_0 \triangleq \Bigl\{
(\theta, t, T) \in D \times [0, T_{\max}]^2 \,\Big|\, t \leq T, \;(\theta, t') \in S_0 \text{ for all } t' \in [0, T], \; (\theta, T) \in R_0
\Bigr\}.
\]
We select \((\theta_0, t_0, T_0) \in \Gamma_0\), ensuring that the control is known to be safe over \([0, T_0]\) and ends in the reset region.

\subsection{System estimation}\label{sec:system-estimation}

In this step, we update the dynamics, safety, and reset models based on the observed trajectories, and compute predictive uncertainty for each.

\paragraph{Estimation at \((\theta_{N}, t_{N})\).}
At iteration \(N\), the control \(u_{\theta_N}\) is evaluated using \(Q\) stochastic trajectories. 
Here, \(N\) indexes the iteration, and \(i\) indexes the \(i\)-th trajectory simulated under that control, each corresponding to an independent sample \(w_i^N\) of the Brownian motion driving the system. We collect the samples \((X_{u_{\theta_N}}(w_i^N, t_N))_{i=1}^Q\) and estimate the state density at \((\theta_N, t_N)\) using a kernel density estimator:
\begin{align}
    \hat p_{\theta_N, t_N}(x) \triangleq \frac{1}{Q} \sum_{i=1}^Q \rho_R(x - X_{u_{\theta_N}}(w_i^{N}, t_N)),
\end{align}
where \(\rho_R(x) \triangleq R^{n/2}\|x\|^{-n/2}B_{n/2}(2\pi R\|x\|)\), \(R >0\), and \(B_{n/2}\) is the Bessel \(J\) function of order \(n/2\) (See \citet{bonalli2025non}).

% \paragraph{Estimation at \((\theta_{N}, t_{N})\).} We collect \(Q\) samples \((X_{u_{\theta_{N}}}(w_i^{N}, t_{N}))_{i \in \llbracket 1, Q \rrbracket}\), and estimate the state density at \((\theta_N, t_N)\) using a kernel density estimator:
% \begin{align}
%     \hat p_{\theta_N, t_N}(x) \triangleq \frac{1}{Q} \sum_{i=1}^Q \rho_R(x - X_{u_{\theta_N}}(w_i^{N}, t_N)),
% \end{align}
% where \(\rho_R(x) \triangleq R^{n/2}\|x\|^{-n/2}B_{n/2}(2\pi R\|x\|)\), \(R >0\), and \(B_{n/2}\) is the Bessel \(J\) function of order \(n/2\) (See \citet{bonalli2025non}).

We then compute estimates of the safety and reset probabilities
\begin{align}
    &\hat{s}_{\theta_{N}, t_{N}} \triangleq \int_{\{x \in \mathbb{R}^n : g(x) \geq 0\}} \hat{p}_{\theta_{N}}(t_{N}, x) \, dx, \quad
    \hat{r}_{\theta_{N}, t_{N}} \triangleq \int_{\{x \in \mathbb{R}^n : h(x) \geq 0\}} \hat{p}_{\theta_{N}}(t_{N}, x) \, dx.
\end{align}

Let the collection of values at all observed points \(\left((\theta_i, t_i)\right)_{i=1}^N\) be
\[
\begin{aligned}
&\hat P(\cdot) \triangleq (\hat p_{\theta_i, t_i}( \cdot))_{i=1}^N, \quad
\hat S \triangleq (\hat s_{\theta_i, t_i})_{i=1}^N, \quad
\hat R \triangleq (\hat r_{\theta_i, t_i})_{i=1}^N.
\end{aligned}
\]

\paragraph{Model update.} 

We fit kernel ridge regressors for the density, safety, and reset functions using a Matérn kernel \(k\) (with Sobolev smoothness \(\nu\)) and regularization \(\lambda > 0\)
\begin{align}\label{eq:update_model}
    \hat p_{\theta}(t, x) &\triangleq \hat P(x)(K + N\lambda I)^{-1}k(\theta, t), &&\text{(System dynamics)}\\
    \hat s_N(\theta, t) &\triangleq \hat S(K + N\lambda I)^{-1}k(\theta, t), &&\text{(Safety function)}\\
    \hat r_N(\theta, t) &\triangleq \hat R(K + N\lambda I)^{-1}k(\theta, t), &&\text{(Reset function)}
\end{align}
where \(k(\theta, t) \triangleq (k((\theta, t), (\theta_i, t_i)))_{i=1}^N \), \(K \triangleq (k((\theta_i, t_i), (\theta_j, t_j)))_{i,j=1}^N\), and \(\lambda\) is a regularization term. Although training data consist of discrete-time observations, learned regression models are defined over continuous time, a distinction seldom addressed in the literature.

The predictive uncertainty at \((\theta, t)\) is given by
\begin{align}
    \sigma^2_N(\theta, t) \triangleq k((\theta, t), (\theta, t)) - k(\theta, t)^*(K + N\lambda I)^{-1}k(\theta, t).
\end{align}

\subsection{Safe sampling}\label{sec:safe-sampling}

We now select a new point to sample by maximizing uncertainty over the safe-resettable region.

\paragraph{Feasibility criteria.} A point \((\theta, t, T)\) is feasible if (i) \(t \leq T\), (ii) the system remains safe up to time \(T\), i.e., \(s^{\infty}(\theta, T) \geq 1 - \varepsilon\), and (iii) the trajectory ends in the reset region with high probability, i.e., \(r(\theta, T) \geq 1 - \xi\).

We implement these constraints via lower confidence bounds (LCBs)
\begin{align}
    &\LCBS(\theta, T) \;\triangleq\;  \inf_{t \in [0, T]} \big(\hat s_N(\theta, t) - \beta_N^{s} \sigma_N(\theta, t)\big),\\ 
    &\LCBR(\theta, T) \;\triangleq\; \hat r_N(\theta, T) - \beta_N^{r} \sigma_N(\theta, T),
\end{align}
where $\beta_N^{s}, \beta_N^{r} > 0$ are confidence parameters set from known upper bounds on the RKHS norms of the safety and reset functions (see Remark~\ref{rem:beta_discussion}).

We then define the safe-resettable feasible set
\[
\Gamma_{N} = \Gamma_0 \;\cup\; \Bigl\{
(\theta, t, T) \in D \times [0, T_{\max}]^2 \mid t \leq T,\; \LCBS(\theta, T) \ge 1 - \varepsilon,\; \LCBR(\theta, T) \ge 1 - \xi \Bigr\}.
\]

\paragraph{Sampling rule.} 

We choose the next $(\theta_{N+1}, t_{N+1}, T_{N+1})$ by maximizing uncertainty over the feasible set
\begin{align}
(\theta_{N+1}, t_{N+1}, T_{N+1})
\;=\;
\argmax_{(\theta, t, T) \in \Gamma_N}
\;
\sigma_N(\theta, t).
\label{eq:sampling}
\end{align}
Optimization is performed using discretization or gradient-based methods. Several computational techniques for efficient optimization are presented in the Appendix (see Algorithm~\ref{alg:efficient-sampling}).

\paragraph{Stopping rule.} We stop the exploration once the maximum uncertainty over the feasible set falls below a threshold \(\eta > 0\)
\begin{align}
\max_{(\theta, t, T) \in \Gamma_N}\:  \sigma_N(\theta, t) < \eta,
\end{align}
ensuring that exploration concludes once the models reaches the desired level of accuracy.

\begin{remark}\label{rem:beta_discussion}
The derivation in Appendix~\ref{subsec:proof_exploration_sobolev} shows that the confidence parameters depend on upper bounds of the RKHS norms of the safety and reset functions. These bounds may be available from prior knowledge of the system's regularity. We view this prior knowledge as a reasonable minimal assumption for guaranteeing safe learning under unknown dynamics. When unavailable, the bounds can be conservatively overestimated, ensuring safety but potentially leading to slower exploration. Developing adaptive strategies to estimate these quantities without prior knowledge is a promising direction for future work, for instance through online adaptation via the doubling trick \citep{shalev2012online}.
\end{remark}

\section{Safety and estimation guarantees for Sobolev dynamics}\label{sec:theory}

Safety and exploration guarantees for safe kernelized UCB methods have been developed in prior work \citep{sui2015safe, sui2018stagewise, bottero2022information}, grounded in kernelized bandit theory \citep{srinivas2009gaussian, valko2013finite, janz2020bandit}, which in turn builds on linear bandit results \citep{dani2008stochastic, auer2002using}. Building on this foundation, we establish novel theoretical guarantees for safe exploration and dynamics estimation under Sobolev regularity. Complete proofs are deferred to Appendix~\ref{app:proofs}.

\begin{theorem}[Safely learning controlled Sobolev dynamics]\label{th}
Let \(\eta > 0\), and assume Assumptions~\ref{as:initial_safe_set}–\ref{as:smoothness} hold. Set \(R = Q^{1/(n + 2\nu)}\) and \(\lambda=N^{-1}\). Then there exist constants \(c_1, \ldots, c_5 > 0\), independent of \(N, Q, \delta, \eta\), such that if
\[
c_1 \log (4N/\delta)^{1/2} Q^{\frac{n - 2\nu}{2n + 4\nu}} \leq N^{-1/2},\footnotemark
\]
then the stopping condition \(\max_{(\theta, t, T) \in \Gamma_N} \sigma_N(\theta, t) < \eta\) is satisfied after at most \(N \leq c_2 \eta^{-2/(1 - \alpha)}\) iterations for any \(\alpha > (m + 1)/(m + 1 + 2\nu)\). Moreover:
\begin{itemize}
    \item \textbf{(Safety):} All selected triples \((\theta_i, t_i, T_i)\) satisfy \(s^\infty(\theta_i, T_i) \geq 1 - \varepsilon\) and \(r(\theta_i, T_i) \geq 1 - \xi\), providing safety guarantees during training. Moreover, the final set \(\Gamma_N\) includes only controls meeting these thresholds and can thus serve as a certified safe set for deployment.
    \item \textbf{(Estimation guarantees):} For all \((\theta, t, T) \in \Gamma_N\),
    \[
    \|\hat p_{\theta}(t, \cdot) - p_{\theta}(t, \cdot)\|_\infty \leq c_3 \eta, \quad
    |\hat s_N(\theta, t) - s(\theta, t)| \leq c_4 \eta, \quad
    |\hat r_N(\theta, t) - r(\theta, t)| \leq c_5 \eta.
    \]
\end{itemize}
\end{theorem}
% \footnotetext{Throughout, exponents involving $n$ should be interpreted as $n+\varepsilon$
% for arbitrarily small $\varepsilon>0$.}
\footnotetext{All exponents in $n$ are to be read as $n+\varepsilon$, with $\varepsilon>0$ arbitrarily small.}

% This result ensures that our method both respects safety constraints and achieves convergence rates adaptive to the system's Sobolev regularity. 

% For instance, when the system is sufficiently regular with \(\nu \geq n+ m + 1\), the algorithm terminates in at most \(N = \mathcal{O}(\eta^{-3})\) iterations, assuming sufficient sampling: \(Q \gtrsim N^3\) (up to log factors). Although we do not analyze the size of \(\Gamma_N\), its structure can be inferred from the available uncertainty estimates; formal characterization of the size of \(\Gamma_N\) is left to future work.

This result ensures that our method both respects safety constraints and achieves convergence rates adaptive to the system's Sobolev regularity. The condition on \(Q\) provides a lower bound on the number of trajectory samples \(Q\) required per control to guarantee the prescribed confidence level. Up to logarithmic factors, it requires
\[
Q \gtrsim N^{\frac{2\nu + n}{2\nu - n}},
\]
where \(n\) is the state dimension and \(\nu\) the Sobolev regularity. For instance, when the system is sufficiently regular with \(\nu \geq n + m + 1\), the algorithm terminates in at most \(N = \mathcal{O}(\eta^{-3})\) iterations, assuming \(Q \gtrsim N^3\). Although we do not analyze the size of \(\Gamma_N\), its structure can be inferred from the available uncertainty estimates; a formal characterization of \(\Gamma_N\) is left to future work.

\section{Numerical experiments}\label{sec:experiments}

We evaluate our method on a representative smooth nonlinear stochastic system. Specifically, the experiments aim to assess the following:
(i) satisfaction of safety and reset constraints,
(ii) efficiency of exploration under different safety thresholds,
(iii) prediction accuracy for dynamics, safety, and reset maps,
(iv) computational cost and scalability.

% \subsection{Experimental setup}

\paragraph{System and environment.} We consider a 2D second-order dynamical system whose acceleration is directly controlled by the input. The system evolves according to the controlled SDE
\[
\begin{cases}
  dX(t) = V(t) dt, \\
  dV(t) = u(t, X(t), V(t))dt + a(X(t)) dW_t.
\end{cases}
\]
where $X(t) \in \mathbb{R}^2$, $V(t) \in \mathbb{R}^2$  denote position and velocity, $u$ is the control function, and $W_t$ is a Brownian motion. The noise amplitude $a(X)$ is spatially dependent:
\[
    a(X) = A \exp\left(-\frac{\|X - X_c\|^2}{2\sigma^2}\right),
\]
with \(X_c = (5, 5)\), \(\sigma = 2\), and \(A = 5\). The initial state follows \(\bmN(0, \sigma_0 I_{\R^2})\) with \(\sigma_0=0.1\), and the maximal time horizon is \(T_{\max} = 20\). The system evolves within the bounded safe region \( (-10,10)^2 \), and each trajectory must end in the reset region defined as a disk of radius 2.5 centered at the origin. Such models arise in robotics and autonomous navigation, involving trajectory control with localized disturbances (e.g., slippery or uneven terrain). Figure~\ref{fig:example_complex_smooth} illustrates the effect of such state-dependent noise through 100 trajectories generated under different controls.

\paragraph{Control space.} Controls are parameterized as sequences of $m$ fixed accelerations of magnitude $v$, applied in directions $(\theta_1, \ldots, \theta_m)$.
% $$
% \left( v \cdot (\cos(\theta_i), \sin(\theta_i)) \right)_{i=1}^m \subset (\mathbb{R}^2)^m.
% $$
During the exploration phase $(0 \leq t \leq T_{\mathrm{explo}})$, each direction $\theta_i$ is applied over intervals of equal length, yielding
$$
u(t, X, V) = v (\cos(\theta_i), \sin(\theta_i)) - V,
$$
with damping term $-V$ ensuring velocity convergence. For $(t > T_{\mathrm{explo}})$, a feedback controller steers the system toward $\mu_0$:
$$
u(t, X, V) = \kappa\times \left(v \frac{\mu_0 - X}{\|\mu_0 - X\|} - V\right),
$$
with damping factor $\kappa > 0$. Controls are clipped to keep the system within the safe region. We set $v = 2.0$, $\kappa = 0.5$, $m = 2$, $T_{\mathrm{explo}} = 6$, and $n_{\text{steps}} = 500$.

\paragraph{Method's hyperparameters.} Our method depends on several hyperparameters that govern safety thresholds $(\varepsilon, \xi)$, confidence levels $(\beta_s, \beta_r)$, kernel smoothness $(\lambda, \gamma)$, and bandwidth $R$, with distinct values for estimating dynamics and constraints. We test $(\varepsilon, \xi) \in \{0.1,\, 0.3,\, 0.5, \infty\}$, with 1000 iterations and initial safe control $(-\pi/3, \pi/3)$. Candidate selection for uncertainty maximization is restricted to a local subset for efficiency (Appendix \ref{app:impl_details}, Algorithm~\ref{alg:efficient-sampling}). A detailed discussion of each hyperparameter's role, tuning procedure, and practical heuristics is provided in Appendix \ref{app:hyper}.

% \begin{figure}[t]
%     \centering
%     \begin{minipage}{0.24\textwidth}
%         \centering
%         \includegraphics[width=\textwidth]{figures/combined_paths_up_to_iteration_1000_eps0.1_xi0.1.png.pdf}
%     \end{minipage}
%         \begin{minipage}{0.24\textwidth}
%         \centering
%         \includegraphics[width=\textwidth]{figures/combined_paths_up_to_iteration_1000_eps0.3_xi0.3.png.pdf}
%     \end{minipage}
%     \begin{minipage}{0.24\textwidth}
%         \centering
%         \includegraphics[width=\textwidth]{figures/combined_paths_up_to_iteration_1000_eps0.5_xi0.5.png.pdf}
%     \end{minipage}

%     \begin{minipage}{0.24\textwidth}
%         \centering
%         \includegraphics[width=\textwidth]{figures/combined_paths_up_to_iteration_1000_eps1000000.0_xi1000000.0.png.pdf}
%     \end{minipage}
%     \caption{One trajectory per selected control after 1000 iterations for various thresholds \(\varepsilon = \xi\), which increases from left to right with values in \(\{0.1,\, 0.3,\, 0.5, +\infty\}\).}
%     \label{fig:paths}
% \end{figure}

\begin{figure}[t]
    \centering
    \begin{subfigure}[t]{0.24\textwidth}
        \centering
        \includegraphics[width=\linewidth,height=0.9\linewidth,keepaspectratio]{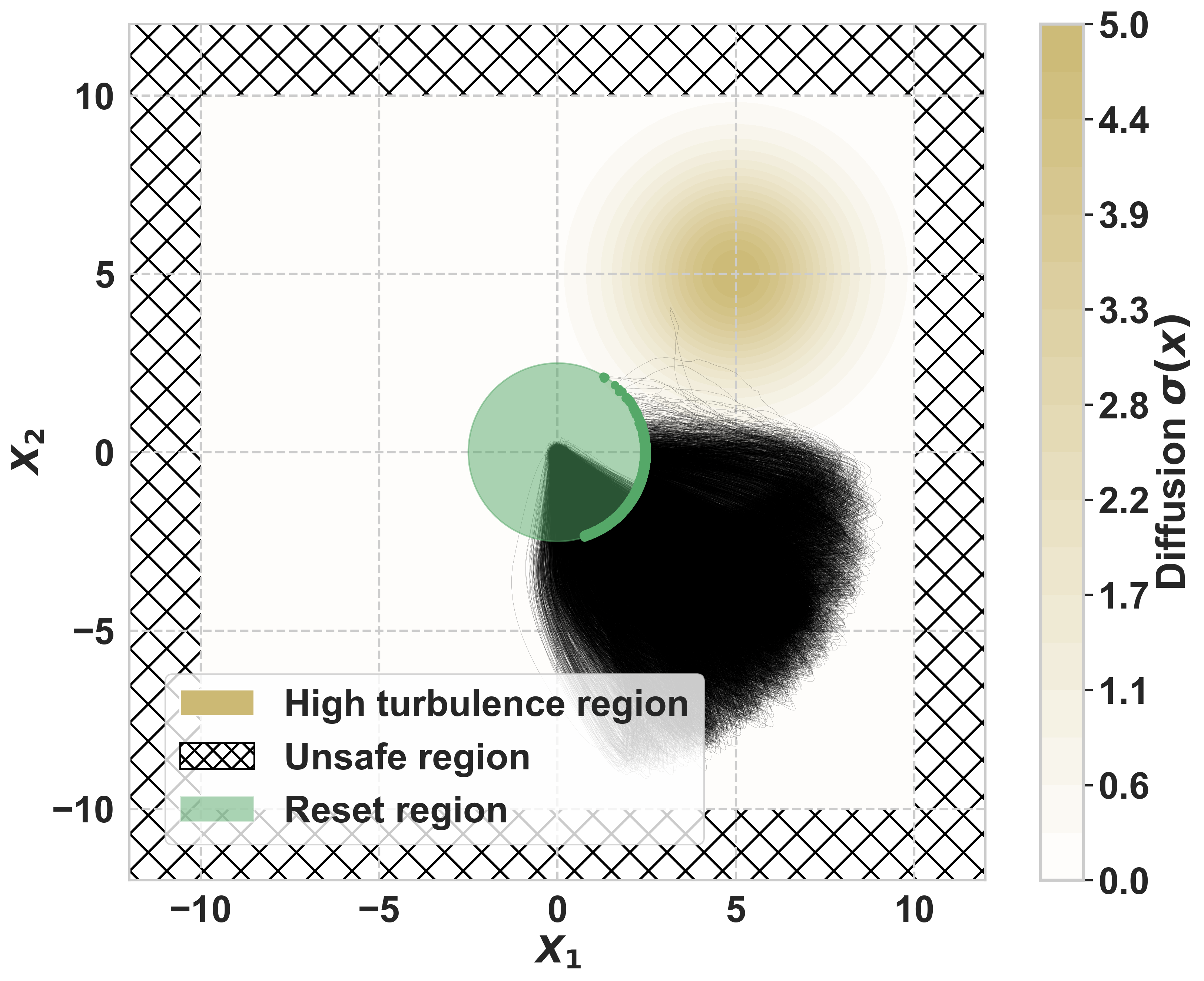}
        \caption*{$\varepsilon=\xi=0.1$}   % caption* = no (a), just your text
    \end{subfigure}\hfill
    \begin{subfigure}[t]{0.24\textwidth}
        \centering
        \includegraphics[width=\linewidth,height=0.9\linewidth,keepaspectratio]{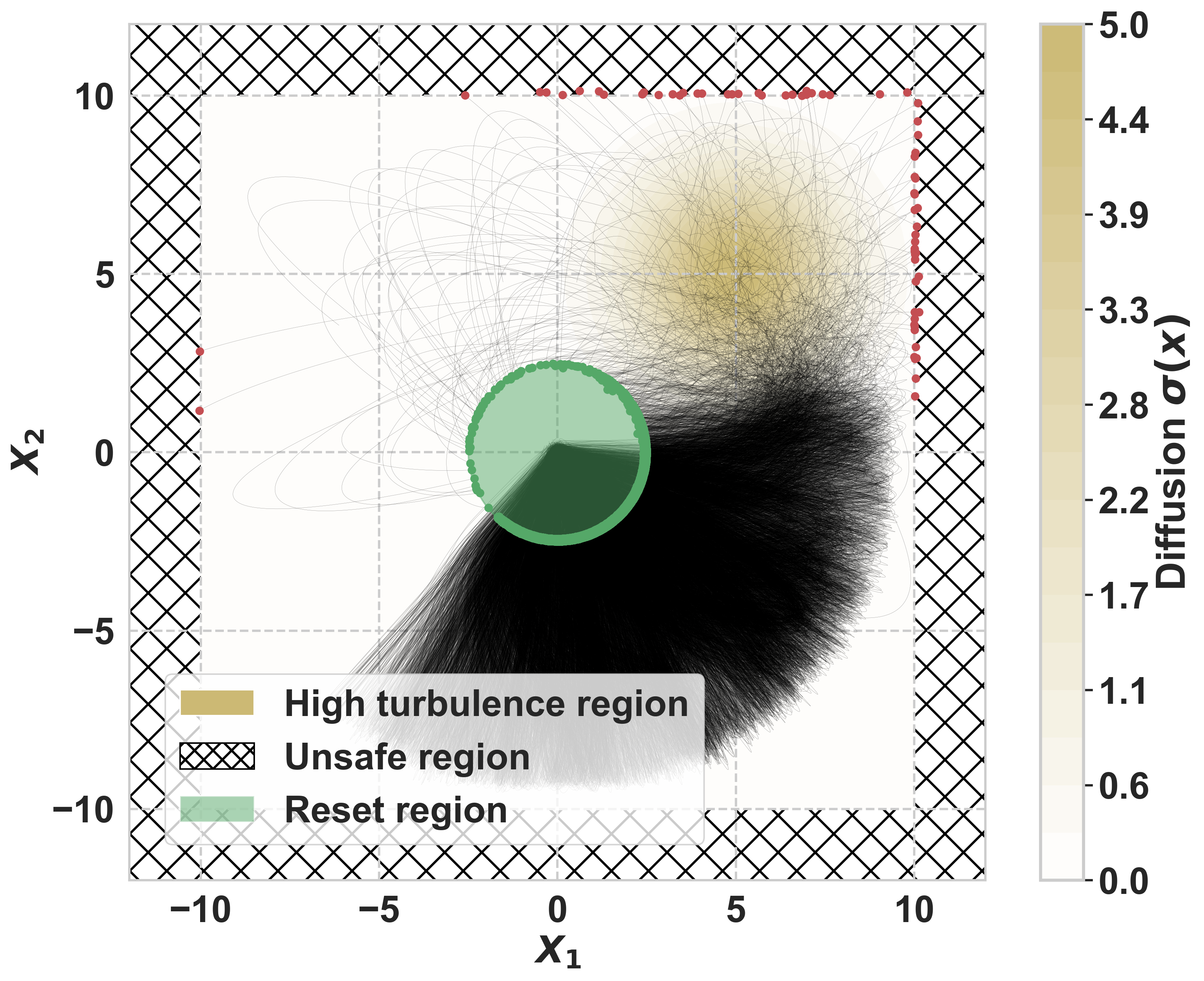}
        \caption*{$\varepsilon=\xi=0.3$}
    \end{subfigure}\hfill
    \begin{subfigure}[t]{0.24\textwidth}
        \centering
        \includegraphics[width=\linewidth,height=0.9\linewidth,keepaspectratio]{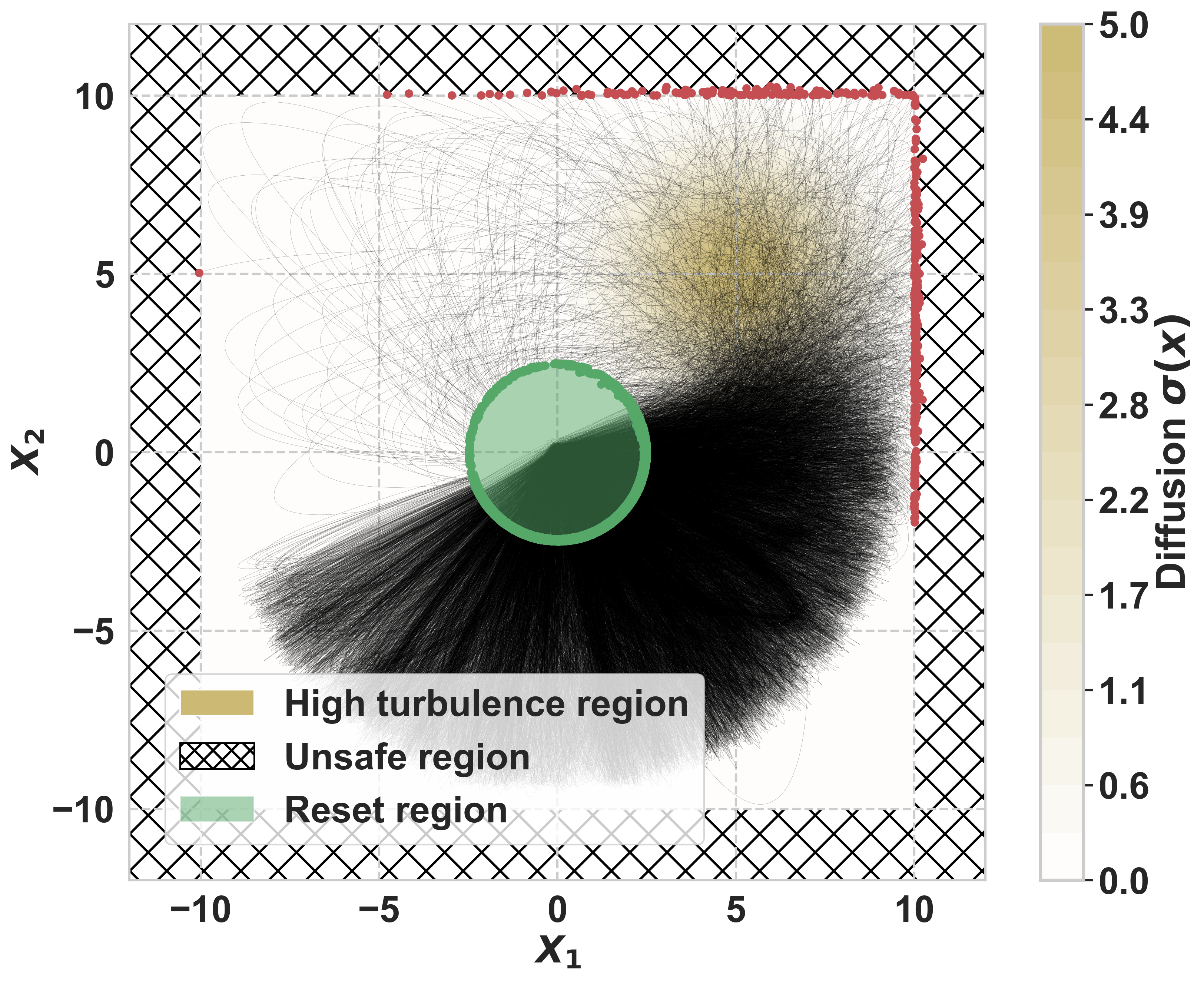}
        \caption*{$\varepsilon=\xi=0.5$}
    \end{subfigure}\hfill
    \begin{subfigure}[t]{0.24\textwidth}
        \centering
        \includegraphics[width=\linewidth,height=0.9\linewidth,keepaspectratio]{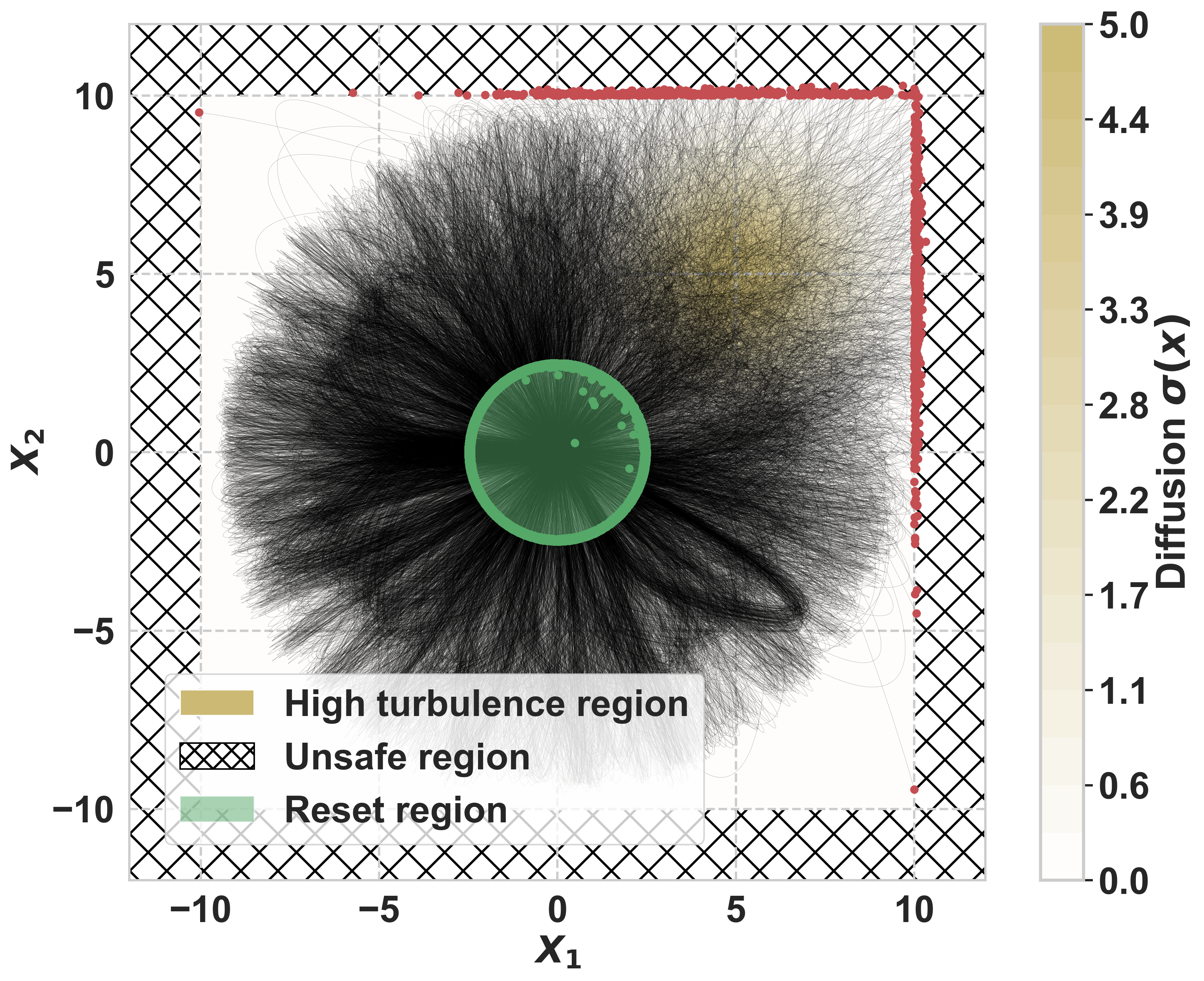}
        \caption*{$\varepsilon=\xi=\infty$}
    \end{subfigure}

    \caption{One trajectory per selected control after 1000 iterations for various thresholds.}
    \label{fig:paths}
\end{figure}

\paragraph{Safe exploration.} Figure~\ref{fig:paths} displays one trajectory per selected control after 1000 iterations, under various threshold settings (\(\varepsilon=\xi\in\{0.1,0.3,0.5,+\infty\}\)). In Figure~\ref{fig:exp_safety_reset_constraint}, the top row displays the learned safety level maps while the bottom row shows the corresponding reset probability maps for various threshold pairs \(\varepsilon = \xi\), with values increasing from left to right in \(\{0.1,\, 0.3,\, 0.5, +\infty\}\). Our approach only accepts candidate controls whose predicted safety and reset probabilities (estimated via 200-path Monte Carlo simulations) exceed the predefined thresholds. By filtering only controls meeting the safety criteria, exploration is confined to a safe region with a chosen probability of staying safe. Overall, these visualizations highlight how increasing the threshold values influences control selection, providing insights into the trade-off between exploration and safety.

\begin{figure}[h!]
    \centering
    % First Row: Safety
    \begin{subfigure}[t]{0.24\textwidth}
        \centering
        \includegraphics[width=\linewidth]{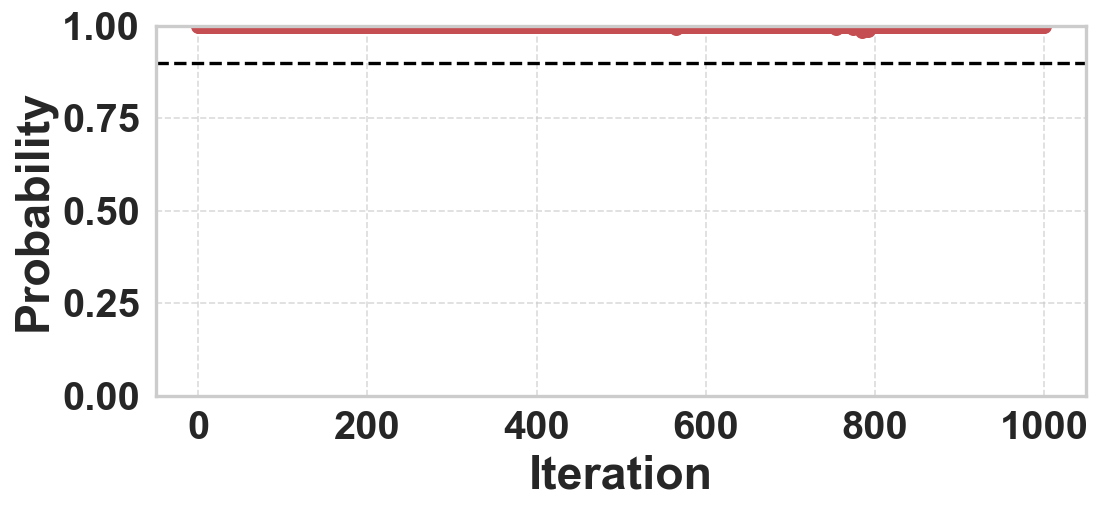}
    \end{subfigure}\hfill
    \begin{subfigure}[t]{0.24\textwidth}
        \centering
        \includegraphics[width=\linewidth]{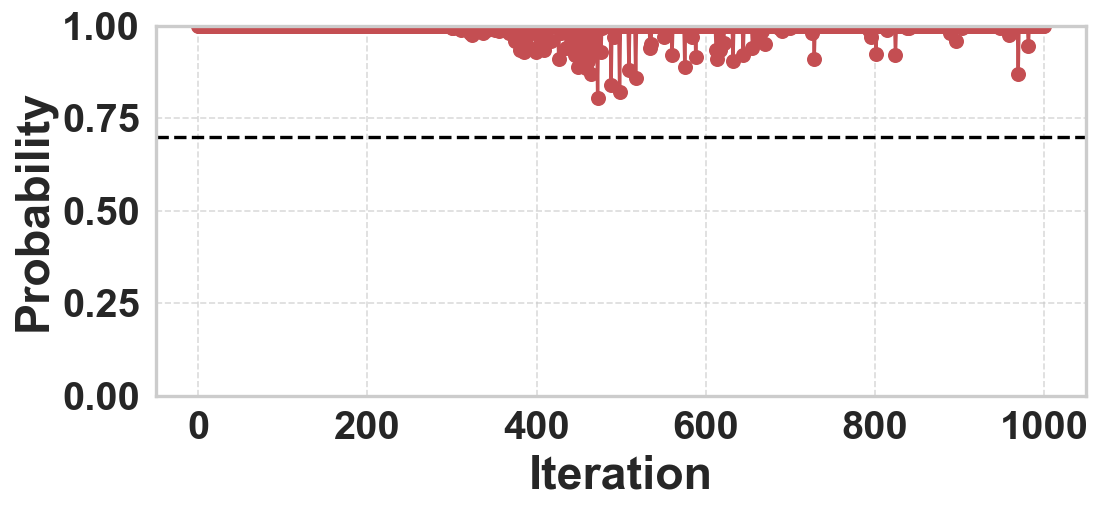}
    \end{subfigure}\hfill
    \begin{subfigure}[t]{0.24\textwidth}
        \centering
        \includegraphics[width=\linewidth]{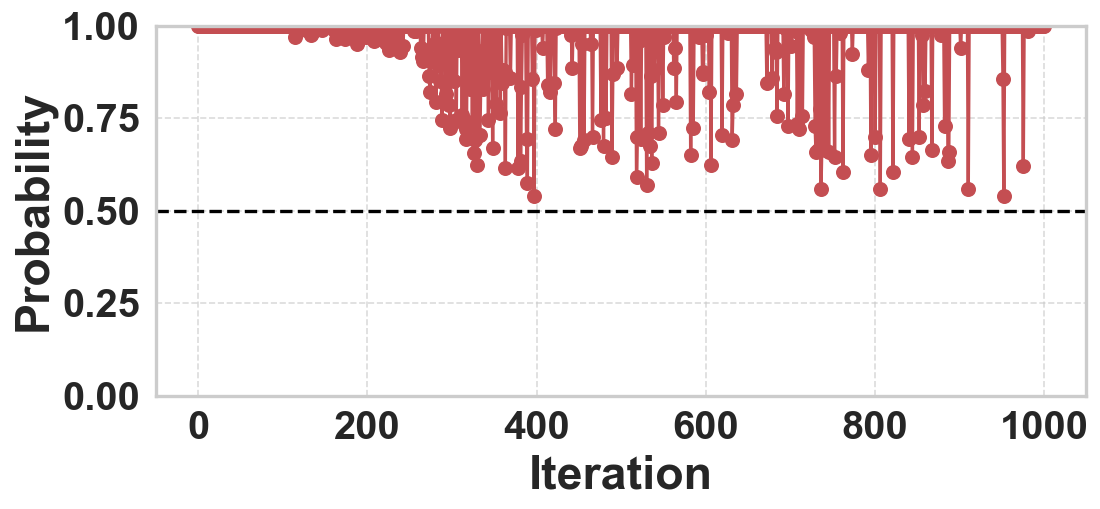}
    \end{subfigure}\hfill
    \begin{subfigure}[t]{0.24\textwidth}
        \centering
        \includegraphics[width=\linewidth]{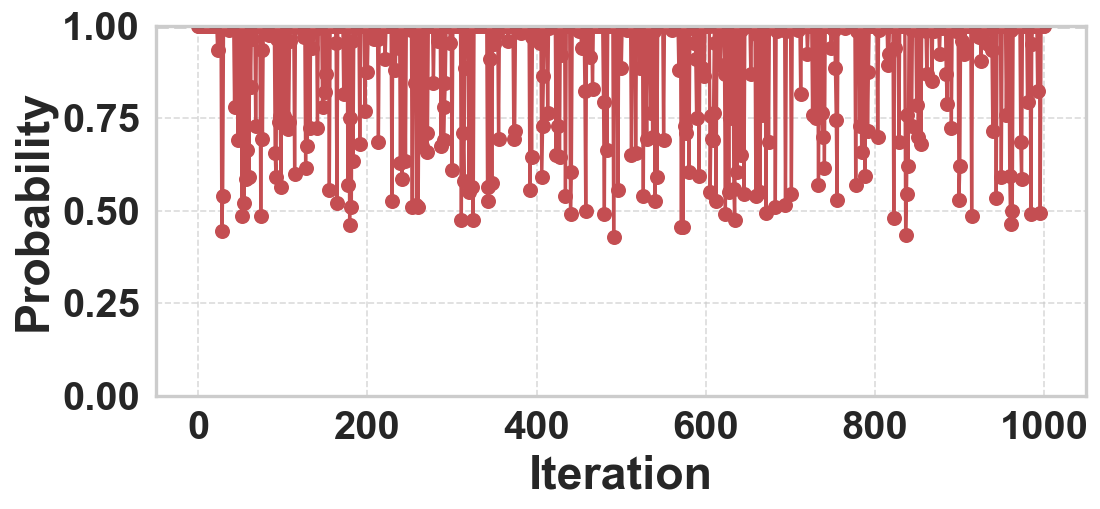}
    \end{subfigure}

    \vspace{0.6cm} % vertical space between rows

    % Second Row: Reset (with threshold labels)
    \begin{subfigure}[t]{0.24\textwidth}
        \centering
        \includegraphics[width=\linewidth]{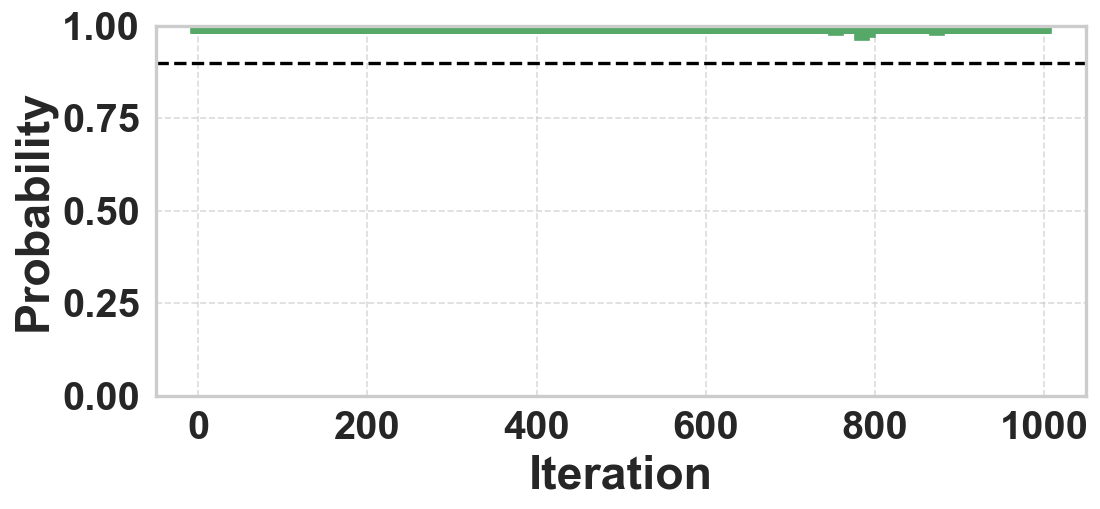}
        \caption*{$\varepsilon=\xi=0.1$}
    \end{subfigure}\hfill
    \begin{subfigure}[t]{0.24\textwidth}
        \centering
        \includegraphics[width=\linewidth]{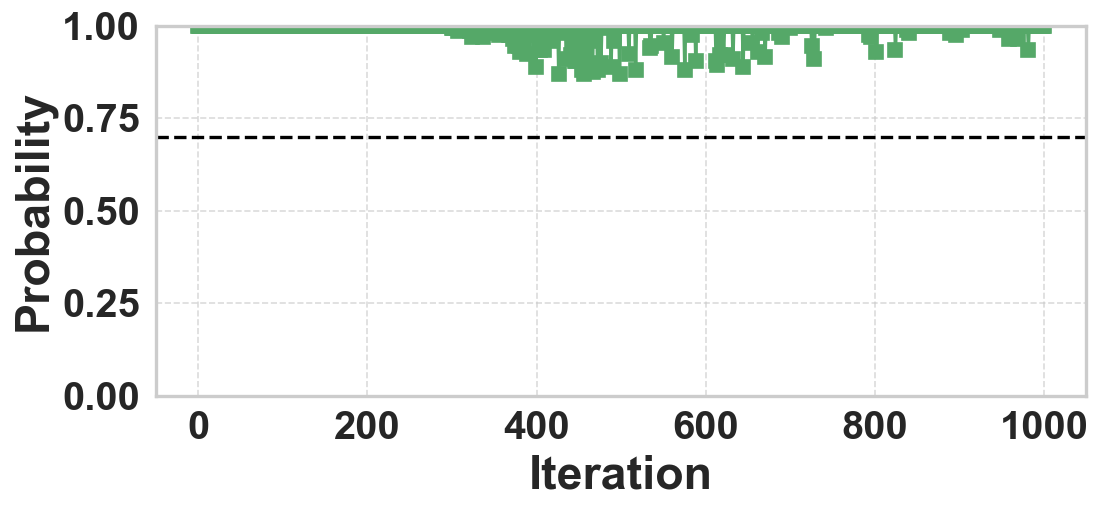}
        \caption*{$\varepsilon=\xi=0.3$}
    \end{subfigure}\hfill
    \begin{subfigure}[t]{0.24\textwidth}
        \centering
        \includegraphics[width=\linewidth]{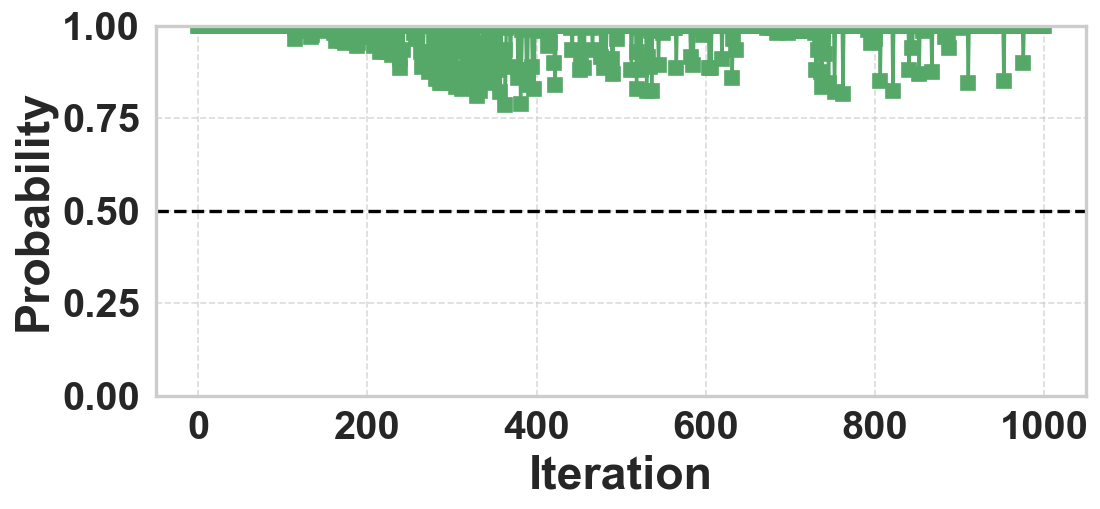}
        \caption*{$\varepsilon=\xi=0.5$}
    \end{subfigure}\hfill
    \begin{subfigure}[t]{0.24\textwidth}
        \centering
        \includegraphics[width=\linewidth]{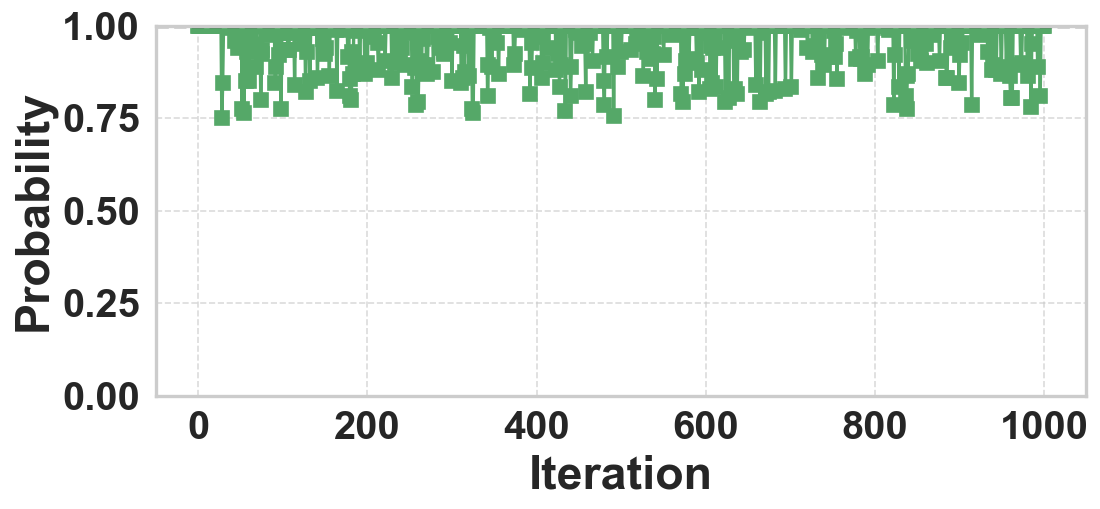}
        \caption*{$\varepsilon=\xi=\infty$}
    \end{subfigure}

    \caption{Safety (top row) and reset (bottom row) probabilities over iterations for various thresholds.}
    \label{fig:exp_safety_reset_constraint}
\end{figure}

\paragraph{Exploration rate and coverage.} In Figure \ref{fig:control_cov}, we plot the selected controls after 1000 iterations for various threshold pairs \(\varepsilon = \xi\), with values increasing from left to right in \(\{0.1,\, 0.3,\, 0.5, +\infty\}\). Figures~\ref{fig:paths} and \ref{fig:control_cov} clearly illustrate the exploration-safety trade-off. Relaxing thresholds leads to broader control coverage and faster information gain, but with decreased safety guarantees. Conversely, strict thresholds restrict exploration, particularly around regions with safety or reset probabilities close to the specified thresholds. This aligns with the intuition supported by our theoretical analysis: sample complexity tends to increase in regions where smaller uncertainty is required to proceed safely. As a result, these regions act as bottlenecks, slowing down the process and potentially stopping exploration within the connected component that satisfies the constraints and includes the initial safe control. Additional results on information gain across iterations are provided in Appendix~\ref{app:info-gain}.

% \begin{figure}[h!]
%     \centering
%     % First Row
%     \begin{minipage}{0.24\textwidth}
%         \centering
%         \includegraphics[width=\textwidth]{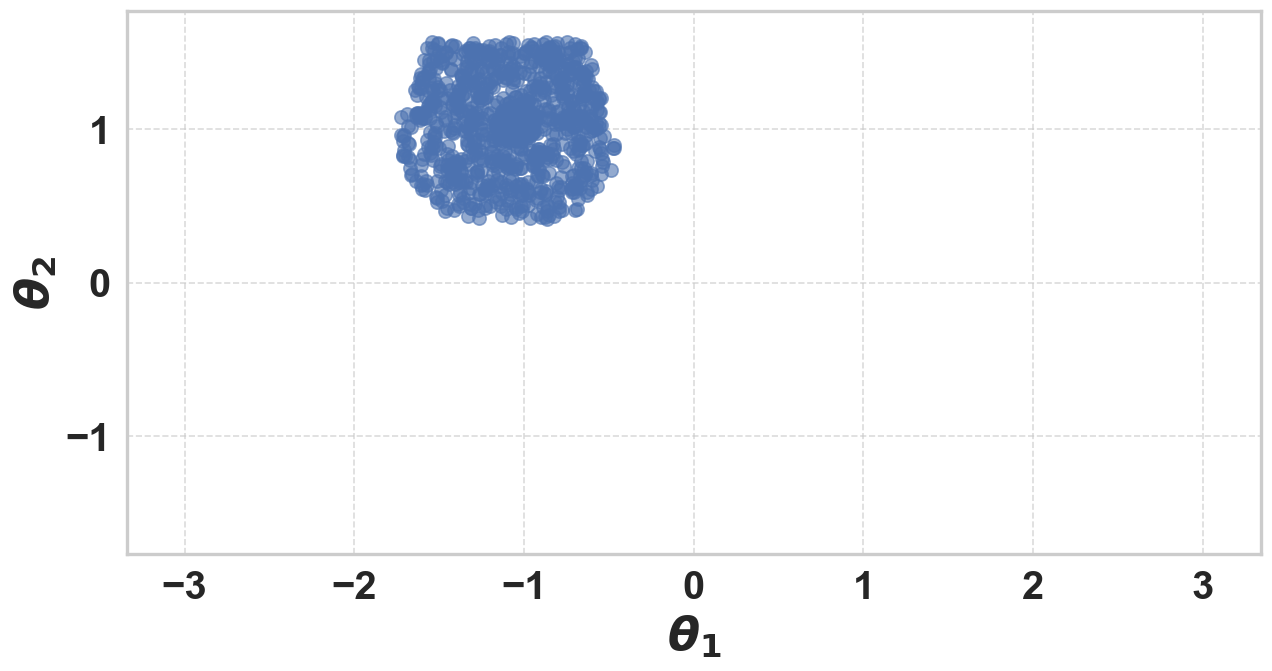}
%     \end{minipage}
%         \begin{minipage}{0.24\textwidth}
%         \centering
%         \includegraphics[width=\textwidth]{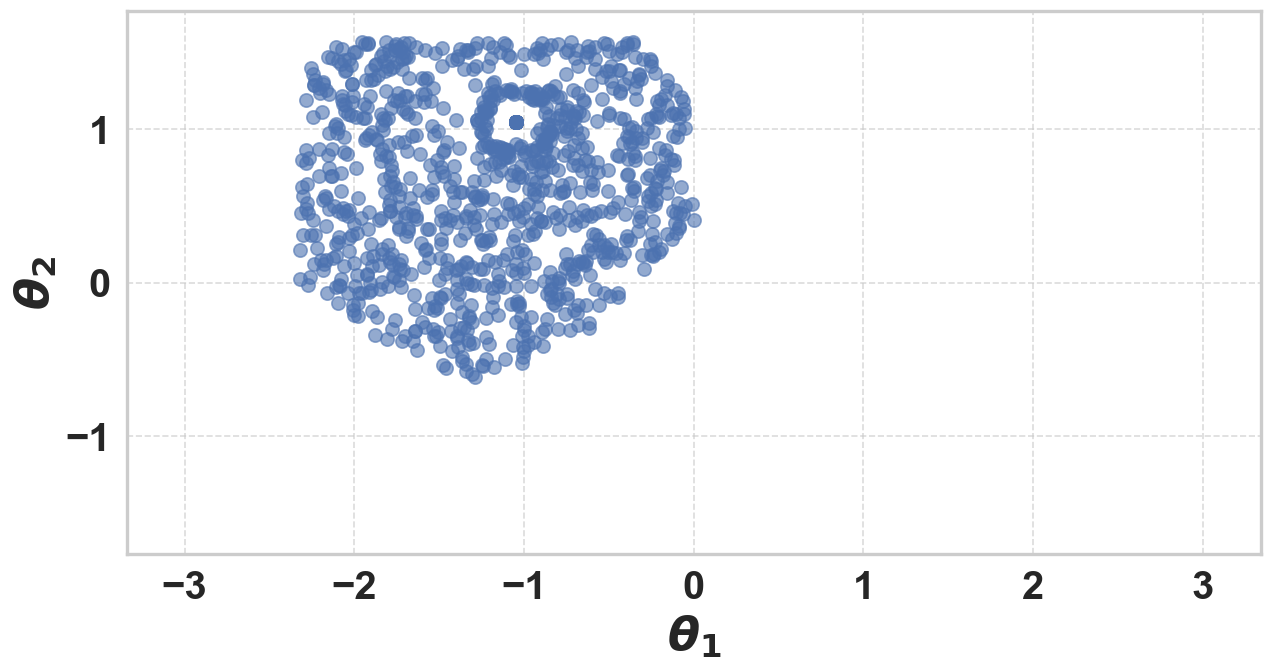}
%     \end{minipage}
%     \begin{minipage}{0.24\textwidth}
%         \centering
%         \includegraphics[width=\textwidth]{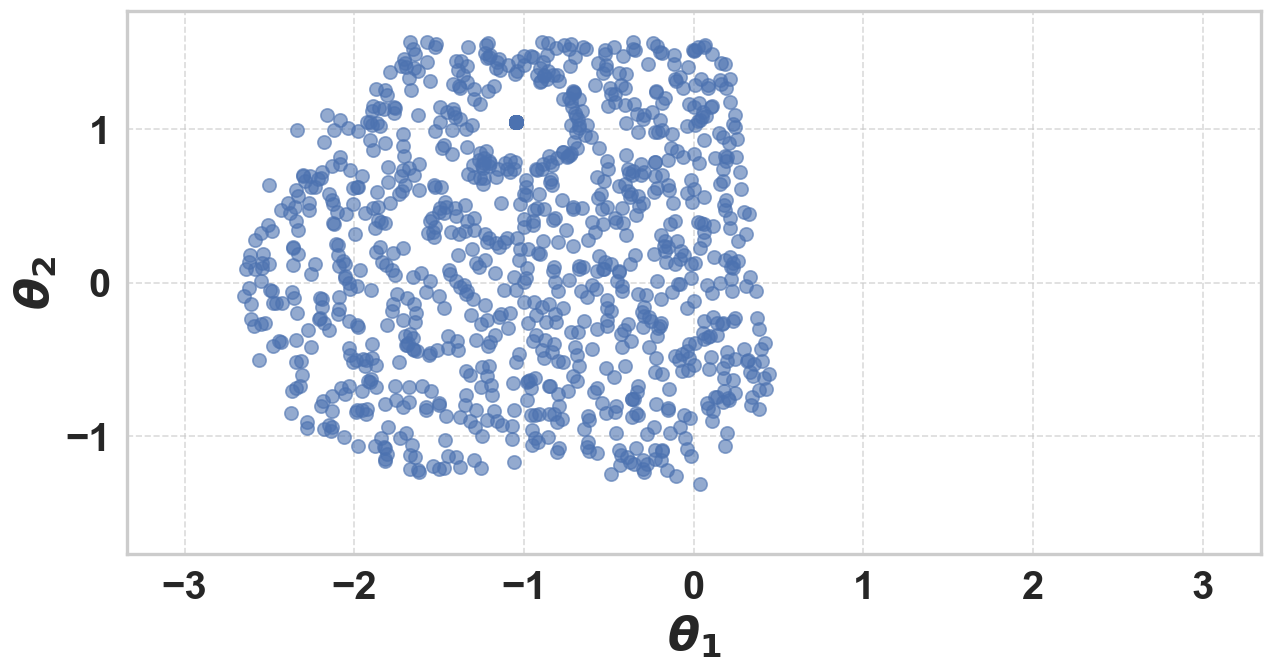}
%     \end{minipage}
%     % \begin{minipage}{0.24\textwidth}
%     %     \centering
%     %     \includegraphics[width=\textwidth]{figures/control_coverage_eps1.0_xi1.0.png}
%     % \end{minipage}
%     \begin{minipage}{0.24\textwidth}
%         \centering
%         \includegraphics[width=\textwidth]{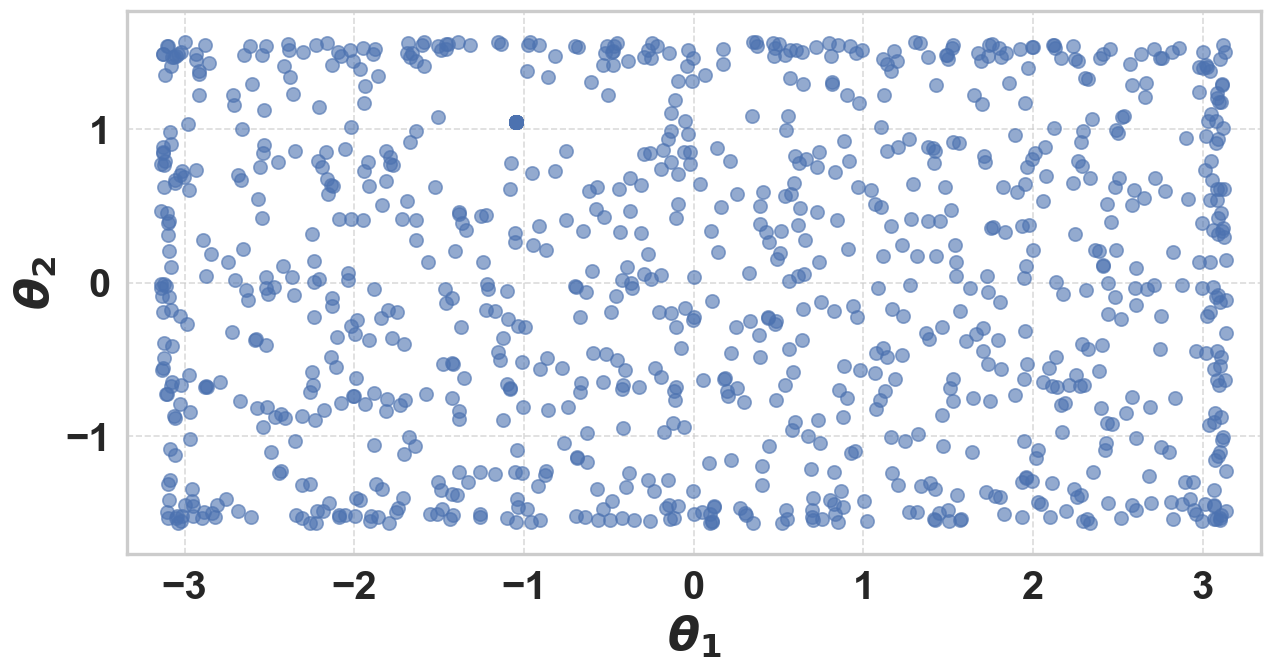}
%     \end{minipage}
%     \caption{Control coverage.}
%     \label{fig:control_cov}
% \end{figure}

\begin{figure}[h!]
    \centering
    % First Row with thresholds
    \begin{subfigure}[t]{0.24\textwidth}
        \centering
        \includegraphics[width=\linewidth]{figures/control_coverage_eps0.1_xi0.1.png}
        \caption*{$\varepsilon=\xi=0.1$}
    \end{subfigure}\hfill
    \begin{subfigure}[t]{0.24\textwidth}
        \centering
        \includegraphics[width=\linewidth]{figures/control_coverage_eps0.3_xi0.3.png}
        \caption*{$\varepsilon=\xi=0.3$}
    \end{subfigure}\hfill
    \begin{subfigure}[t]{0.24\textwidth}
        \centering
        \includegraphics[width=\linewidth]{figures/control_coverage_eps0.5_xi0.5.png}
        \caption*{$\varepsilon=\xi=0.5$}
    \end{subfigure}\hfill
    \begin{subfigure}[t]{0.24\textwidth}
        \centering
        \includegraphics[width=\linewidth]{figures/control_coverage_eps1000000.0_xi1000000.0.png}
        \caption*{$\varepsilon=\xi=\infty$}
    \end{subfigure}

    \caption{Control coverage for various thresholds.}
    \label{fig:control_cov}
\end{figure}

\paragraph{Safety and reset level prediction.} In Table \ref{tab:error_stats}, we quantify the accuracy of the learned model for various threshold pairs \(\varepsilon = \xi\) in \(\{0.1,\, 0.3,\, 0.5, +\infty\}\) by evaluating the prediction quality of the safety and reset levels over 1000 predictions. We report the mean squared error (MSE) and the standard deviation, with the ground truth provided by Monte Carlo estimates based on 100 samples (displayed in Figure \ref{fig:true_values}). In Figure \ref{fig:exp_safety_reset}, we plot the learned safety and reset maps, whose accuracies can be qualitatively assessed by comparing  their values with those in Figure \ref{fig:true_values}. As expected, prediction accuracy improves as safety constraints are relaxed, due to the broader coverage of the control space.

\begin{table}[htbp]
    \centering
    % \small
    
    \caption{Safety and reset level prediction error statistics (MSE $\pm$ Std. Dev.)}
    
    \vspace{0.7\baselineskip} % adds roughly one line of vertical space
    
    \begin{tabular}{lcc}
        \toprule
        \textbf{Model ($\varepsilon$, $\xi$)} & \textbf{Safety MSE} & \textbf{Reset MSE} \\
        \midrule
        \((0.1, 0.1)\) & $0.7010 \pm 0.3847$ & $0.6919 \pm 0.3764$ \\
        \((0.3, 0.3)\) & $0.5217 \pm 0.4254$ & $0.5197 \pm 0.4161$ \\
        \((0.5, 0.5)\) & $0.3736 \pm 0.4299$ & $0.3701 \pm 0.4258$\\
        (\(+\infty, +\infty\)) & $0.0023 \pm 0.0065$ & $0.0024 \pm 0.0062$\\
        \bottomrule
    \end{tabular}
    
    \label{tab:error_stats}
\end{table}

\paragraph{Dynamics prediction.}
To verify that our method captures the underlying system dynamics, we compare predicted trajectory densities with ground-truth trajectories under known-safe controls. Qualitative results show close agreement in both mean and variance. Full visualizations and evaluation details are provided in Appendix~\ref{app:dynamics-prediction}.

\begin{figure}[h!]
    \centering
    % First Row
    \begin{minipage}{0.35\textwidth}
        \centering
        \includegraphics[width=\textwidth]{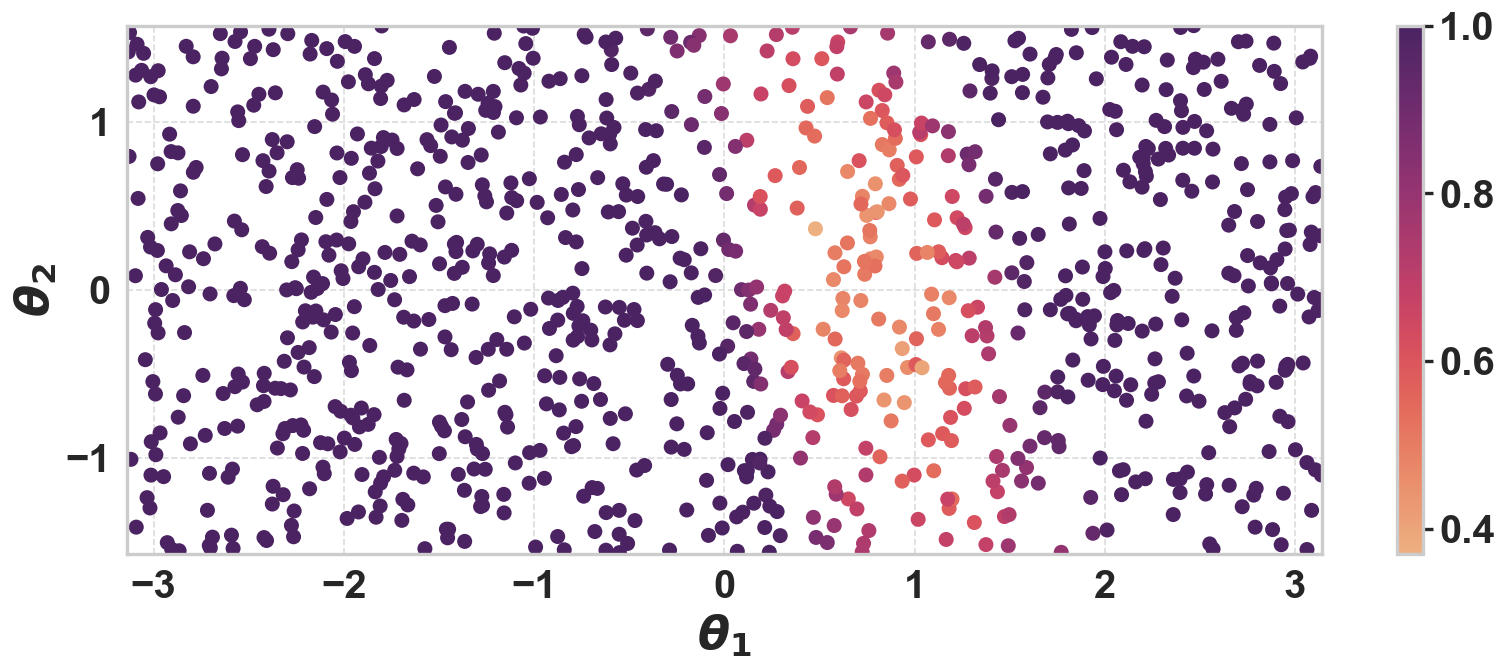}
    \end{minipage}
    \begin{minipage}{0.35\textwidth}
        \centering
        \includegraphics[width=\textwidth]{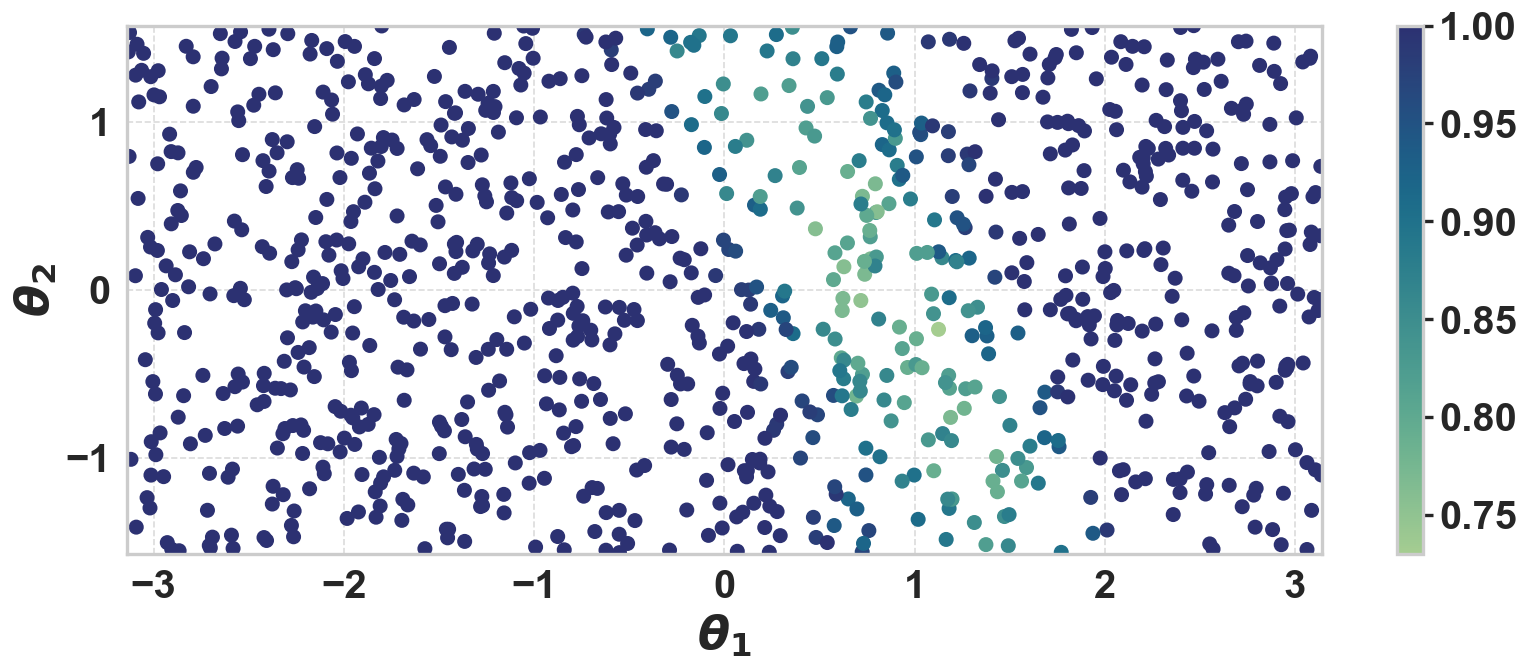}
    \end{minipage}
    \caption{Ground-truth safety (left) and reset (right) probabilities estimated via 100 Monte Carlo samples for 1000 randomly selected controls.}
    \label{fig:true_values}
\end{figure}

\paragraph{Computational considerations.} Our method runs end-to-end in under 32 minutes on standard hardware, covering candidate selection, simulation, evaluation, and model updates. Appendix~\ref{app:comp_cons} provides runtimes, hardware specs, and potential optimizations (e.g., sketching, parallelization), confirming the method's practicality on standard hardware.

\begin{figure}[h!]
    \centering
    % First Row: Safety (no threshold labels)
    \begin{subfigure}[t]{0.24\textwidth}
        \centering
        \includegraphics[width=\linewidth]{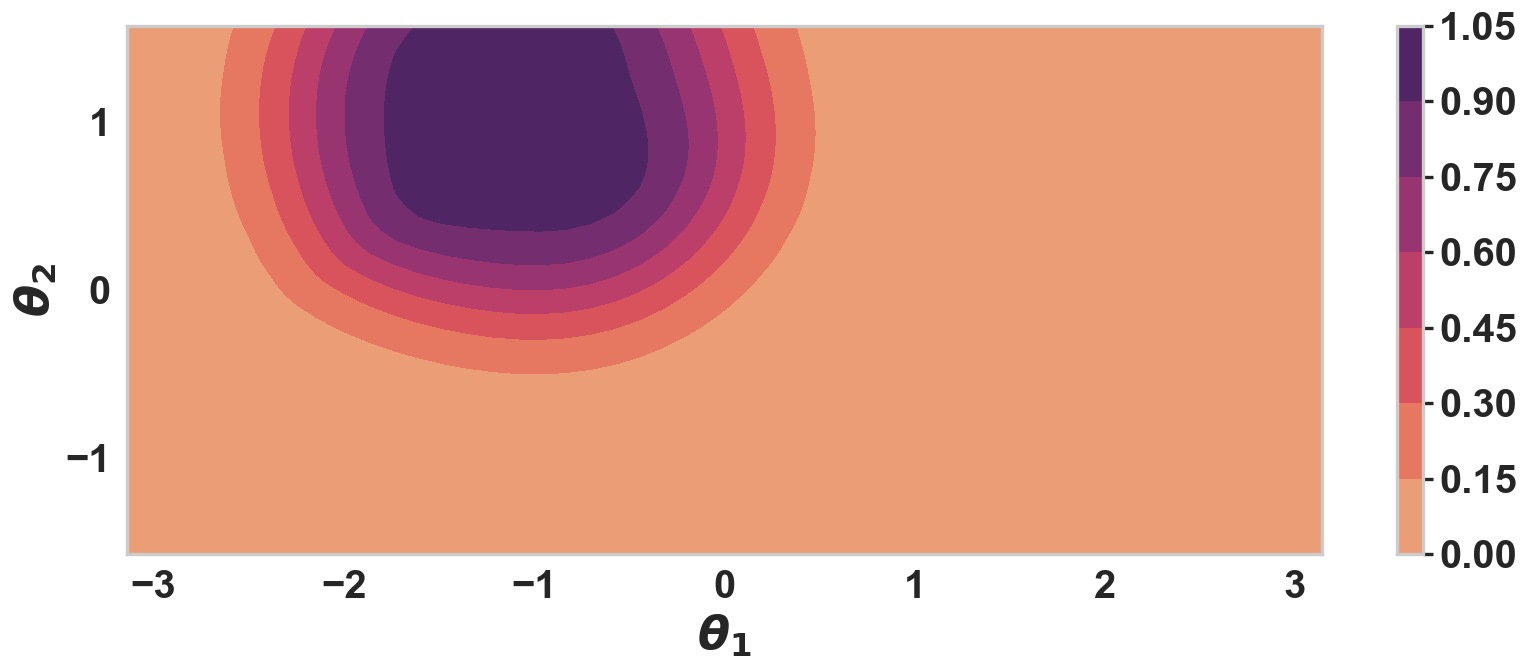}
    \end{subfigure}\hfill
    \begin{subfigure}[t]{0.24\textwidth}
        \centering
        \includegraphics[width=\linewidth]{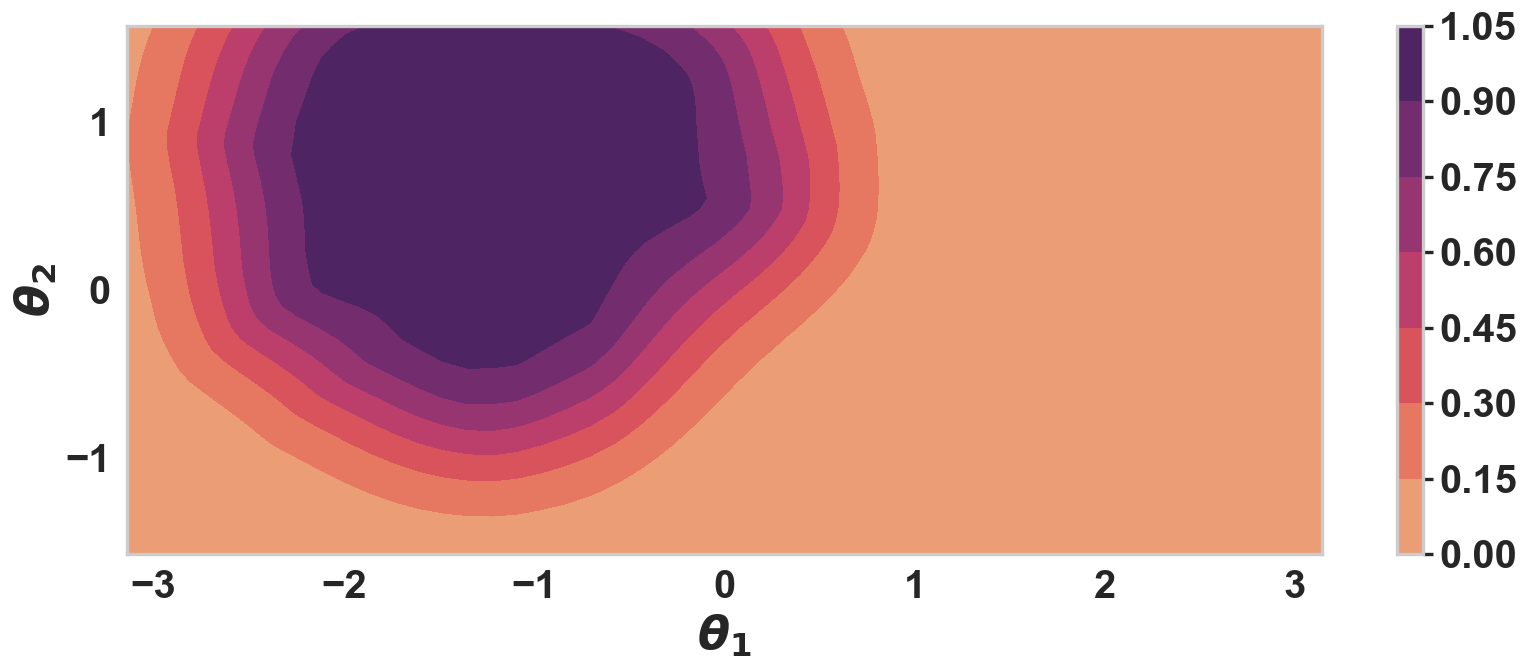}
    \end{subfigure}\hfill
    \begin{subfigure}[t]{0.24\textwidth}
        \centering
        \includegraphics[width=\linewidth]{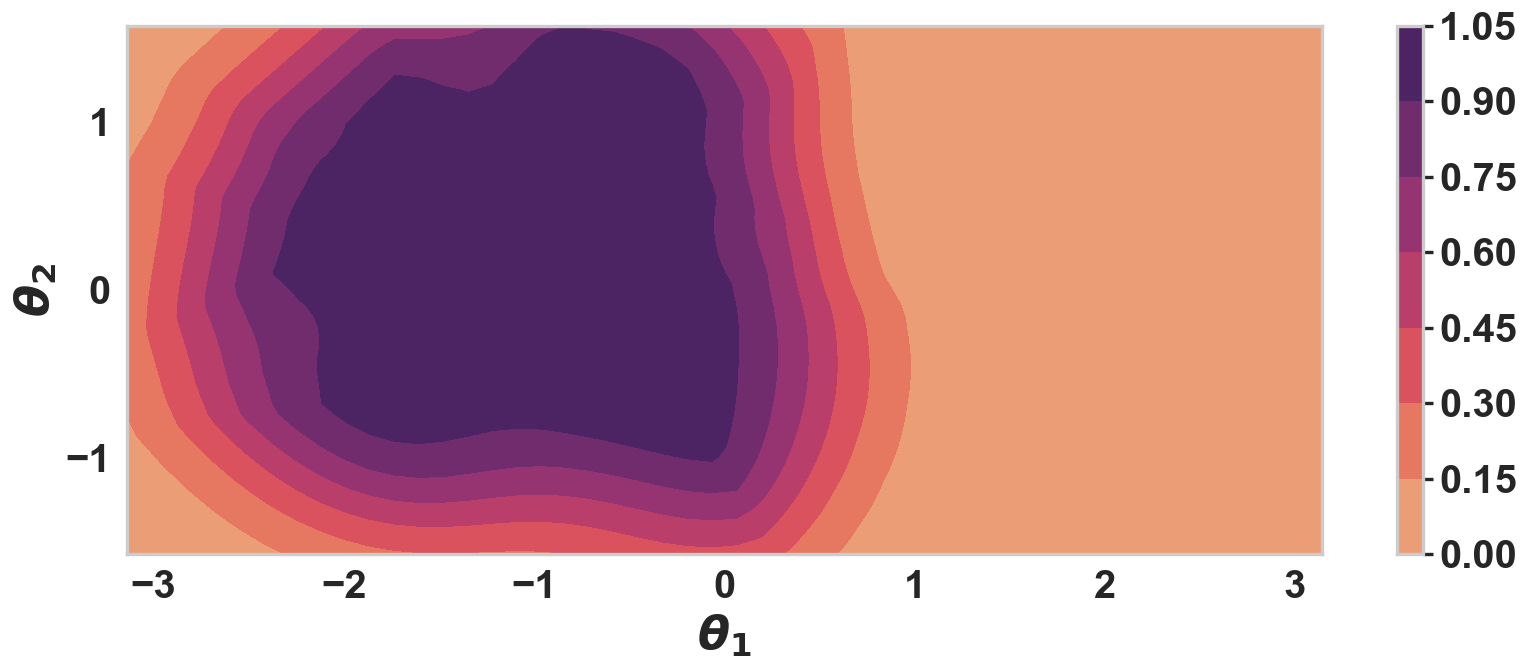}
    \end{subfigure}\hfill
    \begin{subfigure}[t]{0.24\textwidth}
        \centering
        \includegraphics[width=\linewidth]{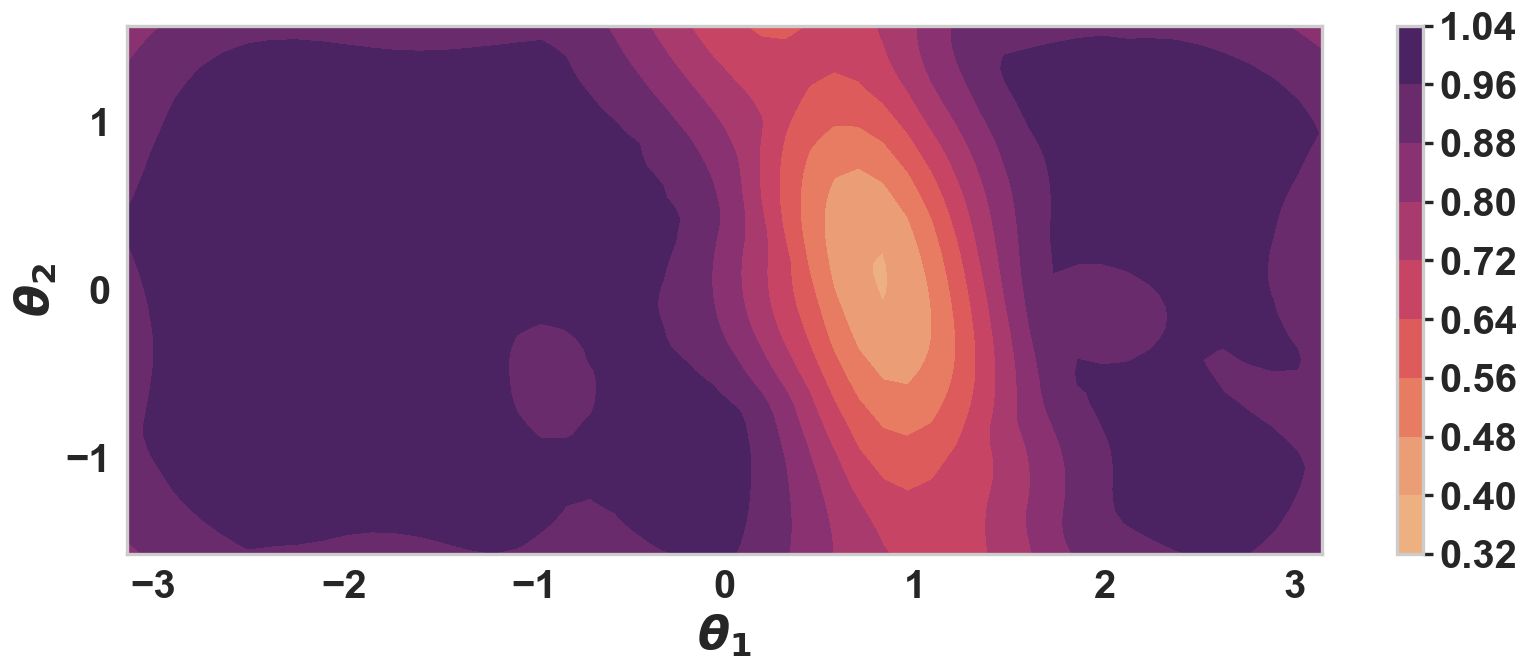}
    \end{subfigure}

    \vspace{0.6cm} % Space between rows

    % Second Row: Reset (with threshold labels)
    \begin{subfigure}[t]{0.24\textwidth}
        \centering
        \includegraphics[width=\linewidth]{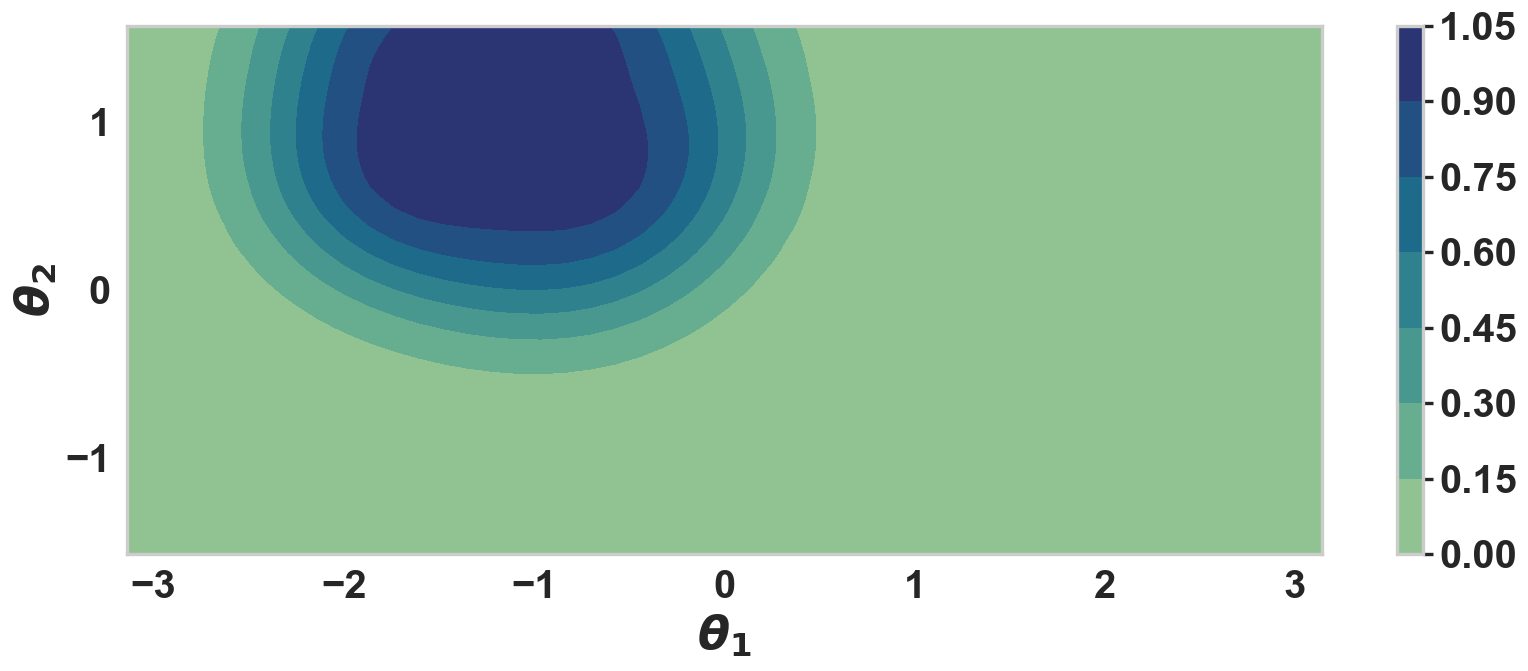}
        \caption*{$\varepsilon=\xi=0.1$}
    \end{subfigure}\hfill
    \begin{subfigure}[t]{0.24\textwidth}
        \centering
        \includegraphics[width=\linewidth]{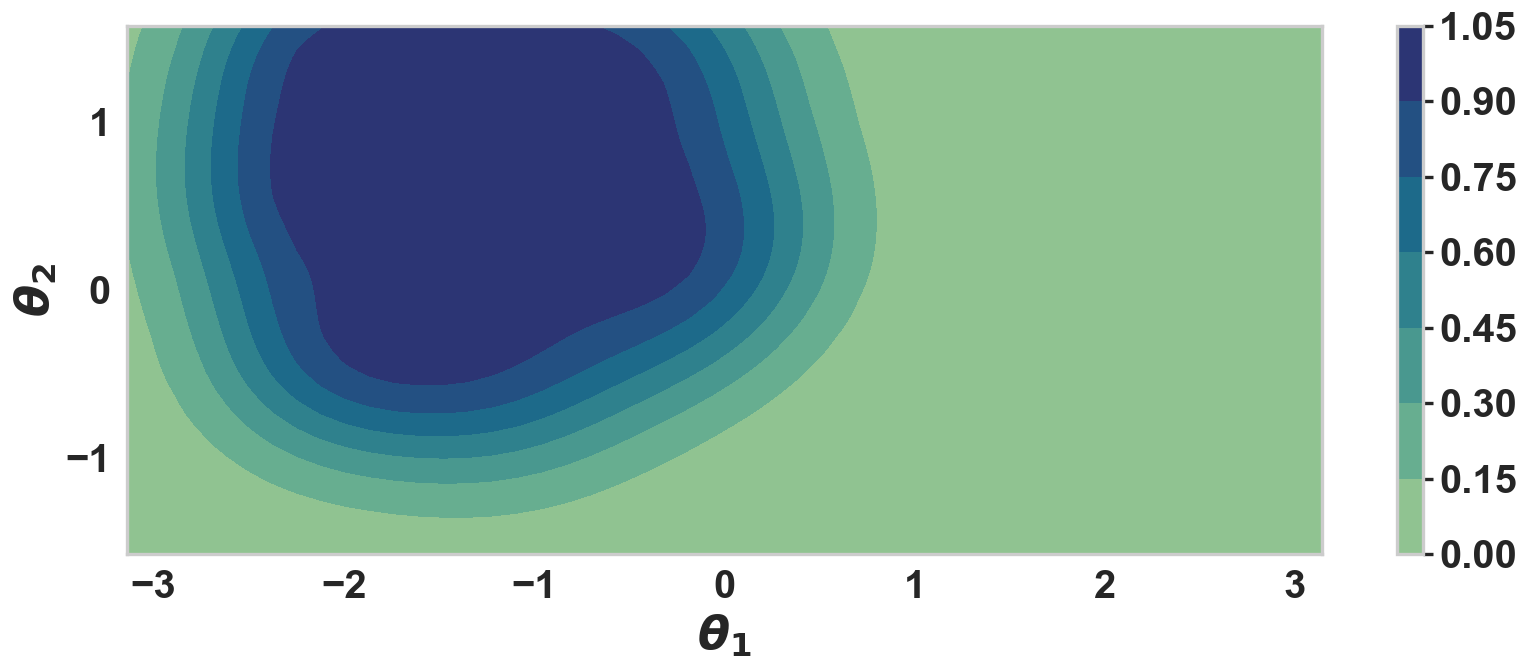}
        \caption*{$\varepsilon=\xi=0.3$}
    \end{subfigure}\hfill
    \begin{subfigure}[t]{0.24\textwidth}
        \centering
        \includegraphics[width=\linewidth]{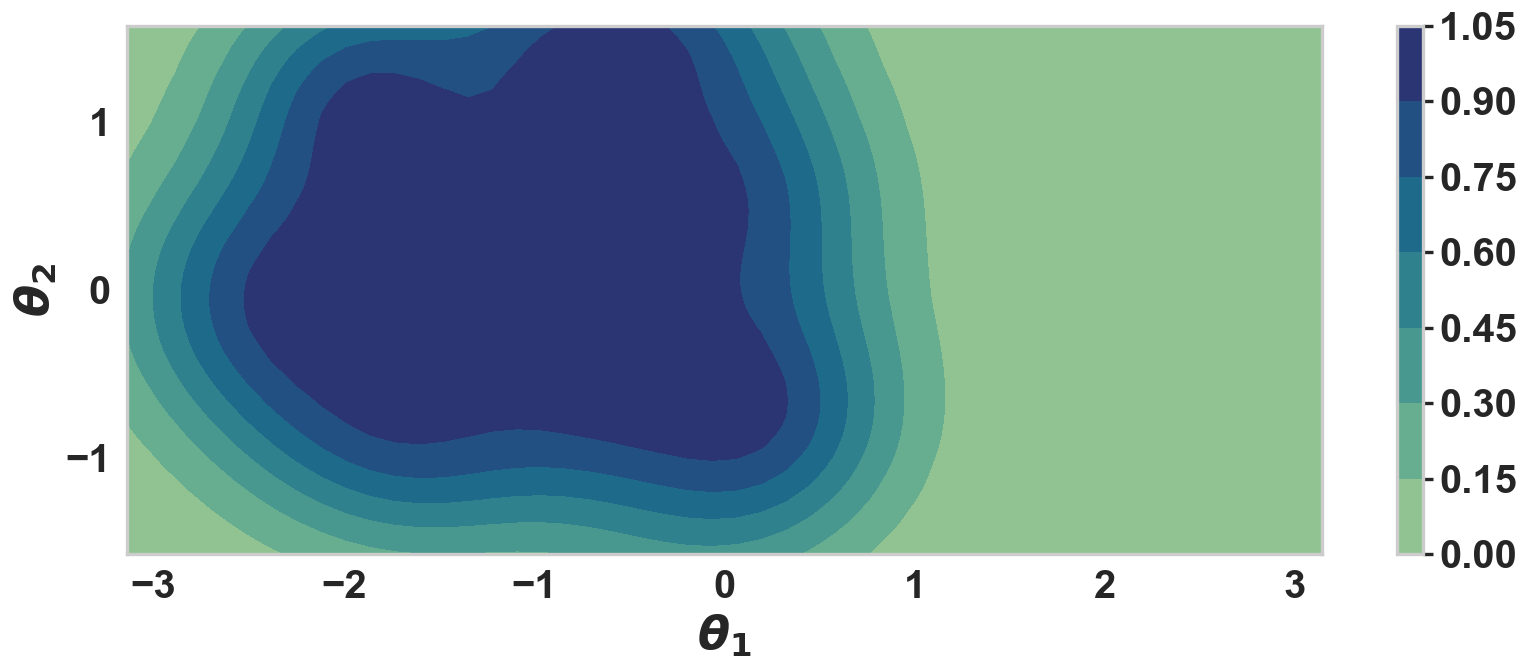}
        \caption*{$\varepsilon=\xi=0.5$}
    \end{subfigure}\hfill
    \begin{subfigure}[t]{0.24\textwidth}
        \centering
        \includegraphics[width=\linewidth]{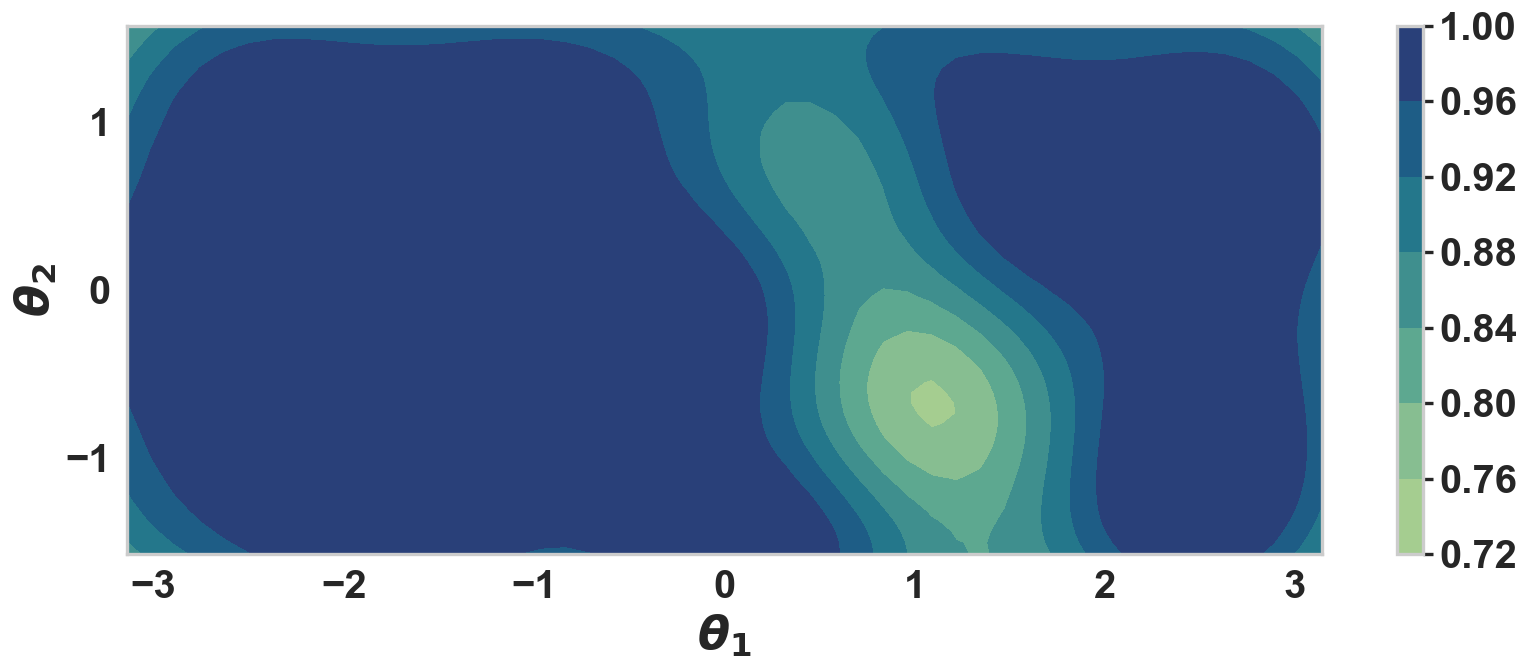}
        \caption*{$\varepsilon=\xi=\infty$}
    \end{subfigure}

    \caption{Learned safety (top row) and reset (bottom row) level maps for various thresholds.}
    \label{fig:exp_safety_reset}
\end{figure}

\section{Conclusion}

% We introduced a provably safe and efficient method for learning controlled stochastic dynamics from trajectory data. By leveraging kernel-based confidence bounds and smoothness assumptions, our method incrementally expands an initial safe control set, ensuring that all trajectories remain within predefined safety regions throughout the learning process. Theoretical guarantees were established for both safety and estimation accuracy, with learning rates that adapt to the Sobolev regularity of the true dynamics. Numerical experiments corroborate our theoretical findings regarding safety and estimation accuracy. By tuning the safety (\(\varepsilon\)) and reset (\(\xi\)) thresholds, users can explicitly control the trade-off between conservative safety satisfaction and more exploratory behavior. The method's computational efficiency further supports its practical applicability in real-world, safety-critical environments.

% Future work includes validation on physical systems (e.g., autonomous robots), improved scalability via fast kernel methods (e.g., sketching, incremental updates), and extensions to handle abrupt dynamics and other non-diffusive disturbances, such as jump processes encountered in pedestrian-vehicle interactions and hybrid systems.

% \section{Conclusion}

We introduced a provably safe and efficient method for learning controlled stochastic dynamics from trajectory data. By leveraging kernel-based confidence bounds and smoothness assumptions, our method incrementally expands an initial safe control set, ensuring that all trajectories remain within predefined safety regions throughout the learning process. Theoretical guarantees were established for both safety and estimation accuracy, with learning rates that adapt to the Sobolev regularity of the true dynamics. Numerical experiments corroborate our theoretical findings regarding safety and estimation accuracy. By tuning the safety (\(\varepsilon\)) and reset (\(\xi\)) thresholds, users can explicitly control the trade-off between conservative safety satisfaction and exploratory behavior. While our experimental validation focuses on a low-dimensional setting, the theoretical results scale with dimension: the convergence rates for the proposed estimators decrease polynomially with dimension and can mitigate the curse of dimensionality under sufficient smoothness. This makes the method applicable to higher-dimensional systems, which we plan to investigate in future work. Further research will include validation on physical systems (e.g., autonomous robots), improved scalability through fast kernel methods (e.g., sketching or incremental updates), comparisons with safe RL baselines, systematic analyses of kernel and threshold selection, and extensions to handle abrupt dynamics and non-diffusive disturbances such as jump processes arising in pedestrian–vehicle interactions and hybrid systems. These developments will further support applications in safety-critical control and decision-making under uncertainty.

% The method's computational efficiency further supports its potential for real-world, safety-critical applications. Future work includes validation on higher-dimensional physical systems (e.g., autonomous robots), improved scalability through fast kernel methods (e.g., sketching or incremental updates), comparisons with safe RL baselines, systematic analyses of kernel and threshold selection, and extensions to handle abrupt dynamics and non-diffusive disturbances, such as jump processes arising in pedestrian-vehicle interactions and hybrid systems. These developments will further support applications in safety-critical control and decision-making under uncertainty.

% include (in this order): acknowledgements, bibliography, checklist, and other appendices

\section*{Acknowledgements}
The Agence Nationale de la Recherche (grant ANR-22-CE48-0006, PI: R.B.) provided funds to assist the authors with their research. A.R. acknowledges support from the European Research Council (grant REAL 947908).

{\small       % optional: shrink refs to 9pt, but not required
\bibliographystyle{unsrtnat}  % numeric, in order of appearance
\bibliography{references}            % your refs.bib file
}

%%%%%%%%%%%%%%%%%%%%%%%%%%%%%%%%%%%%%%%%%%%%%%%%%%%%%%%%%%%%

\newpage
\section*{NeurIPS Paper Checklist}

\begin{enumerate}

\item {\bf Claims}
    \item[] Question: Do the main claims made in the abstract and introduction accurately reflect the paper's contributions and scope?
    \item[] Answer: \answerYes{} % Replace by \answerYes{}, \answerNo{}, or \answerNA{}.
    \item[] Justification: The abstract and the introduction clearly state the contributions: a method for safe learning of controlled dynamics, theoretical guarantees under Sobolev regularity, and numerical evaluation of performance. These claims are consistently supported throughout the methodological, theoretical and experimental sections.
    \item[] Guidelines:
    \begin{itemize}
        \item The answer NA means that the abstract and introduction do not include the claims made in the paper.
        \item The abstract and/or introduction should clearly state the claims made, including the contributions made in the paper and important assumptions and limitations. A No or NA answer to this question will not be perceived well by the reviewers. 
        \item The claims made should match theoretical and experimental results, and reflect how much the results can be expected to generalize to other settings. 
        \item It is fine to include aspirational goals as motivation as long as it is clear that these goals are not attained by the paper. 
    \end{itemize}

\item {\bf Limitations}
    \item[] Question: Does the paper discuss the limitations of the work performed by the authors?
    \item[] Answer: \answerYes{} % Replace by \answerYes{}, \answerNo{}, or \answerNA{}.
    \item[] Justification:  The paper discusses limitations including model assumptions (Sobolev regularity, known initial safe/reset sets), computational cost (see Assumptions, Theory and Experiments sections).
    \item[] Guidelines:
    \begin{itemize}
        \item The answer NA means that the paper has no limitation while the answer No means that the paper has limitations, but those are not discussed in the paper. 
        \item The authors are encouraged to create a separate "Limitations" section in their paper.
        \item The paper should point out any strong assumptions and how robust the results are to violations of these assumptions (e.g., independence assumptions, noiseless settings, model well-specification, asymptotic approximations only holding locally). The authors should reflect on how these assumptions might be violated in practice and what the implications would be.
        \item The authors should reflect on the scope of the claims made, e.g., if the approach was only tested on a few datasets or with a few runs. In general, empirical results often depend on implicit assumptions, which should be articulated.
        \item The authors should reflect on the factors that influence the performance of the approach. For example, a facial recognition algorithm may perform poorly when image resolution is low or images are taken in low lighting. Or a speech-to-text system might not be used reliably to provide closed captions for online lectures because it fails to handle technical jargon.
        \item The authors should discuss the computational efficiency of the proposed algorithms and how they scale with dataset size.
        \item If applicable, the authors should discuss possible limitations of their approach to address problems of privacy and fairness.
        \item While the authors might fear that complete honesty about limitations might be used by reviewers as grounds for rejection, a worse outcome might be that reviewers discover limitations that aren't acknowledged in the paper. The authors should use their best judgment and recognize that individual actions in favor of transparency play an important role in developing norms that preserve the integrity of the community. Reviewers will be specifically instructed to not penalize honesty concerning limitations.
    \end{itemize}

\item {\bf Theory assumptions and proofs}
    \item[] Question: For each theoretical result, does the paper provide the full set of assumptions and a complete (and correct) proof?
    \item[] Answer: \answerYes{} % Replace by \answerYes{}, \answerNo{}, or \answerNA{}.
    \item[] Justification: All theoretical results are stated with assumptions and formally proved in Appendix~\ref{app:proofs}. The main theorem clearly specifies conditions on smoothness and sampling.
    \item[] Guidelines:
    \begin{itemize}
        \item The answer NA means that the paper does not include theoretical results. 
        \item All the theorems, formulas, and proofs in the paper should be numbered and cross-referenced.
        \item All assumptions should be clearly stated or referenced in the statement of any theorems.
        \item The proofs can either appear in the main paper or the supplemental material, but if they appear in the supplemental material, the authors are encouraged to provide a short proof sketch to provide intuition. 
        \item Inversely, any informal proof provided in the core of the paper should be complemented by formal proofs provided in appendix or supplemental material.
        \item Theorems and Lemmas that the proof relies upon should be properly referenced. 
    \end{itemize}

    \item {\bf Experimental result reproducibility}
    \item[] Question: Does the paper fully disclose all the information needed to reproduce the main experimental results of the paper to the extent that it affects the main claims and/or conclusions of the paper (regardless of whether the code and data are provided or not)?
    \item[] Answer: \answerYes{} % Replace by \answerYes{}, \answerNo{}, or \answerNA{}.
    \item[] Justification: The main paper and appendix detail the full experimental setup: system dynamics, control parametrization, thresholds, kernel parameters, and iteration budget. All elements needed to reproduce the results are included.
    \item[] Guidelines:
    \begin{itemize}
        \item The answer NA means that the paper does not include experiments.
        \item If the paper includes experiments, a No answer to this question will not be perceived well by the reviewers: Making the paper reproducible is important, regardless of whether the code and data are provided or not.
        \item If the contribution is a dataset and/or model, the authors should describe the steps taken to make their results reproducible or verifiable. 
        \item Depending on the contribution, reproducibility can be accomplished in various ways. For example, if the contribution is a novel architecture, describing the architecture fully might suffice, or if the contribution is a specific model and empirical evaluation, it may be necessary to either make it possible for others to replicate the model with the same dataset, or provide access to the model. In general. releasing code and data is often one good way to accomplish this, but reproducibility can also be provided via detailed instructions for how to replicate the results, access to a hosted model (e.g., in the case of a large language model), releasing of a model checkpoint, or other means that are appropriate to the research performed.
        \item While NeurIPS does not require releasing code, the conference does require all submissions to provide some reasonable avenue for reproducibility, which may depend on the nature of the contribution. For example
        \begin{enumerate}
            \item If the contribution is primarily a new algorithm, the paper should make it clear how to reproduce that algorithm.
            \item If the contribution is primarily a new model architecture, the paper should describe the architecture clearly and fully.
            \item If the contribution is a new model (e.g., a large language model), then there should either be a way to access this model for reproducing the results or a way to reproduce the model (e.g., with an open-source dataset or instructions for how to construct the dataset).
            \item We recognize that reproducibility may be tricky in some cases, in which case authors are welcome to describe the particular way they provide for reproducibility. In the case of closed-source models, it may be that access to the model is limited in some way (e.g., to registered users), but it should be possible for other researchers to have some path to reproducing or verifying the results.
        \end{enumerate}
    \end{itemize}

\item {\bf Open access to data and code}
    \item[] Question: Does the paper provide open access to the data and code, with sufficient instructions to faithfully reproduce the main experimental results, as described in supplemental material?
    \item[] Answer: \answerYes{} % Replace by \answerYes{}, \answerNo{}, or \answerNA{}.
    \item[] Justification: Code and data will be made publicly available upon publication, and are shared during the anonymous review phase. Instructions for reproduction are included in the code's documentation.
    \item[] Guidelines:
    \begin{itemize}
        \item The answer NA means that paper does not include experiments requiring code.
        \item Please see the NeurIPS code and data submission guidelines (\url{https://nips.cc/public/guides/CodeSubmissionPolicy}) for more details.
        \item While we encourage the release of code and data, we understand that this might not be possible, so “No” is an acceptable answer. Papers cannot be rejected simply for not including code, unless this is central to the contribution (e.g., for a new open-source benchmark).
        \item The instructions should contain the exact command and environment needed to run to reproduce the results. See the NeurIPS code and data submission guidelines (\url{https://nips.cc/public/guides/CodeSubmissionPolicy}) for more details.
        \item The authors should provide instructions on data access and preparation, including how to access the raw data, preprocessed data, intermediate data, and generated data, etc.
        \item The authors should provide scripts to reproduce all experimental results for the new proposed method and baselines. If only a subset of experiments are reproducible, they should state which ones are omitted from the script and why.
        \item At submission time, to preserve anonymity, the authors should release anonymized versions (if applicable).
        \item Providing as much information as possible in supplemental material (appended to the paper) is recommended, but including URLs to data and code is permitted.
    \end{itemize}

\item {\bf Experimental setting/details}
    \item[] Question: Does the paper specify all the training and test details (e.g., data splits, hyperparameters, how they were chosen, type of optimizer, etc.) necessary to understand the results?
    \item[] Answer: \answerYes{} % Replace by \answerYes{}, \answerNo{}, or \answerNA{}.
    \item[] Justification: The system definition, exploration policy, control parameters, thresholds, hyperparameters, and evaluation metrics are all clearly described in Section~\ref{sec:experiments} and Appendix~\ref{app:impl_details}.
    \item[] Guidelines:
    \begin{itemize}
        \item The answer NA means that the paper does not include experiments.
        \item The experimental setting should be presented in the core of the paper to a level of detail that is necessary to appreciate the results and make sense of them.
        \item The full details can be provided either with the code, in appendix, or as supplemental material.
    \end{itemize}

\item {\bf Experiment statistical significance}
    \item[] Question: Does the paper report error bars suitably and correctly defined or other appropriate information about the statistical significance of the experiments?
    \item[] Answer: \answerYes{} % Replace by \answerYes{}, \answerNo{}, or \answerNA{}.
    \item[] Justification: Error bars (mean $\pm$ std) are reported for prediction error in Table~\ref{tab:error_stats}.
    \item[] Guidelines:
    \begin{itemize}
        \item The answer NA means that the paper does not include experiments.
        \item The authors should answer "Yes" if the results are accompanied by error bars, confidence intervals, or statistical significance tests, at least for the experiments that support the main claims of the paper.
        \item The factors of variability that the error bars are capturing should be clearly stated (for example, train/test split, initialization, random drawing of some parameter, or overall run with given experimental conditions).
        \item The method for calculating the error bars should be explained (closed form formula, call to a library function, bootstrap, etc.)
        \item The assumptions made should be given (e.g., Normally distributed errors).
        \item It should be clear whether the error bar is the standard deviation or the standard error of the mean.
        \item It is OK to report 1-sigma error bars, but one should state it. The authors should preferably report a 2-sigma error bar than state that they have a 96\% CI, if the hypothesis of Normality of errors is not verified.
        \item For asymmetric distributions, the authors should be careful not to show in tables or figures symmetric error bars that would yield results that are out of range (e.g. negative error rates).
        \item If error bars are reported in tables or plots, The authors should explain in the text how they were calculated and reference the corresponding figures or tables in the text.
    \end{itemize}

\item {\bf Experiments compute resources}
    \item[] Question: For each experiment, does the paper provide sufficient information on the computer resources (type of compute workers, memory, time of execution) needed to reproduce the experiments?
    \item[] Answer: \answerYes{} % Replace by \answerYes{}, \answerNo{}, or \answerNA{}.
    \item[] Justification: Section~\ref{sec:experiments} and Appendix~\ref{app:comp_cons} provide runtimes and hardware details.
    \item[] Guidelines:
    \begin{itemize}
        \item The answer NA means that the paper does not include experiments.
        \item The paper should indicate the type of compute workers CPU or GPU, internal cluster, or cloud provider, including relevant memory and storage.
        \item The paper should provide the amount of compute required for each of the individual experimental runs as well as estimate the total compute. 
        \item The paper should disclose whether the full research project required more compute than the experiments reported in the paper (e.g., preliminary or failed experiments that didn't make it into the paper). 
    \end{itemize}
    
\item {\bf Code of ethics}
    \item[] Question: Does the research conducted in the paper conform, in every respect, with the NeurIPS Code of Ethics \url{https://neurips.cc/public/EthicsGuidelines}?
    \item[] Answer: \answerYes{} % Replace by \answerYes{}, \answerNo{}, or \answerNA{}.
    \item[] Justification: The research adheres to NeurIPS Ethics Guidelines. It does not involve human subjects, personal data, or high-risk deployments.
    \item[] Guidelines:
    \begin{itemize}
        \item The answer NA means that the authors have not reviewed the NeurIPS Code of Ethics.
        \item If the authors answer No, they should explain the special circumstances that require a deviation from the Code of Ethics.
        \item The authors should make sure to preserve anonymity (e.g., if there is a special consideration due to laws or regulations in their jurisdiction).
    \end{itemize}

\item {\bf Broader impacts}
    \item[] Question: Does the paper discuss both potential positive societal impacts and negative societal impacts of the work performed?
    \item[] Answer: \answerYes{} % Replace by \answerYes{}, \answerNo{}, or \answerNA{}.
    \item[] Justification: The paper addresses safe learning for stochastic control systems, with applications in robotics and autonomous systems. It offers methods to ensure safety during training, which is a positive contribution. We see no direct negative societal impacts from the proposed methodology.
    \item[] Guidelines:
    \begin{itemize}
        \item The answer NA means that there is no societal impact of the work performed.
        \item If the authors answer NA or No, they should explain why their work has no societal impact or why the paper does not address societal impact.
        \item Examples of negative societal impacts include potential malicious or unintended uses (e.g., disinformation, generating fake profiles, surveillance), fairness considerations (e.g., deployment of technologies that could make decisions that unfairly impact specific groups), privacy considerations, and security considerations.
        \item The conference expects that many papers will be foundational research and not tied to particular applications, let alone deployments. However, if there is a direct path to any negative applications, the authors should point it out. For example, it is legitimate to point out that an improvement in the quality of generative models could be used to generate deepfakes for disinformation. On the other hand, it is not needed to point out that a generic algorithm for optimizing neural networks could enable people to train models that generate Deepfakes faster.
        \item The authors should consider possible harms that could arise when the technology is being used as intended and functioning correctly, harms that could arise when the technology is being used as intended but gives incorrect results, and harms following from (intentional or unintentional) misuse of the technology.
        \item If there are negative societal impacts, the authors could also discuss possible mitigation strategies (e.g., gated release of models, providing defenses in addition to attacks, mechanisms for monitoring misuse, mechanisms to monitor how a system learns from feedback over time, improving the efficiency and accessibility of ML).
    \end{itemize}
    
\item {\bf Safeguards}
    \item[] Question: Does the paper describe safeguards that have been put in place for responsible release of data or models that have a high risk for misuse (e.g., pretrained language models, image generators, or scraped datasets)?
    \item[] Answer: \answerNA{} % Replace by \answerYes{}, \answerNo{}, or \answerNA{}.
    \item[] Justification: The method does not involve pretrained models or scraped datasets and poses minimal risk of misuse.
    \item[] Guidelines:
    \begin{itemize}
        \item The answer NA means that the paper poses no such risks.
        \item Released models that have a high risk for misuse or dual-use should be released with necessary safeguards to allow for controlled use of the model, for example by requiring that users adhere to usage guidelines or restrictions to access the model or implementing safety filters. 
        \item Datasets that have been scraped from the Internet could pose safety risks. The authors should describe how they avoided releasing unsafe images.
        \item We recognize that providing effective safeguards is challenging, and many papers do not require this, but we encourage authors to take this into account and make a best faith effort.
    \end{itemize}

\item {\bf Licenses for existing assets}
    \item[] Question: Are the creators or original owners of assets (e.g., code, data, models), used in the paper, properly credited and are the license and terms of use explicitly mentioned and properly respected?
    \item[] Answer: \answerYes{} % Replace by \answerYes{}, \answerNo{}, or \answerNA{}.
    \item[] Justification: All external tools used are standard academic libraries. No external datasets are used.
    \item[] Guidelines:
    \begin{itemize}
        \item The answer NA means that the paper does not use existing assets.
        \item The authors should cite the original paper that produced the code package or dataset.
        \item The authors should state which version of the asset is used and, if possible, include a URL.
        \item The name of the license (e.g., CC-BY 4.0) should be included for each asset.
        \item For scraped data from a particular source (e.g., website), the copyright and terms of service of that source should be provided.
        \item If assets are released, the license, copyright information, and terms of use in the package should be provided. For popular datasets, \url{paperswithcode.com/datasets} has curated licenses for some datasets. Their licensing guide can help determine the license of a dataset.
        \item For existing datasets that are re-packaged, both the original license and the license of the derived asset (if it has changed) should be provided.
        \item If this information is not available online, the authors are encouraged to reach out to the asset's creators.
    \end{itemize}

\item {\bf New assets}
    \item[] Question: Are new assets introduced in the paper well documented and is the documentation provided alongside the assets?
    \item[] Answer: \answerNA{} % Replace by \answerYes{}, \answerNo{}, or \answerNA{}.
    \item[] Justification: No new datasets or models are released. The experiments are synthetic.
    \item[] Guidelines:
    \begin{itemize}
        \item The answer NA means that the paper does not release new assets.
        \item Researchers should communicate the details of the dataset/code/model as part of their submissions via structured templates. This includes details about training, license, limitations, etc. 
        \item The paper should discuss whether and how consent was obtained from people whose asset is used.
        \item At submission time, remember to anonymize your assets (if applicable). You can either create an anonymized URL or include an anonymized zip file.
    \end{itemize}

\item {\bf Crowdsourcing and research with human subjects}
    \item[] Question: For crowdsourcing experiments and research with human subjects, does the paper include the full text of instructions given to participants and screenshots, if applicable, as well as details about compensation (if any)? 
    \item[] Answer: \answerNA{} % Replace by \answerYes{}, \answerNo{}, or \answerNA{}.
    \item[] Justification: The research does not involve crowdsourcing or human subjects.
    \item[] Guidelines:
    \begin{itemize}
        \item The answer NA means that the paper does not involve crowdsourcing nor research with human subjects.
        \item Including this information in the supplemental material is fine, but if the main contribution of the paper involves human subjects, then as much detail as possible should be included in the main paper. 
        \item According to the NeurIPS Code of Ethics, workers involved in data collection, curation, or other labor should be paid at least the minimum wage in the country of the data collector. 
    \end{itemize}

\item {\bf Institutional review board (IRB) approvals or equivalent for research with human subjects}
    \item[] Question: Does the paper describe potential risks incurred by study participants, whether such risks were disclosed to the subjects, and whether Institutional Review Board (IRB) approvals (or an equivalent approval/review based on the requirements of your country or institution) were obtained?
    \item[] Answer: \answerNA{} % Replace by \answerYes{}, \answerNo{}, or \answerNA{}.
    \item[] Justification: No human subjects are involved in this research.
    \item[] Guidelines:
    \begin{itemize}
        \item The answer NA means that the paper does not involve crowdsourcing nor research with human subjects.
        \item Depending on the country in which research is conducted, IRB approval (or equivalent) may be required for any human subjects research. If you obtained IRB approval, you should clearly state this in the paper. 
        \item We recognize that the procedures for this may vary significantly between institutions and locations, and we expect authors to adhere to the NeurIPS Code of Ethics and the guidelines for their institution. 
        \item For initial submissions, do not include any information that would break anonymity (if applicable), such as the institution conducting the review.
    \end{itemize}

\item {\bf Declaration of LLM usage}
    \item[] Question: Does the paper describe the usage of LLMs if it is an important, original, or non-standard component of the core methods in this research? Note that if the LLM is used only for writing, editing, or formatting purposes and does not impact the core methodology, scientific rigorousness, or originality of the research, declaration is not required.
    %this research? 
    \item[] Answer: \answerNA{} % Replace by \answerYes{}, \answerNo{}, or \answerNA{}.
    \item[] Justification: No LLMs were used in the design, implementation, or analysis of the core research method.
    \item[] Guidelines:
    \begin{itemize}
        \item The answer NA means that the core method development in this research does not involve LLMs as any important, original, or non-standard components.
        \item Please refer to our LLM policy (\url{https://neurips.cc/Conferences/2025/LLM}) for what should or should not be described.
    \end{itemize}

\end{enumerate}

\newpage
\appendix
\section{Proofs}\label{app:proofs}

\subsection{Notations}\label{subsec:notations}

We denote by \( S^n_+ \) the set of positive-definite matrices in \( \mathbb{R}^{n \times n} \). The space of all measurable functions mapping a set \( A \) to a set \( B \) is denoted by \( \mathcal{F}(A, B) \). For any vectors \( u, v \), their tensor product is denoted by \( u \otimes v \). For any conformable operator \( A \), we define \( A_{\lambda} \triangleq A + \lambda I \), where \( I \) is the identity operator. The minimum and maximum between two scalars \( a, b \) are denoted as \( a \wedge b \triangleq \min(a, b) \) and \( a \vee b \triangleq \max(a, b) \).
To quantify function smoothness, we use Sobolev spaces. For a domain \( \Omega \subset \mathbb{R}^d \), the Sobolev space \( H^\nu(\Omega) \) consists of functions whose weak derivatives up to order \( \nu \) exist and are square-integrable. The Sobolev embedding theorem states that if \( \nu > d/2 + k \), then functions in \( H^\nu(\Omega) \) are at least \( C^k(\Omega) \)-smooth. In our work, we consider domains such as \( \mathbb{R}^n \) for spatial variables and \( D \times [0, T_{\max}] \) for control-time spaces , where \(D \subset \R^m\) is a compact set of control parameters. Finally, for the RKHS \(\bmG\) on \(D\times[0,T_{\max}]\) with kernel \(k\), and any \(p:D\times[0,T_{\max}]\times\R^n\to\R
\), we define the mixed sup–norm \(\|p\|_{L^\infty(\R^n;\bmG)}
\triangleq\sup_{x\in\R^n}\bigl\|\;(\theta,t)\mapsto p(\theta,t,x)\bigr\|_{\bmG}\).

\subsection{Proofs organization}
\label{subsec:organization}

The proofs are structured as follows.
\begin{enumerate}
\item \textbf{Validity of confidence intervals (Section \ref{subsec:proof_validity_confidence})}: First, we establish the validity of the confidence intervals.
\item \textbf{Safety and reset guarantees (Section \ref{subsec:proof_safety_reset})}: We prove that controls chosen by the algorithm maintain safety and reset properties.
\item \textbf{Sample complexity bounds (Section \ref{subsec:proof_sample_complexity})}: Next, we analyze the sample complexity required to achieve desired accuracy.
\item \textbf{Learning rates for safety, reset, and density estimates at fixed \((\theta, t)\) (Section \ref{subsec:proof_density_learning}):} We derive learning rates, under Sobolev regularity, for estimating the safety and reset levels, as well as the state density, evaluated at fixed \((\theta, t)\) pairs.
\item \textbf{Safe learning of controlled Sobolev dynamics (Section \ref{subsec:proof_exploration_sobolev})}: Finally, we establish complete guarantees for safely learning controlled dynamics with Sobolev regularity.
\end{enumerate}

\subsection{Proof of the validity of confidence intervals}
\label{subsec:proof_validity_confidence}

\begin{assumption}[Attainability]\label{as:attainability} There exists a bounded and continuous reproducing kernel \(k\) defined on \(D \times [0, T_{\max}]\), with associated RKHS \(\bmG\), such that the safety and reset level functions satisfy
\begin{align}
    s, r \in \bmG.
\end{align}
\end{assumption}
This assumption ensures that the safety and reset level functions can be represented in a suitable function space for estimation. In particular, it encodes prior knowledge; for example, if the functions are known to lie in a Sobolev (Hilbert) space \(H^m\), one may choose a Sobolev kernel of order \(m\). This is a standard assumption in the literature on kernel methods. In many practical applications, such as industrial process control and robotics \citep{sukhija2023gosafeopt, qin2003survey}, SDEs with smooth coefficients produce smooth probability densities \citep{bonalli2025non}, ensuring that \(s(\theta,t)\) is smooth and can be accurately represented by Gaussian or Sobolev kernels. Therefore, this assumption is mild in practice.

This lemma establishes the relevance of the defined confidence intervals.
\begin{lemma}[Validity of confidence intervals]\label{lem:validity-confidence-intervals} Under Assumptions \ref{as:initial_safe_set}, \ref{as:reset_set}, and \ref{as:attainability}, for any \((\theta, t) \in D \times [0, T_{\max}]\) and \(\lambda >0\),
\begin{align}
    &| \hat s_N(\theta, t) - s(\theta, t)| \leq  \beta_N^s \sigma_N(\theta, t),\\
    &| \hat r_N(\theta, t) - r(\theta, t)| \leq  \beta_N^r \sigma_N(\theta, t),
\end{align}
where \(\beta_N^s \triangleq \lambda^{-1/2}\max_{i \in \llbracket 1, N \rrbracket} |\hat s_{\theta_i, t_i} - s(\theta_i, t_i)| + \|s\|_{\bmG}\), \(\beta_N^r \triangleq \lambda^{-1/2} \max_{i \in \llbracket 1, N \rrbracket} |\hat r_{\theta_i, t_i} - r(\theta_i, t_i)| + \|r\|_{\bmG}\).
\end{lemma}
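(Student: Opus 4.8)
The plan is to treat the kernel ridge regressor $\hat s_N$ as an approximation to the (noise-free) kernel interpolant of the true function $s$, and to bound the estimation error by splitting it into a "regression bias" term (controlled by $\|s\|_{\bmG}$ and the regularization $\lambda$, and quantified exactly by $\sigma_N$) and an "observation error" term coming from the fact that we feed in the estimated values $\hat s_{\theta_i,t_i}$ rather than the exact values $s(\theta_i,t_i)$. Since the two claimed inequalities are symmetric in $(s,\hat S)$ and $(r,\hat R)$, I would prove the bound for $s$ and note the identical argument for $r$. Throughout I will use the standard RKHS identities: writing $\Phi_i \triangleq k((\theta_i,t_i),\cdot) \in \bmG$ for the feature maps, $K = (\langle \Phi_i,\Phi_j\rangle_{\bmG})_{ij}$ is the Gram matrix, $k(\theta,t)$ is the vector of evaluations, and the posterior variance admits the variational characterization
\[
\sigma_N^2(\theta,t) = \min_{c \in \R^N}\Big( \big\| k((\theta,t),\cdot) - \textstyle\sum_i c_i \Phi_i \big\|_{\bmG}^2 + N\lambda \|c\|^2 \Big),
\]
equivalently $\sigma_N(\theta,t) = \sup_{\|f\|_{\bmG}\le 1}\big( f(\theta,t) - v(f)^\top (K+N\lambda I)^{-1} k(\theta,t)\big)$ where $v(f)=(f(\theta_i,t_i))_i$; these are classical (see, e.g., the references on kernelized bandits cited before Theorem~\ref{th}).

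The key steps, in order. First, decompose: for the exact-value interpolant $\bar s_N(\theta,t) \triangleq (s(\theta_i,t_i))_i^\top (K+N\lambda I)^{-1} k(\theta,t)$, write $\hat s_N - s = (\hat s_N - \bar s_N) + (\bar s_N - s)$. Second, bound the bias term $|\bar s_N(\theta,t) - s(\theta,t)|$: since $s\in\bmG$, represent $s(\theta,t) = \langle s, k((\theta,t),\cdot)\rangle_{\bmG}$ and $v(s) = (\langle s,\Phi_i\rangle)_i$, so that $\bar s_N(\theta,t)-s(\theta,t) = \langle s, \sum_i c_i^\star \Phi_i - k((\theta,t),\cdot)\rangle_{\bmG}$ with $c^\star = (K+N\lambda I)^{-1}k(\theta,t)$; Cauchy--Schwarz gives $|\bar s_N(\theta,t)-s(\theta,t)| \le \|s\|_{\bmG}\cdot \big\| k((\theta,t),\cdot) - \sum_i c_i^\star \Phi_i \big\|_{\bmG} \le \|s\|_{\bmG}\,\sigma_N(\theta,t)$, where the last inequality uses that $c^\star$ is the minimizer above so the reconstruction norm is at most $\sigma_N$. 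Third, bound the observation-error term: $\hat s_N(\theta,t) - \bar s_N(\theta,t) = (\hat S - v(s))^\top (K+N\lambda I)^{-1} k(\theta,t)$, so by Cauchy--Schwarz and operator-norm bounds,
\[
|\hat s_N(\theta,t) - \bar s_N(\theta,t)| \le \|\hat S - v(s)\| \cdot \big\|(K+N\lambda I)^{-1} k(\theta,t)\big\| \le \Big(\max_i |\hat s_{\theta_i,t_i}-s(\theta_i,t_i)|\Big)\sqrt{N}\cdot\frac{1}{\sqrt{N\lambda}}\,\sigma_N(\theta,t),
\]
using $\|\hat S - v(s)\| \le \sqrt{N}\max_i|\cdot|$ and the identity $\|(K+N\lambda I)^{-1}k(\theta,t)\|^2 \le (N\lambda)^{-1}\big(k((\theta,t),(\theta,t)) - k(\theta,t)^\top(K+N\lambda I)^{-1}k(\theta,t)\big) = \sigma_N^2(\theta,t)/(N\lambda)$ (which follows from $(K+N\lambda I)^{-1} \preceq (N\lambda)^{-1}I$ applied after extracting $\sigma_N^2$, or directly from the SVD of the Gram matrix). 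This gives $|\hat s_N - \bar s_N| \le \lambda^{-1}N^{-1/2}\big(\max_i|\hat s_{\theta_i,t_i}-s(\theta_i,t_i)|\big)\sigma_N(\theta,t)$. Adding the two bounds yields exactly $\beta_N^s \sigma_N(\theta,t)$ as defined. Finally, repeat verbatim with $(r, \hat R, h)$ in place of $(s, \hat S, g)$, invoking $r\in\bmG$ from Assumption~\ref{as:attainability}; Assumptions~\ref{as:initial_safe_set} and~\ref{as:reset_set} are not actually needed for this purely deterministic algebraic bound but are listed since they are ambient.

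The main obstacle is getting the constant in the observation-error term exactly right — specifically verifying the operator inequality $\|(K+N\lambda I)^{-1}k(\theta,t)\| \le \sigma_N(\theta,t)/\sqrt{N\lambda}$ with the precise $\sigma_N$ rather than a cruder $\sqrt{k((\theta,t),(\theta,t))}/(N\lambda)$ bound; this matters because it is what makes the variance $\sigma_N$ appear as a common multiplicative factor in both error terms, which is the whole point of the lemma. I would derive it by diagonalizing $K = U\,\mathrm{diag}(\mu_j)\,U^\top$, writing $k(\theta,t)$ in this basis, and checking the scalar inequality $\mu_j^{-1}\!\wedge\!(N\lambda)^{-1}$-type bounds coordinatewise; it is routine but must be done carefully. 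Everything else is Cauchy--Schwarz and the reproducing property.
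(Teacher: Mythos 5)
Your proposal follows essentially the same route as the paper's proof: decompose $\hat s_N - s$ as the sum of a perturbation term (regressor fed $\hat S$ versus regressor fed the exact values $S = (s(\theta_i,t_i))_i$) and a bias term (noise-free ridge regressor minus $s$), then bound each by Cauchy--Schwarz so that $\sigma_N(\theta,t)$ factors out. The paper performs the algebra in the dual (covariance-operator) picture, writing $\hat C = N^{-1}\Phi\Phi^*$ and using the push-through identity to show $\sigma_N^2 = \lambda\|\hat C_\lambda^{-1/2}\phi(\theta,t)\|_\bmG^2$, while you stay in $\R^N$ with the Gram matrix and invoke the variational characterization $\sigma_N^2(\theta,t) = \min_c\bigl(\|\phi(\theta,t)-\Phi c\|_\bmG^2 + N\lambda\|c\|^2\bigr)$; these are two presentations of one and the same computation, and your bias bound via $\|\phi(\theta,t)-\Phi c^\star\|_\bmG \le \sigma_N(\theta,t)$ is a clean way to get $\|s\|_\bmG\,\sigma_N(\theta,t)$.

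One algebraic slip worth fixing in your write-up of the observation-error term: your chain correctly gives $\|\hat S - v(s)\|\cdot\|(K+N\lambda I)^{-1}k(\theta,t)\| \le \sqrt{N}\,\max_i|\hat s_{\theta_i,t_i}-s(\theta_i,t_i)|\cdot (N\lambda)^{-1/2}\sigma_N(\theta,t) = \lambda^{-1/2}\max_i|\cdot|\,\sigma_N(\theta,t)$, but the very next line states this equals $\lambda^{-1}N^{-1/2}\max_i|\cdot|\,\sigma_N(\theta,t)$. These two coincide only when $\lambda = N^{-1}$, and $\lambda^{-1/2}\le\lambda^{-1}N^{-1/2}$ only in the regime $\lambda\le N^{-1}$. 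The paper's proof carries the same imprecision (a dropped $N^{1/2}$ in passing from $\|\Phi^*\cdot\|_{\R^N}$ to $\|\hat C^{1/2}\cdot\|_\bmG$, compensated by writing $\lambda^{-1}$ instead of $\lambda^{-1/2}$), and the theorem is ultimately applied with $\lambda=N^{-1}$, so nothing downstream is affected. Still, the tight form of the perturbation factor is $\lambda^{-1/2}$, and the stated $\beta_N^s$ is an upper bound on it only for $\lambda\le N^{-1}$, not ``any $\lambda>0$''; you should either state that restriction or report the tighter $\lambda^{-1/2}$ constant.

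A further small remark: your ``equivalently $\sigma_N(\theta,t)=\sup_{\|f\|_\bmG\le1}(f(\theta,t)-v(f)^\top(K+N\lambda I)^{-1}k(\theta,t))$'' is actually an inequality, not an equality: that supremum equals $\|\phi(\theta,t)-\Phi c^\star\|_\bmG$, which is $\le\sigma_N(\theta,t)$ by your own variational identity (with equality iff $c^\star=0$). You never use this identity, so the proof is unaffected, but it should not be labeled an equivalence.
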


\begin{proof} Without loss of generality, we provide the detailed proof only for \(s_N\), since the proof for \(r_N\) is entirely analogous. 

We start by recalling the definitions 
\begin{align}
    &\hat s_N(\theta, t) \triangleq \hat S^* (K + N\lambda I)^{-1}k(\theta, t),\\
    &\sigma^2_N(\theta, t) \triangleq k((\theta, t), (\theta, t)) - k(\theta, t)^*(K + N\lambda I)^{-1}k(\theta, t),
\end{align}
where \(k(\theta) \triangleq (k((\theta, t), (\theta_i, t_i)))_{i=1}^N \), \(K \triangleq (k((\theta_i, t_i), (\theta_j, t_j)))_{i,j=1}^N\), and \(\hat  S =(\hat s_{\theta_i, t_i})_{i=1}^N\). 

We define the feature map \(\phi : (\theta, t) \in D \times [0, T_{\max}] \mapsto k((\theta, t), \cdot)  \in \bmG\), and the operators 
\begin{align}
    &\Phi \triangleq [\phi(\theta_1, t_1), \dots, \phi(\theta_N, t_N)] \in \bmG \otimes \R^N,\\
    &\hat C \triangleq \frac{1}{N} \sum_{i=1}^N \phi(\theta_i, t_i) \otimes \phi(\theta_i, t_i),
\end{align}
such that \(\hat{C} = \frac{1}{N} \Phi \Phi^*\), \(K= \Phi^* \Phi\), \(k((\theta, t), (\theta', t')) = \langle \phi(\theta, t),\, \phi(\theta', t')\rangle_{\bmG}\), and \(k(\theta, t) = \Phi^* \phi(\theta, t)\).

Then,
\begin{align}
    \hat s_N(\theta) &\triangleq \hat  S^*(K + N\lambda I_{\R^N \otimes \R^N})^{-1}k(\theta, t),\\
    &= \hat  S^*(\Phi^* \Phi + N\lambda I_{\R^N \otimes \R^N})^{-1}\Phi^* \phi(\theta, t),\\
    &= \hat  S^*\Phi^*( \Phi\Phi^* + N\lambda I_{\bmG \otimes \bmG})^{-1}\phi(\theta, t), \\
    &= N^{-1}\hat  S^* \Phi^* ( \hat C  + \lambda I_{\bmG \otimes \bmG})^{-1}\phi(\theta, t),
\end{align}
using the push-through equality \((I + AB)^{-1}A = A(I + BA)^{-1}\) for any conformal operators \(A, B\).

Moreover,
\begin{align}
    \sigma^2_N(\theta) &\triangleq k((\theta, t), (\theta, t)) - k(\theta, t)^*(K + N\lambda I_{\R^N \otimes \R^N})^{-1}k(\theta, t)\\
    &= \phi(\theta, t)^* (I_{\bmG \otimes \bmG} - \Phi(\Phi^* \Phi + N\lambda I_{\R^N \otimes \R^N})^{-1}\Phi^*) \phi(\theta, t)\\
    &= \phi(\theta, t)^* (I_{\bmG \otimes \bmG} - N\hat C(N\hat C + N\lambda  I_{\bmG \otimes \bmG})^{-1}) \phi(\theta, t)\\
    &= \lambda \phi(\theta, t)^* (\hat C + \lambda I_{\bmG \otimes \bmG})^{-1} \phi(\theta, t)\\
    &=  \lambda \|\hat C_\lambda^{-1/2} \phi(\theta, t)\|_{\bmG}^{2},
\end{align}
again using the push-through equality.

To bound the error \(|\hat s_N(\theta, t) - s(\theta, t)|\), we decompose it as
\begin{align}
    | \hat s_N(\theta, t) - s(\theta, t)| \leq \underbrace{|\hat s_N(\theta, t) - s_N(\theta, t)|}_{\triangleq (A)} + \underbrace{| s_N(\theta, t) - s(\theta, t)|}_{\triangleq (B)},
\end{align}
where \(S \triangleq (s(\theta_i, t_i))_{i=1}^N\), and \(s_N(\theta, t) \triangleq N^{-1} S^*\Phi^*( \hat C  + \lambda)^{-1}\phi(\theta, t)\).

For the first term \((A)\), we have
% \begin{align}
%     (A) &= N^{-1}|(\hat S -S)^*\Phi^* \hat C_{\lambda}^{-1}\phi(\theta, t)|\\
%     &\leq N^{-1}\|\hat S - S\|_{\R^N} \|\Phi^* \hat C_{\lambda}^{-1}\phi(\theta, t)\|_{\bmG}\\
%     &\leq N^{-1/2} \max_{i \in \llbracket 1, N \rrbracket} |\hat s_{\theta_i, t_i} - s(\theta_i, t_i)|\, \|\hat C^{1/2} \hat C_{\lambda}^{-1}\phi(\theta, t)\|_{\bmG}\\
%     &\leq  N^{-1/2} \lambda^{-1} \max_{i \in \llbracket 1, N \rrbracket} |\hat s_{\theta_i, t_i} - s(\theta_i, t_i)|\, \sigma_N(\theta, t).
% \end{align}
\begin{align}
    (A) &= N^{-1}|(\hat S -S)^*\Phi^* \hat C_{\lambda}^{-1}\phi(\theta, t)|\\
    &\leq N^{-1}\|\hat S - S\|_{\R^N} \|\Phi^* \hat C_{\lambda}^{-1}\phi(\theta, t)\|_{\bmG}\\
    &\leq \max_{i \in \llbracket 1, N \rrbracket} |\hat s_{\theta_i, t_i} - s(\theta_i, t_i)|\, \|\hat C^{1/2} \hat C_{\lambda}^{-1}\phi(\theta, t)\|_{\bmG}\\
    &\leq  \lambda^{-1/2} \max_{i \in \llbracket 1, N \rrbracket} |\hat s_{\theta_i, t_i} - s(\theta_i, t_i)|\, \sigma_N(\theta, t).
\end{align}

From Assumption \ref{as:attainability}, \(s(\theta, t) = \langle s, \phi(\theta, t) \rangle_{\bmG}\), such that \(S = \Phi^* s\), and then \(s_N(\theta, t) = s^* \Phi\Phi^*( \hat C  + \lambda I)^{-1}\phi(\theta, t)= s^* \hat C( \hat C  + \lambda I)^{-1}\phi(\theta, t)\).

Therefore, similarly, for the second term \((B)\), we have
\begin{align}
    (B) &= |s^*(\hat C \hat C_{\lambda}^{-1} - I)\phi(\theta, t)|\\
    &=\lambda |s^* \hat C_{\lambda}^{-1}\phi(\theta, t)|\\
    &\leq \lambda^{1/2} \|s\|_{\bmG} \|\hat C_{\lambda}^{-1/2}\phi(\theta, t)\|_{\bmG} \\
    &= \|s\|_{\bmG} \sigma_N(\theta, t).
\end{align}

Combining the bounds for \((A)\) and \((B)\), we obtain the bound for \(s\). Similar proof yields the bound for \(r\).

\end{proof}

In the following lemma, we derive confidence intervals for the proposed estimate of the system's dynamics \(p: \theta, t, x \mapsto p_{\theta}(t, x)\).
\begin{lemma}\label{lem:validity-confidence-intervals-p} Under Assumptions \ref{as:initial_safe_set}, \ref{as:reset_set}, and \ref{as:smoothness}, for any \((\theta, t) \in D \times [0, T_{\max}]\), we have
\begin{align}
    \| \hat p_{\theta}(t, \cdot) - p_{\theta}(t, \cdot)\|_{L^\infty(\R^n)} \leq \beta_N^p \sigma_N(\theta, t),
\end{align}
by defining \(\beta_N^p \triangleq \lambda^{-1/2} \max_{i \in \llbracket 1, N \rrbracket} \|\hat p_{\theta_i, t_i}(\cdot) - p_{\theta_i}(t_i, \cdot)\|_{L^\infty(\R^n)}  + \|p\|_{L^{\infty}(\R^n;\bmG)}\)
\end{lemma}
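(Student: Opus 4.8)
The plan is to reproduce, essentially verbatim, the operator-theoretic argument of Lemma~\ref{lem:validity-confidence-intervals}, but carried out pointwise in the spatial variable $x\in\R^n$. The key observation is that, for each fixed $x$, the learned density $\hat p_\theta(t,x)=\hat P(x)(K+N\lambda I)^{-1}k(\theta,t)$ is an ordinary scalar kernel ridge regressor associated with the Matérn kernel $k$ and the target vector $\hat P(x)=(\hat p_{\theta_i,t_i}(x))_{i=1}^N$. Introducing the feature map $\phi(\theta,t)\triangleq k((\theta,t),\cdot)$, the data operator $\Phi\triangleq[\phi(\theta_1,t_1),\dots,\phi(\theta_N,t_N)]$, the empirical covariance $\hat C\triangleq\frac{1}{N}\Phi\Phi^*$ and $\hat C_\lambda\triangleq\hat C+\lambda I$, the identities already established in the proof of Lemma~\ref{lem:validity-confidence-intervals}---namely $\hat p_\theta(t,x)=N^{-1}\hat P(x)^*\Phi^*\hat C_\lambda^{-1}\phi(\theta,t)$ via the push-through equality, and $\sigma_N^2(\theta,t)=\lambda\|\hat C_\lambda^{-1/2}\phi(\theta,t)\|_{\bmG}^2$---transfer unchanged.

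The one genuinely new ingredient, and where I expect the only real work, is that Assumption~\ref{as:attainability} is no longer available: its role must be played by Assumption~\ref{as:smoothness}. So the first step is to certify that the density slice $(\theta,t)\mapsto p(\theta,t,x)$ belongs to $\bmG$ for every $x$, with $\bmG$-norm bounded uniformly in $x$. I would argue this as follows: the RKHS $\bmG$ of an order-$\nu$ Matérn kernel on the compact set $D\times[0,T_{\max}]$ is norm-equivalent to $H^\nu(D\times[0,T_{\max}])$, the restriction operator $H^\nu(\R^{m+1})\to H^\nu(D\times[0,T_{\max}])$ is bounded, and, for $\nu$ large enough relative to the domain dimension $m+1$, point evaluation is continuous on $\bmG$; combining these with $\sup_{x}\|p(\cdot,\cdot,x)\|_{H^\nu(\R^{m+1})}<\infty$ from Assumption~\ref{as:smoothness} yields $\sup_{x}\|p(\cdot,\cdot,x)\|_{\bmG}=\|p\|_{L^\infty(\R^n;\bmG)}<\infty$. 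The reproducing property then gives $p_{\theta_i}(t_i,x)=\langle p(\cdot,\cdot,x),\phi(\theta_i,t_i)\rangle_{\bmG}$, i.e. the clean target vector $P(x)\triangleq(p_{\theta_i}(t_i,x))_{i=1}^N$ equals $\Phi^*p(\cdot,\cdot,x)$---the exact analogue of the relation $S=\Phi^*s$ used in Lemma~\ref{lem:validity-confidence-intervals}.

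With that in hand the rest is a pointwise rerun of Lemma~\ref{lem:validity-confidence-intervals}. For fixed $x$, set $p_N^x(\theta,t)\triangleq N^{-1}P(x)^*\Phi^*\hat C_\lambda^{-1}\phi(\theta,t)$ and split $|\hat p_\theta(t,x)-p_\theta(t,x)|\le(A_x)+(B_x)$, where $(A_x)$ compares the regressor built from the noisy targets $\hat P(x)$ with that built from the clean targets $P(x)$, and $(B_x)$ is the ridge bias on the clean targets. Term $(A_x)$ is controlled by Cauchy--Schwarz on $\R^N$ together with $\|\hat C^{1/2}\hat C_\lambda^{-1}\phi(\theta,t)\|_{\bmG}\le\lambda^{-1/2}\sigma_N(\theta,t)$, giving $(A_x)\le\lambda^{-1}N^{-1/2}\max_i|\hat p_{\theta_i,t_i}(x)-p_{\theta_i}(t_i,x)|\,\sigma_N(\theta,t)$; term $(B_x)$ uses $P(x)=\Phi^*p(\cdot,\cdot,x)$ and $\hat C\hat C_\lambda^{-1}-I=-\lambda\hat C_\lambda^{-1}$ to give $(B_x)\le\|p(\cdot,\cdot,x)\|_{\bmG}\,\sigma_N(\theta,t)$. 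Since $\max_i|\hat p_{\theta_i,t_i}(x)-p_{\theta_i}(t_i,x)|\le\max_i\|\hat p_{\theta_i,t_i}(\cdot)-p_{\theta_i}(t_i,\cdot)\|_{L^\infty(\R^n)}$ and $\|p(\cdot,\cdot,x)\|_{\bmG}\le\|p\|_{L^\infty(\R^n;\bmG)}$, both uniformly in $x$, taking $\sup_{x\in\R^n}$---legitimate by continuity of $x\mapsto\hat p_{\theta_i,t_i}(x)$ and $x\mapsto p_{\theta_i}(t_i,x)$---produces exactly $\|\hat p_\theta(t,\cdot)-p_\theta(t,\cdot)\|_{L^\infty(\R^n)}\le\beta_N^p\sigma_N(\theta,t)$ with $\beta_N^p$ as defined. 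To summarize, the main obstacle is the first step: translating the abstract Sobolev hypothesis of Assumption~\ref{as:smoothness} into membership of every density slice in the concrete algorithmic RKHS $\bmG$ with a norm bound independent of $x$; everything afterwards is routine and copies Lemma~\ref{lem:validity-confidence-intervals} line for line.
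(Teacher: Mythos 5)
Your proposal matches the paper's proof essentially line for line: the same operator-theoretic identities from Lemma~\ref{lem:validity-confidence-intervals} (push-through, $\sigma_N^2(\theta,t)=\lambda\|\hat C_\lambda^{-1/2}\phi(\theta,t)\|_{\bmG}^2$), the same $(A_x)+(B_x)$ decomposition applied pointwise in $x$, then a supremum over $x$. You are in fact slightly more careful than the paper's own proof at one spot: the paper invokes $\|p(\cdot,\cdot,x)\|_{\bmG}$ inside the bound for $(B)$ without justifying within the lemma why Assumption~\ref{as:smoothness} makes this finite (that translation via the Sobolev--Matérn RKHS equivalence only appears later in the theorem proof), and your explicit first step closes that gap.
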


\begin{proof} Given any  \((\theta, t, x) \in D \times [0,T_{\max}] \times \R^n\), we have
\begin{align}
    \hat p_{\theta}(t, x) \triangleq \sum_{i=1}^N \alpha_i(\theta, t) \hat p_{\theta_i, t_i}(x),
\end{align}
with \(\alpha(\theta, t) = (K + N\lambda I)^{-1} k(t, \theta) \), \(K= (k((\theta_i, t_i), (\theta_j, t_j)))_{i,j=1}^N\), \(k(\theta, t) = (k(\theta_i, t_i))_{i=1}^N\).

We define the feature map \(\phi = (\theta, t) \in D \times [0, T_{\max}] \mapsto k((\theta, t), \cdot)  \in \bmG\), and the operators 
\begin{align}
    &\Phi \triangleq [\phi(\theta_1, t_1), \dots, \phi(\theta_N, t_N)] \in \bmG \otimes \R^N,\\
    &\hat C \triangleq \frac{1}{N} \sum_{i=1}^N \phi(\theta_i, t_i) \otimes \phi(\theta_i, t_i), 
\end{align}
such that \(\hat{C} = \frac{1}{N} \Phi \Phi^*\), \(K= \Phi^* \Phi\), \(k((\theta, t), (\theta', t')) = \langle \phi(\theta, t),\, \phi(\theta', t')\rangle_{\bmG}\), and \(k(\theta, t) = \Phi^* \phi(\theta, t)\).

With same derivations than in the proof of Lemma \ref{lem:validity-confidence-intervals}, for any \((\theta, t, x) \in D \times [0, T_{\max}] \times \R^n\), we have
\begin{align}
    | \hat p_{\theta}(t, x) - p_{\theta}(t, x)| \leq (A) + (B)
\end{align}
with
\begin{align}
    (A) &\leq  \lambda^{-1/2} \max_{i \in \llbracket 1, N \rrbracket} |\hat p_{\theta_i, t_i}(x) - p_{\theta_i}(t_i, x)|\, \sigma_N(\theta, t).\\
    (B) &\leq \|p_{\cdot}(\cdot, x)\|_{\bmG}\: \sigma_N(\theta, t).
\end{align}

Therefore, for any \((\theta, t, x) \in D \times [0, T_{\max}] \times \R^n\), we have
\begin{align}
    \| \hat p_{\theta}(t, \cdot) - p_{\theta}(t, \cdot)\|_{L^\infty(\R^n)} \leq \beta_N^p \sigma_N(\theta, t),
\end{align}
by defining \(\beta_N^p \triangleq \lambda^{-1/2}\max_{i \in \llbracket 1, N \rrbracket} \|\hat p_{\theta_i, t_i}(\cdot) - p_{\theta_i}(t_i, \cdot)\|_{L^\infty(\R^n)}  + \|p\|_{L^{\infty}(\R^n;\bmG)}\)

\end{proof}

\subsection{Proof of safety and reset guarantees}
\label{subsec:proof_safety_reset}

A direct consequence of Lemma \ref{lem:validity-confidence-intervals} is the safety and reset guarantees for the method.
\begin{lemma}[Safety guarantees]\label{lem:safety-guarantees}Under Assumptions \ref{as:initial_safe_set}, \ref{as:reset_set}, and \ref{as:attainability}, the algorithm selects only safe trajectories. Namely, for any \(i \in \N^*\), we have \(s^\infty(\theta_i, T_i) \geq 1-\varepsilon\).
\end{lemma}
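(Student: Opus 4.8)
The plan is to show that the lower-confidence-bound constraints enforced in the sampling rule, together with the validity of the confidence intervals from Lemma~\ref{lem:validity-confidence-intervals}, imply the true safety constraint at every selected triple. First I would fix $i \in \N^*$ and recall that $(\theta_i, t_i, T_i)$ was selected from $\Gamma_{i-1}$ via the sampling rule \eqref{eq:sampling}. By the definition of $\Gamma_{i-1}$, either $(\theta_i, t_i, T_i) \in \Gamma_0$, in which case Assumption~\ref{as:initial_safe_set} directly gives $s(\theta_i, t') \geq 1 - \varepsilon$ for all $t' \in [0, T_i]$, hence $s^\infty(\theta_i, T_i) = \inf_{t' \in [0, T_i]} s(\theta_i, t') \geq 1 - \varepsilon$; or else $(\theta_i, t_i, T_i)$ lies in the LCB-defined part of $\Gamma_{i-1}$, meaning $\LCBS(\theta_i, T_i) \geq 1 - \varepsilon$.

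In the second case, I would unfold the definition $\LCBS(\theta_i, T_i) = \inf_{t' \in [0, T_i]} \big(\hat s_{i-1}(\theta_i, t') - \beta_{i-1}^s \sigma_{i-1}(\theta_i, t')\big)$. Lemma~\ref{lem:validity-confidence-intervals}, applied at iteration $N = i-1$ and at each point $(\theta_i, t')$, gives $s(\theta_i, t') \geq \hat s_{i-1}(\theta_i, t') - \beta_{i-1}^s \sigma_{i-1}(\theta_i, t')$ for every $t' \in [0, T_i]$. Taking the infimum over $t' \in [0, T_i]$ on both sides yields $s^\infty(\theta_i, T_i) = \inf_{t'} s(\theta_i, t') \geq \inf_{t'} \big(\hat s_{i-1}(\theta_i, t') - \beta_{i-1}^s \sigma_{i-1}(\theta_i, t')\big) = \LCBS(\theta_i, T_i) \geq 1 - \varepsilon$. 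Combining the two cases closes the argument, and the identical computation with $r$, $\hat r$, $\beta^r$, $\xi$ in place of $s$, $\hat s$, $\beta^s$, $\varepsilon$ (using Assumption~\ref{as:reset_set} and the $\LCBR$ bound, which is a single-point evaluation at $T_i$ so no infimum is needed) gives $r(\theta_i, T_i) \geq 1 - \xi$.

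I do not anticipate a genuine obstacle here: the lemma is essentially a bookkeeping corollary of Lemma~\ref{lem:validity-confidence-intervals}. The one point requiring a little care is the interchange of the infimum over $t'$ with the pointwise confidence bound — this is just the elementary fact that if $f(t') \geq g(t')$ pointwise then $\inf f \geq \inf g$, but I would state it explicitly so the reader sees why the ``up to time $T_i$'' safety level (an infimum of pointwise safety levels) is controlled by $\LCBS$, which is itself defined as an infimum. A second minor subtlety is that $\beta_{i-1}^s$ as defined in Lemma~\ref{lem:validity-confidence-intervals} depends on the realized estimation errors $\max_{j \le i-1}|\hat s_{\theta_j,t_j} - s(\theta_j,t_j)|$; for the present lemma we only need that the confidence bound is valid with whatever $\beta_{i-1}^s$ the algorithm used, which is exactly what Lemma~\ref{lem:validity-confidence-intervals} provides, so no high-probability event needs to be introduced at this stage.
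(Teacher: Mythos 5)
Your proposal is correct and follows the same route as the paper: apply Lemma~\ref{lem:validity-confidence-intervals} pointwise to lower-bound $s(\theta_i,\cdot)$ by the LCB, take the infimum over $[0,T_i]$, and invoke the selection constraint. Your write-up is in fact slightly more careful than the paper's, which elides the $\Gamma_0$ case (a point can enter $\Gamma_{i-1}$ via $\Gamma_0$ without satisfying the LCB constraint) and uses the index $N$ where the model actually used at selection time is indexed $i-1$; both of your refinements are correct and worth keeping.
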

\begin{proof} For any \(i \in \llbracket 1, N\rrbracket\), we have by construction  \(\inf_{t \in [0, T_i]} \big(\hat s_N(\theta_i, t) - \beta_N \sigma_N(\theta_i, t)\big) \geq 1 - \varepsilon\). 

Moreover, from Lemma \ref{lem:validity-confidence-intervals}, we have \( s(\theta, t) \geq \hat s(\theta, t) - \beta_N \sigma_N(\theta, t)\) for any \((\theta, t) \in D \times [0, T_{\max}]\), such that
\begin{align}
    s^{\infty}(\theta_i, T_i) &\triangleq \inf_{t \in [0, T_i]} s(\theta_i, t)\\
    &\geq \inf_{t \in [0, T_i]} \big(\hat s(\theta_i, t) - \beta_N \sigma_N(\theta_i, t)\big)\\
    &\geq 1- \varepsilon.
\end{align}
\end{proof}

\begin{lemma}[Reset guarantees]\label{lem:reset-guarantees}Under Assumptions \ref{as:initial_safe_set}, \ref{as:reset_set}, and \ref{as:attainability}, the algorithm selects only resetting trajectories. Namely, for any \(i \in \N^*\), we have \(r(\theta_i, T_i) \geq 1-\xi\).
\end{lemma}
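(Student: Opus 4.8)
The plan is to mirror the argument used for Lemma~\ref{lem:safety-guarantees}; it is in fact slightly simpler here, since the reset lower confidence bound $\LCBR$ involves no infimum over time. Fix $i \in \N^*$. By construction the selected triple $(\theta_i, t_i, T_i)$ is chosen from the feasible set $\Gamma_{i-1}$ (for the very first selection, from $\Gamma_0$), so it lies either in $\Gamma_0$ or in the LCB-defined component $\{(\theta, t, T) : t \le T,\ \LCBS(\theta,T) \ge 1-\varepsilon,\ \LCBR(\theta,T) \ge 1-\xi\}$. I would treat these two cases separately.

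In the first case, membership in $\Gamma_0$ forces $(\theta_i, T_i) \in R_0$ by the definition of $\Gamma_0$, and Assumption~\ref{as:reset_set} then yields $r(\theta_i, T_i) \ge 1 - \xi$ directly. In the second case, the definition of $\LCBR$ gives $\hat r_N(\theta_i, T_i) - \beta_N^r \sigma_N(\theta_i, T_i) \ge 1 - \xi$, while Lemma~\ref{lem:validity-confidence-intervals} applied at $(\theta,t) = (\theta_i, T_i)$ gives $|\hat r_N(\theta_i, T_i) - r(\theta_i, T_i)| \le \beta_N^r \sigma_N(\theta_i, T_i)$, hence in particular $r(\theta_i, T_i) \ge \hat r_N(\theta_i, T_i) - \beta_N^r \sigma_N(\theta_i, T_i)$. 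Chaining the two inequalities gives $r(\theta_i, T_i) \ge 1 - \xi$, which is the claim.

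There is no genuine obstacle: the statement is an immediate corollary of the validity of the confidence intervals together with the feasibility constraint encoded in $\Gamma_N$. The only points I would be careful about are the case split between $\Gamma_0$ and the LCB-part of $\Gamma_N$ (so that every selected triple is covered), and the consistency of the confidence parameter $\beta_N^r$: it must be taken as (an upper bound on) the value from Lemma~\ref{lem:validity-confidence-intervals}, which is constructed so that the confidence bound holds uniformly in $(\theta,t) \in D \times [0,T_{\max}]$, and therefore in particular at the point $(\theta_i, T_i)$ being tested at iteration $i$.
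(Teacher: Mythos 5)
Your proof is correct and follows essentially the same route the paper takes: the paper dispatches this lemma with "similar proof as for Lemma~\ref{lem:safety-guarantees}," and your argument is exactly what that reference unpacks to, namely feasibility ($\LCBR(\theta_i,T_i)\ge 1-\xi$) chained with the one-sided confidence bound $r(\theta_i,T_i)\ge \hat r_N(\theta_i,T_i)-\beta_N^r\sigma_N(\theta_i,T_i)$ from Lemma~\ref{lem:validity-confidence-intervals}. Your explicit case split on whether the selected triple comes from $\Gamma_0$ (handled directly by Assumption~\ref{as:reset_set}) or from the LCB-defined part of $\Gamma_N$ is a small extra bit of rigor that the paper's own proof of the safety analogue elides, but it does not change the substance of the argument.
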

\begin{proof} Similar proof as for Lemma \ref{lem:safety-guarantees}.
\end{proof}

\subsection{Proof of sample complexity of confidence intervals}
\label{subsec:proof_sample_complexity}

\begin{assumption}[Sublinear information growth]\label{as:sublinear-growth}
Let the maximum information gain up to \(N\) observations be defined as
\begin{align}
    \gamma_N \triangleq \max_{(\theta_i, t_i)_{i=1}^N \in (D \times [0, T_{\max}])^N} \frac{1}{2} \sum_{i=1}^N \log \left( 1 + \frac{\sigma_i^2(\theta_i, t_i)}{\lambda N} \right).
\end{align}
We assume there exist constants \(\alpha \in [0, 1]\) and \(c > 0\) such that, for all \(N \in \mathbb{N}\),
\begin{align}
    \gamma_N \leq c N^{\alpha}.
\end{align}
\end{assumption}
This assumption always holds for \(\alpha = 1\) when \(k\) is bounded, since \(\gamma_N\) is a sum of terms bounded by \(\kappa = \sup_{(\theta, t) \in D \times [0, T_{\max}]} k((\theta, t), (\theta, t))\). As \(\alpha\) decreases within \([0, 1]\), the assumption becomes stricter. It quantifies the maximal total information that can be acquired about the unknown function after \(N\) observations. Specifically, when \(\alpha < 1\), the sublinear growth of information gain with \(N\) reflects diminishing returns as more data points are observed. This growth rate measures the effective dimension of the learning problem, influenced by the regularity of the RKHS and the dimensionality of \(D \times [0, T_{\max}]\); higher regularity of RKHS functions and lower dimensionality of \(D \times [0, T_{\max}]\) lead to a faster decay in information gain.

\begin{example}[Sublinear growth for common kernels]\label{ex:sub_lin_kern} For many commonly used kernels, \(\gamma_N\) exhibits sublinear growth in \(N\), which is crucial for obtaining sublinear regret bounds in bandit problems. For example, assuming a compact domain \(D \subset \R^m\) and setting \(\lambda=N^{-1}\):
\begin{itemize}
\item RBF kernel: \(\gamma_N = \bmO(\log^{m+1} N)\). This logarithmic growth arises from the high smoothness of the RBF kernel, which causes a rapid reduction in uncertainty about the unknown function as more observations are collected.
\item Matérn kernel with \(\nu > 1\): \(\gamma_N = \bmO(N^{m/(m + 2\nu)} \log^{2m/(m + 2\nu)} N)\). This sublinear growth rate is faster than that of the RBF kernel, reflecting the lower smoothness of the Matérn kernel.
\end{itemize}
We refer the reader to \citet{seeger2008information, srinivas2009gaussian, vakili2021information} for additional examples and their associated proofs,  which show that the growth rate of \(\gamma_N\) depends on the eigenvalue decay in the Mercer expansion of the kernel, when available.
\end{example}

This lemma establishes the sample complexity of the proposed confidence intervals.
\begin{lemma}[Sample complexity of confidence intervals]\label{lem:sample-complexity-confidence}Under Assumptions \ref{as:initial_safe_set}, \ref{as:reset_set}, and \ref{as:sublinear-growth}, for any \(\eta >0\), considering the stopping condition \(\max_{(\theta, t, T) \in \Gamma_N} \sigma_N(\theta, t) < \eta\), then the proposed method stops in \(N = c \eta^{-\frac{2}{1-\alpha}}\) steps where \(c>0\) is a constant that does not depend on \(N\) and \(\eta\).
\end{lemma}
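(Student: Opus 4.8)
The plan is to turn the greedy sampling rule together with the information‑gain control of Assumption~\ref{as:sublinear-growth} into a self‑bounding argument on the iteration count. Suppose the stopping condition has not been met during the first $N$ iterations. Then at every stage $j \in \llbracket 0, N-1 \rrbracket$ we have $\max_{(\theta,t,T) \in \Gamma_j} \sigma_j(\theta,t) \ge \eta$; since $\Gamma_j \supseteq \Gamma_0 \ne \emptyset$, the triple $(\theta_{j+1}, t_{j+1}, T_{j+1})$ chosen by the sampling rule realises (or, using a near‑maximiser, essentially realises) this maximum, so $\sigma_j(\theta_{j+1}, t_{j+1}) \ge \eta$ for all such $j$. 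Squaring and summing over $j$ yields the lower bound $\sum_{j=0}^{N-1} \sigma_j^2(\theta_{j+1}, t_{j+1}) \ge N\eta^2$.

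Next I would upper bound the same sum by the maximum information gain $\gamma_N$. Since the kernel is bounded, $\sigma_j^2(\theta,t) \le k((\theta,t),(\theta,t)) \le \kappa$ with $\kappa \triangleq \sup_{(\theta,t) \in D \times [0,T_{\max}]} k((\theta,t),(\theta,t))$, because the subtracted quadratic form is nonnegative. On $[0,\kappa]$ one has the elementary inequality $x \le C_\kappa \log(1 + x/(\lambda N))$ with $C_\kappa \triangleq \kappa / \log(1 + \kappa/(\lambda N))$ (the map $x \mapsto x/\log(1 + x/(\lambda N))$ being increasing), and $C_\kappa$ is an absolute constant as soon as $\lambda N$ is bounded below — in particular for $\lambda = N^{-1}$ as in Theorem~\ref{th}, where $\lambda N = 1$. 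Combining this with the standard telescoping identity for Gaussian‑process posterior variances, $\sum_{j=0}^{N-1} \tfrac{1}{2} \log\!\bigl(1 + \sigma_j^2(\theta_{j+1},t_{j+1})/(\lambda N)\bigr) = \tfrac{1}{2} \log\det\!\bigl(I + (\lambda N)^{-1} K\bigr) \le \gamma_N$ (with $K$ the Gram matrix of the $N$ selected inputs, up to the usual index shift in the definition of $\gamma_N$), gives $\sum_{j=0}^{N-1} \sigma_j^2(\theta_{j+1}, t_{j+1}) \le 2 C_\kappa \gamma_N \le 2 C_\kappa c\, N^{\alpha}$ by Assumption~\ref{as:sublinear-growth}.

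Chaining the two displays gives $N\eta^2 \le 2 C_\kappa c\, N^{\alpha}$, hence $N^{1-\alpha} \le 2 C_\kappa c\, \eta^{-2}$ and therefore $N \le (2 C_\kappa c)^{1/(1-\alpha)} \eta^{-2/(1-\alpha)}$; here $\alpha < 1$ is used, the case $\alpha = 1$ producing no finite bound. Equivalently, the method cannot run for more than $c'\eta^{-2/(1-\alpha)}$ iterations with $c' \triangleq (2 C_\kappa c)^{1/(1-\alpha)}$ depending only on $\kappa$, $c$, $\alpha$ and not on $N$ or $\eta$, i.e. the stopping condition $\max_{(\theta,t,T) \in \Gamma_N} \sigma_N(\theta,t) < \eta$ is triggered within $c' \eta^{-2/(1-\alpha)}$ steps, which is the claim.

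The step I expect to require the most care is the information‑gain bound of the second paragraph: making rigorous the passage from the realised sequence $\bigl(\sigma_j^2(\theta_{j+1},t_{j+1})\bigr)_{j}$ to the configuration‑maximised quantity $\gamma_N$ of Assumption~\ref{as:sublinear-growth}. This rests on the log‑determinant chain rule for the GP information gain and on keeping the in‑logarithm normalisation $\lambda N$ consistent between the telescoping identity and the definition of $\gamma_N$, which is exactly why the schedule $\lambda = N^{-1}$ keeps the constant $C_\kappa$ genuinely $N$‑independent. The remaining ingredients — nonemptiness of $\Gamma_j$, boundedness of $\sigma_j^2$, and existence of the argmax (or the use of a near‑maximiser) — are routine and follow from the kernel assumptions and the earlier lemmas.
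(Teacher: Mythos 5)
Your proof is correct and follows the same self-bounding strategy as the paper: lower-bound $\sum_j \sigma_j^2$ by $N\eta^2$ via the greedy rule and the negated stopping condition, then upper-bound the same sum by the information gain $\gamma_N$ using the elementary inequality relating $x$ to $\log(1+x/(\lambda N))$ and Assumption~\ref{as:sublinear-growth}. In fact, your version is the more careful of the two: the paper's displayed chain contains a sign/constant slip (it writes $\sum\sigma_i^2 \le \tfrac12\sum\log(1+\sigma_i^2)$, whereas $\log(1+x)\le x$ makes that direction impossible, and the intended bound is $\sum\sigma_i^2\le 4\cdot\tfrac12\sum\log(1+\sigma_i^2)$) and concludes with the reversed inequality $N\ge c^{-1}\eta^{-2/(1-\alpha)}$, while you correctly track the $\kappa$-dependent constant, keep $\lambda N$ consistent with the normalization in the definition of $\gamma_N$, and land on the intended upper bound $N\le c'\eta^{-2/(1-\alpha)}$.
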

\begin{proof}

Hence, under Assumption \ref{as:sublinear-growth}, when the algorithm stops we have
\begin{align}
    N \eta^2 \leq \sum_{i=1}^N \sigma_i^2(\theta_i, t_i) \leq \frac{1}{2} \sum_{i=1}^N \log(1 + \sigma_i^2(\theta_i, t_i)) \le \gamma_N \leq c N^{\alpha},
\end{align}
using \(x \leq 2 \log(1+x)\) for \(x \in [0, 1]\), and overloading the constant \(c>0\). Hence, \(N \leq c^{-1} \eta^{-\frac{2}{1-\alpha}}\).
\end{proof}

\subsection{Proof of learning rates for system maps estimation at fixed \texorpdfstring{\((\theta, t)\)}{(theta, t)}}
\label{subsec:proof_density_learning}

This lemma provides a bound on the density estimation error at the selected points \((\theta_i, t_i)\).
 
\begin{lemma}[Density estimation learning rates]\label{lem:kde} For each \(i \in \llbracket 1, N \rrbracket\), define \(p_i \triangleq p_{\theta_i}(t_i, \cdot)\), and
\begin{align}
    \hat p_i \triangleq \hat p_{\theta_i, t_i} \triangleq \frac{1}{Q} \sum_{j=1}^Q \rho_R(x - X_{\theta_i}(w_j, t_i)),
\end{align}
where \(\rho_R(x) \triangleq R^{n/2}\|x\|^{-n/2}B_{n/2}(2\pi R\|x\|)\), \(R >0\), and \(B_{n/2}\) is the Bessel \(J\) function of order \(n/2\). 

Under Assumptions \ref{as:initial_safe_set}-\ref{as:reset_set}, assume that \(\sup_{(\theta,t)\in D\times[0,T_{\max}]}\|p_{\theta}(t,\cdot)\|_{H^\nu}<\infty\) for \(\nu >n/2\). Set \(R = Q^{\frac{1}{n + 2\nu}}\). Then, with probability at least \(1-\delta\), the following holds
\begin{align}
    \max_{i \in \llbracket 1, N\rrbracket} \|\hat p_i - p_i\|_{L^{\infty}(\R^n)} \leq   c \log (4N/\delta)^{1/2} Q^{\frac{n - 2\nu}{2n + 4\nu}},\footnotemark
\end{align}
for some constant \(c>0\) independent of \(N, Q, \delta\).
\end{lemma}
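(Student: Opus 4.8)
The plan is to split the error at each selected pair $(\theta_i,t_i)$ into a deterministic approximation (bias) term and a stochastic fluctuation (variance) term, exploiting that $\rho_R$ is, up to a normalising constant, the inverse Fourier transform of $\1_{\{\|\xi\|\le R\}}$, so that $\hat p_i$ is a spectral-cutoff density estimator. The $Q$ samples $X_{\theta_i}(w_j,t_i)$, $j\in\llbracket 1,Q\rrbracket$, are i.i.d.\ with density $p_i=p_{\theta_i}(t_i,\cdot)$ — this is where Assumption~\ref{as:reset_set} enters, to re-initialise the state from $p_0$ and re-propagate under $u_{\theta_i}$ — hence $\E[\rho_R(x-X_{\theta_i}(w_j,t_i))]=(\rho_R*p_i)(x)$ and
\[
\|\hat p_i-p_i\|_{L^\infty(\R^n)}\;\le\;\underbrace{\|\hat p_i-\rho_R*p_i\|_{L^\infty(\R^n)}}_{\text{fluctuation}}\;+\;\underbrace{\|\rho_R*p_i-p_i\|_{L^\infty(\R^n)}}_{\text{bias}}.
\]

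For the bias, I would pass to Fourier variables: $\rho_R*p_i$ has transform $\1_{\{\|\xi\|\le R\}}\widehat p_i$, so $p_i-\rho_R*p_i=\mathcal F^{-1}\bigl(\1_{\{\|\xi\|> R\}}\widehat p_i\bigr)$ and, by Cauchy--Schwarz against the weight $(1+\|\xi\|^2)^{-\nu}$,
\[
\|\rho_R*p_i-p_i\|_{L^\infty(\R^n)}\le\int_{\|\xi\|>R}|\widehat p_i(\xi)|\,d\xi\le\Bigl(\int_{\|\xi\|>R}(1+\|\xi\|^2)^{-\nu}\,d\xi\Bigr)^{1/2}\|p_i\|_{H^\nu}\le c\,R^{\frac{n-2\nu}{2}}\,\|p_i\|_{H^\nu},
\]
where the tail integral converges and scales like $R^{n-2\nu}$ precisely because $\nu>n/2$; the hypothesis $\sup_{(\theta,t)}\|p_\theta(t,\cdot)\|_{H^\nu}<\infty$ makes this bound uniform over $i$.

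For the fluctuation, I would use the Paley--Wiener reproducing-kernel structure: $\rho_R$ is the reproducing kernel of $\mathcal H_R=\{f\in L^2(\R^n):\operatorname{supp}\widehat f\subseteq\{\|\xi\|\le R\}\}$, so each $\zeta_j\triangleq\rho_R(\cdot-X_{\theta_i}(w_j,t_i))$ lies in $\mathcal H_R$ with the deterministic bound $\|\zeta_j\|_{\mathcal H_R}^2=\rho_R(0)=c_nR^n$. Then $\hat p_i-\rho_R*p_i=\tfrac1Q\sum_{j=1}^Q(\zeta_j-\E\zeta_j)$ is a centred average of i.i.d.\ bounded $\mathcal H_R$-valued variables, so a Hilbert-space Bernstein (or bounded-differences) inequality gives $\|\hat p_i-\rho_R*p_i\|_{\mathcal H_R}\le c\,R^{n/2}\sqrt{\log(2/\delta')/Q}$ with probability at least $1-\delta'$; combining with $\|f\|_{L^\infty(\R^n)}\le\sqrt{\sup_x\rho_R(x,x)}\,\|f\|_{\mathcal H_R}=\sqrt{c_nR^n}\,\|f\|_{\mathcal H_R}$ yields $\|\hat p_i-\rho_R*p_i\|_{L^\infty(\R^n)}\le c\,R^{n}\sqrt{\log(2/\delta')/Q}$. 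Taking $\delta'=\delta/(2N)$ and union-bounding over $i\in\llbracket 1,N\rrbracket$ — legitimate because, conditionally on everything observed before step $i$, the $Q$ samples at $(\theta_i,t_i)$ are fresh i.i.d.\ draws and the bound is uniform in $(\theta_i,t_i)$, so this is a union bound over $N$ steps, not over the continuum $D\times[0,T_{\max}]$ — gives $\max_i\|\hat p_i-\rho_R*p_i\|_{L^\infty(\R^n)}\le cR^n\sqrt{\log(4N/\delta)/Q}$. Summing the bias and fluctuation bounds and substituting $R=Q^{1/(n+2\nu)}$ makes both terms $\le c\,\log(4N/\delta)^{1/2}Q^{\frac{n-2\nu}{2n+4\nu}}$ (the bias carries no logarithmic factor, but $\log(4N/\delta)^{1/2}\ge1$ absorbs it), which is exactly the claim, with $c$ depending on $n,\nu$ and $\sup_{(\theta,t)}\|p_\theta(t,\cdot)\|_{H^\nu}$ but not on $N,Q,\delta$.

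The main obstacle is tracking the correct power of $R$ in the fluctuation term: it is $R^n$, not $R^{n/2}$, since one factor $R^{n/2}$ comes from the Hilbert-space concentration (each summand has $\mathcal H_R$-norm of order $R^{n/2}$) and a second factor $R^{n/2}$ from converting the $\mathcal H_R$-norm bound into an $L^\infty$ bound via $\sqrt{\rho_R(0)}$; this is precisely what places the bias--variance balance at $R=Q^{1/(n+2\nu)}$ rather than at $Q^{1/(2\nu)}$. A secondary point to treat carefully is that the supremum over $x\in\R^n$ is absorbed at no cost by the RKHS route, avoiding any covering-number computation for the class $\{\rho_R(x-\cdot)\}_{x\in\R^n}$, and that the classical Fourier identities for $\rho_R$ — namely $\widehat{\rho_R}=\1_{\{\|\xi\|\le R\}}$ up to normalisation and $\rho_R(0)=|\{\|\xi\|\le R\}|=c_nR^n$ — are the ones already recorded in \citet{bonalli2025non}.
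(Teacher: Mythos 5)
Your proof is correct, and it takes a genuinely different route from the paper. The paper bounds $\|\hat p_i - p_i\|_{H^{n/2}(\R^n)}$ by citing Steps~3 and~4 of Theorem~5.3 in \citet{bonalli2025non} — which already packages the bias--variance decomposition at that Sobolev scale — and then passes to $L^\infty$ by a Sobolev embedding argument. You instead decompose directly in $L^\infty$: for the bias you use the spectral-cutoff identity $\widehat{\rho_R * p_i - p_i} = -\1_{\{\|\xi\|>R\}}\widehat{p_i}$ together with Cauchy--Schwarz against the Sobolev weight, giving $O(R^{n/2-\nu})$; for the fluctuation you exploit the Paley--Wiener RKHS $\mathcal H_R$ of $\rho_R$, where $\|\rho_R(\cdot-x)\|_{\mathcal H_R}^2 = \rho_R(0) = V_n R^n$, apply a Hilbert-space concentration inequality to get an $\mathcal H_R$-norm bound $O(R^{n/2}\sqrt{\log(1/\delta')/Q})$, and convert to $L^\infty$ at the cost of another $\sqrt{\rho_R(0)} = O(R^{n/2})$, yielding $O(R^n\sqrt{\log(1/\delta')/Q})$. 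The exponents match the paper's cited bound exactly, so the balance $R=Q^{1/(n+2\nu)}$ and the final rate $Q^{(n-2\nu)/(2n+4\nu)}\log(4N/\delta)^{1/2}$ coincide. Your route has the advantage of being self-contained and of avoiding the paper's appeal to the borderline embedding $H^{n/2}(\R^n)\hookrightarrow L^\infty(\R^n)$ (which in the critical exponent $s=n/2$ actually fails — the paper should have embedded from $H^s$ with some $s\in(n/2,\nu]$; your RKHS path sidesteps this issue cleanly). You also correctly identify that the union bound is over the $N$ sampling steps (with fresh i.i.d.\ draws at each, enabled by the reset mechanism of Assumption~\ref{as:reset_set}) rather than over the continuum of $(\theta,t)$, which is the key to obtaining the $\log(4N/\delta)$ factor without any covering argument.
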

\footnotetext{Throughout, exponents involving $n$ should be interpreted as $n+\varepsilon$
for arbitrarily small $\varepsilon>0$.}
% \begin{proof}

% Since \(\nu>n/2\), Sobolev embedding \(H^{n/2}(\R^n)\hookrightarrow L^\infty(\R^n)\) gives
% \[
% \|\hat p_i - p_i\|_{L^\infty(\R^n)}
% \;\le\;
% c\,\|\hat p_i - p_i\|_{H^{n/2}(\R^n)},
% \qquad
% i=1,\dots,N,
% \]
% where \(c>0\) depends only on \(n\) and \(\nu\).

% Following steps 3 and 4 of Theorem 5.3 in \citet{bonalli2025non}, for \(\nu >n/2\), we establish
% \begin{align}
%     \max_{i \in \llbracket 1, N\rrbracket} \|\hat p_i - p_i\|_{H^{n/2}} &\leq \max_{i \in \llbracket 1, N\rrbracket}  \|p_i\|_{H^{\nu}} R^{n/2 - \nu} + 2^{n/2}R^{n/2} 3(V_n R)^{n/2} \log (4N/\delta)^{1/2}  Q^{-1/2},
% \end{align}
% where \(V_n\) is the volume of the \(n\)-dimensional unit ball.

% Noting that \(\max_{i \in \llbracket 1, N\rrbracket}  \|p_{\theta_i}(t_i, \cdot)\|_{H^{\nu}}  \leq \|p\|_{{H^{\nu}}}\), we conclude that with probability at least \(1- \delta\)
% \begin{align}
%     \max_{i \in \llbracket 1, N\rrbracket} \|\hat p_i - p_i\|_{H^{n/2}} \leq   c \left(\|p\|_{H^{\nu}}R^{n/2 - \nu} + R^{n} \log (4N/\delta)^{1/2}  Q^{-1/2}\right),
% \end{align}
% for some constant \(c>0\) that does not depend on \(N, Q, R, \delta\).

% Finally, setting \(R = Q^{\frac{1}{n + 2\nu}}\), we have 
% \begin{align}
%     \max_{i \in \llbracket 1, N\rrbracket} \|\hat p_i - p_i\|_{H^{n/2}} \leq   c \log (4N/\delta)^{1/2} Q^{\frac{n - 2\nu}{2n + 4\nu}},
% \end{align}
% for some constant \(c>0\) that does not depend on \(N, Q, \delta\).
    
% \end{proof}

\begin{proof} Fix any $\varepsilon>0$. The Sobolev embedding 
$H^{n/2+\varepsilon}(\mathbb{R}^n)\hookrightarrow L^\infty(\mathbb{R}^n)$ gives
\[
\|\hat p_i - p_i\|_{L^\infty(\mathbb{R}^n)} \;\le\; c_\varepsilon \,\|\hat p_i - p_i\|_{H^{n/2+\varepsilon}(\mathbb{R}^n)},
\qquad i=1,\ldots,N,
\]
where $c_\varepsilon>0$ depends only on $n, \varepsilon$.

Following steps 3 and 4 of Theorem 5.3 in \citet{bonalli2025non}, for $\nu>n/2 + \varepsilon$ we establish
% \begin{align}
%     \max_{i \in \llbracket 1, N\rrbracket} \|\hat p_i - p_i\|_{H^{n/2+\varepsilon}} &\leq \max_{i \in \llbracket 1, N\rrbracket}  \|p_{\theta_i}(t_i,\cdot)\|_{H^\nu} R^{\,\frac n2+\varepsilon-\nu} + 2^{n/2+\varepsilon}R^{\,\frac n2+\varepsilon} 3(V_n R)^{\frac n2} \log (4N/\delta)^{1/2}  Q^{-1/2},
% \end{align}
\begin{equation}
\begin{split}
\max_{i \in \llbracket 1, N\rrbracket} \|\hat p_i - p_i\|_{H^{n/2+\varepsilon}}
&\leq \max_{i \in \llbracket 1, N\rrbracket}  \|p_i\|_{H^\nu}
R^{\,\frac n2+\varepsilon-\nu} \\
&\quad + 2^{n/2+\varepsilon} R^{\,\frac n2+\varepsilon}
3(V_n R)^{\frac n2} \log (4N/\delta)^{1/2} Q^{-1/2}.
\end{split}
\end{equation}
where \(V_n\) is the volume of the \(n\)-dimensional unit ball.

% \begin{equation}\label{eq:61eps}
% \max_{i\in\llbracket 1,N\rrbracket}\|\hat p_i-p_i\|_{H^{n/2+\varepsilon}}
% \;\le\;
% \max_{i\in\llbracket 1,N\rrbracket}\|p_i\|_{H^\nu}\,R^{\,\frac n2+\varepsilon-\nu}
% \;+\;
% C \,R^{\,\frac n2+\varepsilon}\,\bigl(V_n R\bigr)^{\frac n2}\,
% \bigl[\log(4N/\delta)\bigr]^{1/2}\,Q^{-1/2},
% \end{equation}
% where $V_n$ is the volume of the $n$-dimensional unit ball and $C>0$ is universal (independent of $N,Q,R,\delta$).

% Noting that $\max_{i\in\llbracket 1,N\rrbracket}\|p_{\theta_i}(t_i,\cdot)\|_{H^\nu}\le \|p\|_{H^\nu}$, we conclude that, with probability at least $1-\delta$,
% \begin{equation}\label{eq:62eps}
% \max_{i\in\llbracket 1,N\rrbracket}\|\hat p_i-p_i\|_{H^{n/2+\varepsilon}}
% \;\le\;
% c_\varepsilon\Bigl(
% \|p\|_{H^\nu}\,R^{\,\frac n2+\varepsilon-\nu}
% \;+\;
% R^{\,n+\varepsilon}\,
% \bigl[\log(4N/\delta)\bigr]^{1/2}\,Q^{-1/2}
% \Bigr),
% \end{equation}
Under the assumption that \(\sup_{(\theta,t)\in D\times[0,T_{\max}]}\|p_{\theta}(t,\cdot)\|_{H^\nu}<\infty\), we have in particular $\max_{i\in\llbracket 1,N\rrbracket}\|p_i\|_{H^\nu}<\infty$. Hence, with probability at least $1-\delta$,
\begin{equation}\label{eq:62eps}
\max_{i\in\llbracket 1,N\rrbracket}\|\hat p_i-p_i\|_{H^{n/2+\varepsilon}}
\;\le\;
c_\varepsilon\Bigl(
R^{\,\frac n2+\varepsilon-\nu}
\;+\;
R^{\,n+\varepsilon}\,
\bigl[\log(4N/\delta)\bigr]^{1/2}\,Q^{-1/2}
\Bigr),
\end{equation}
for some $c_\varepsilon>0$ independent of $N,Q,R,\delta$ (but possibly depending on $n,\nu,\varepsilon$).

Finally, setting $R = Q^{\frac{1}{n + 2\nu}}$ balances the two terms in \eqref{eq:62eps} and yields
\begin{align}
\max_{i \in \llbracket 1, N\rrbracket} \|\hat p_i - p_i\|_{H^{n/2+\varepsilon}}
&\leq c [\log(4N/\delta)]^{1/2}\,
Q^{\frac{\,n-2\nu +2\varepsilon\,}{\,2n+4\nu\,}},
\end{align}
for some constant \(c>0\) that does not depend on \(N, Q, \delta\).

\end{proof}

This lemma bounds the pointwise estimation errors of the safety and reset levels at the selected points \((\theta_i, t_i)\) in terms of the \(L^\infty\)-norm error of the corresponding density estimations \(\hat p_{\theta_i}(t_i, \cdot)\).
\begin{lemma}\label{lem:level_to_kde} Under Assumptions \ref{as:initial_safe_set}-\ref{as:reset_set}, we have
    \begin{align}
    &\max_{i \in \llbracket 1, N \rrbracket} |\hat s_{\theta_i, t_i} - s(\theta_i, t_i)| \leq  V_s \max_{i \in \llbracket 1, N \rrbracket} \|\hat p_i - p_i\|_{L^{\infty}},\\
    &\max_{i \in \llbracket 1, N \rrbracket} |\hat r_{\theta_i, t_i} - r(\theta_i, t_i)| \leq  V_r \max_{i \in \llbracket 1, N \rrbracket} \|\hat p_i - p_i\|_{L^{\infty}},
\end{align}
where \(p_i \triangleq p_{\theta_i}(t_i, \cdot)\), \(V_s \triangleq \int_{\mathbb{R}^n} \1_{\{g(x) \geq 0\}} dx\), and \(V_r \triangleq \int_{\mathbb{R}^n} \1_{\{h(x) \geq 0\}} dx\).
\end{lemma}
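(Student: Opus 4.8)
The plan is to reduce both inequalities to a single elementary estimate, exploiting the fact that the safety and reset levels are, by construction, integrals of the true state density over fixed measurable sets, and that their plug-in estimates are the same integrals against the kernel density estimator $\hat p_i$. First I would recall that, by the definition of $p_\theta(t,\cdot)$ as the density of $X_{u_\theta}(t)$ (well-defined by the existence-and-uniqueness results invoked in the setup), we have $s(\theta_i,t_i) = \mathbb{P}(g(X_{u_{\theta_i}}(t_i))\ge 0) = \int_{\{g(x)\ge 0\}} p_i(x)\,dx$, while by definition $\hat s_{\theta_i,t_i} = \int_{\{g(x)\ge 0\}} \hat p_{\theta_i}(t_i,x)\,dx = \int_{\{g(x)\ge 0\}} \hat p_i(x)\,dx$. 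Subtracting and using linearity of the integral, $\hat s_{\theta_i,t_i} - s(\theta_i,t_i) = \int_{\{g(x)\ge 0\}} \bigl(\hat p_i(x) - p_i(x)\bigr)\,dx$.

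Next I would bound the absolute value of this quantity by $\int_{\{g(x)\ge 0\}} |\hat p_i(x) - p_i(x)|\,dx \le \|\hat p_i - p_i\|_{L^\infty(\R^n)}\int_{\{g(x)\ge 0\}}\,dx = V_s\,\|\hat p_i - p_i\|_{L^\infty(\R^n)}$, which is the classical Hölder-type estimate $\|\phi\psi\|_{L^1}\le\|\phi\|_{L^\infty}\|\psi\|_{L^1}$ applied with $\psi = \1_{\{g(\cdot)\ge 0\}}$. Taking the maximum over $i\in\llbracket 1,N\rrbracket$ on both sides — the factor $V_s$ being independent of $i$ — yields the first claim. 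The bound for the reset level is obtained by the same argument, with $g$ replaced by $h$, $s$ by $r$, $\hat s$ by $\hat r$, and $V_s$ by $V_r$.

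The only point worth flagging is that these bounds are informative only when the safe (resp. reset) region has finite Lebesgue measure, i.e. $V_s,V_r<\infty$; in the safety-critical settings considered in this work the constraint sets are bounded, so this holds. Beyond that there is no genuine obstacle: the whole argument is a one-line application of the triangle inequality under the integral sign, and the fact that $s(\theta_i,t_i)$ and $r(\theta_i,t_i)$ are exactly integrals of $p_i$ against the indicator of the constraint set is precisely the density representation guaranteed in the problem setup.
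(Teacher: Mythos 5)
Your argument is correct and is exactly the paper's proof: both write $\hat s_{\theta_i,t_i}-s(\theta_i,t_i)$ as an integral of $\hat p_i - p_i$ over $\{g\ge 0\}$ and apply the $L^1$–$L^\infty$ Hölder estimate, with the identical treatment for the reset level. Your added remark that the bounds are vacuous unless $V_s,V_r<\infty$ is a sensible sanity check but not a departure from the paper's reasoning.
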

\begin{proof} For any \(i \in \llbracket 1, N\rrbracket\), using Hölder's inequality, we have 
\begin{align}
    |\hat s_{\theta_i, t_i} - s(\theta_i, t_i)| \leq  V_s \|\hat p_i - p_i\|_{L^{\infty}},
\end{align}
where \(p_i \triangleq p_{\theta_i}(t_i, \cdot)\), and \(V_s \triangleq \int_{\mathbb{R}^n} \1_{\{g(x) \geq 0\}} dx\).

Similarly, \( |\hat r_{\theta_i, t_i} - r(\theta_i, t_i)| \leq  V_r \|\hat p_i - p_i\|_{L^{\infty}}\) where \(V_r \triangleq \int_{\mathbb{R}^n} \1_{\{h(x) \geq 0\}} dx\).

\end{proof}

\subsection{Proof of safe learning of controlled Sobolev dynamics}
\label{subsec:proof_exploration_sobolev}

We conclude the following theorem from all previously established lemmas.

\begin{theorem}[Safely learning controlled Sobolev dynamics]
Let \(\eta > 0\), and assume Assumptions~\ref{as:initial_safe_set}–\ref{as:smoothness} hold. Set \(R = Q^{1/(n + 2\nu)}\). Then there exist constants \(c_1, \ldots, c_5 > 0\), independent of \(N, Q, \delta, \eta\), such that if
\[
c_1 \log (4N/\delta)^{1/2} Q^{\frac{n - 2\nu}{2n + 4\nu}} \leq N^{-1/2},
\]
then the stopping condition \(\max_{(\theta, t, T) \in \Gamma_N} \sigma_N(\theta, t) < \eta\) is satisfied after at most \(N \leq c_2 \eta^{-2/(1 - \alpha)}\) iterations for any \(\alpha > (m + 1)/(m + 1 + 2\nu)\). Moreover:
\begin{itemize}
    \item \textbf{(Safety):} All selected triples \((\theta_i, t_i, T_i)\) satisfy \(s^\infty(\theta_i, T_i) \geq 1 - \varepsilon\) and \(r(\theta_i, T_i) \geq 1 - \xi\). The final set \(\Gamma_N\) includes only controls meeting these thresholds and can thus serve as a certified safe set for deployment.
    \item \textbf{(Estimation accuracy):} For all \((\theta, t, T) \in \Gamma_N\),
    \[
    \|\hat p_{\theta}(t, \cdot) - p_{\theta}(t, \cdot)\|_\infty \leq c_3 \eta, \quad
    |\hat s_N(\theta, t) - s(\theta, t)| \leq c_4 \eta, \quad
    |\hat r_N(\theta, t) - r(\theta, t)| \leq c_5 \eta.
    \]
\end{itemize}
\end{theorem}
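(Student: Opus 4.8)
The plan is to assemble the theorem by chaining the five lemmas established in the appendix, with the only real work being the bookkeeping of how the sampling budget $Q$ must scale with the iteration count $N$ so that the data-dependent parts of the confidence parameters $\beta_N^s,\beta_N^r,\beta_N^p$ stay bounded by an absolute constant. First I would invoke Lemma~\ref{lem:safety-guarantees} and Lemma~\ref{lem:reset-guarantees}: these hold under Assumptions~\ref{as:initial_safe_set}, \ref{as:reset_set}, \ref{as:attainability} (and Assumption~\ref{as:smoothness} implies Assumption~\ref{as:attainability} by taking $k$ a Mat\'ern kernel of order $\nu$, since then $s,r$, being spatial integrals of $p(\theta,t,\cdot)$ against fixed indicator weights, inherit membership in the Sobolev RKHS $\bmG=H^\nu(D\times[0,T_{\max}])$), which immediately gives the safety and reset guarantees for every selected triple and for the final set $\Gamma_N$. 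That part is essentially free.

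Next I would handle the stopping time. Since the chosen kernel is Mat\'ern with smoothness $\nu$ on a compact domain of dimension $m+1$, Example~\ref{ex:sub_lin_kern} gives $\gamma_N = \bmO(N^{(m+1)/(m+1+2\nu)}\log^{\cdots}N)$, so Assumption~\ref{as:sublinear-growth} holds with any exponent $\alpha > (m+1)/(m+1+2\nu)$ (the log factor is absorbed into the slightly larger exponent). Lemma~\ref{lem:sample-complexity-confidence} then yields termination in $N \le c_2\,\eta^{-2/(1-\alpha)}$ iterations, which is exactly the claimed bound. The subtlety here is that Lemma~\ref{lem:sample-complexity-confidence} is stated for a generic $\lambda$; with $\lambda = N^{-1}$ the normalization inside $\gamma_N$ matches the definition of $\sigma_i^2$ used in the telescoping argument, so nothing extra is needed, but I would state this alignment explicitly.

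The heart of the argument is controlling the confidence widths. From Lemma~\ref{lem:kde}, with $R = Q^{1/(n+2\nu)}$, with probability at least $1-\delta$ we have $\max_i \|\hat p_i - p_i\|_{L^\infty} \le c\,\log(4N/\delta)^{1/2} Q^{(n-2\nu)/(2n+4\nu)}$; Lemma~\ref{lem:level_to_kde} transfers the same rate (up to the constants $V_s,V_r$) to $\max_i|\hat s_{\theta_i,t_i}-s(\theta_i,t_i)|$ and the reset analogue. Plugging these into the definitions of $\beta_N^p,\beta_N^s,\beta_N^r$ from Lemmas~\ref{lem:validity-confidence-intervals} and~\ref{lem:validity-confidence-intervals-p}, the data-dependent term is $\lambda^{-1}N^{-1/2}\cdot(\text{that rate}) = N^{1/2}\cdot c\,\log(4N/\delta)^{1/2}Q^{(n-2\nu)/(2n+4\nu)}$ since $\lambda=N^{-1}$. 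The hypothesis $c_1\log(4N/\delta)^{1/2}Q^{(n-2\nu)/(2n+4\nu)} \le N^{-1/2}$ is precisely what forces this term to be $\le 1$, so $\beta_N^p \le 1 + \|p\|_{L^\infty(\R^n;\bmG)}$ and similarly $\beta_N^s,\beta_N^r$ are bounded by absolute constants (finite by Assumption~\ref{as:smoothness}). Combining with the stopping condition $\sigma_N(\theta,t) < \eta$ on $\Gamma_N$ and the validity bounds $|\hat s_N - s| \le \beta_N^s\sigma_N$, etc., gives $|\hat s_N(\theta,t)-s(\theta,t)| \le c_4\eta$ on $\Gamma_N$, and likewise for $r$ and for $\|\hat p_\theta(t,\cdot)-p_\theta(t,\cdot)\|_\infty \le c_3\eta$.

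The main obstacle — really the only place judgment is needed rather than substitution — is making the constants genuinely uniform: one must check that $\beta_N^s\sigma_N(\theta,t)$ is small at the candidate points used to \emph{define} $\Gamma_N$ during exploration (not merely at termination), so that the feasible set is never corrupted by an over-optimistic confidence bound; this is handled by noting the budget hypothesis is assumed to hold for the running $N$ throughout, so the bound $\beta_N^\bullet \le \text{const}$ holds at every iteration, and the validity Lemmas~\ref{lem:validity-confidence-intervals}–\ref{lem:validity-confidence-intervals-p} are pointwise and iteration-uniform. A secondary bookkeeping point is the $\log(4N/\delta)$ factor: since $N$ is itself bounded by $c_2\eta^{-2/(1-\alpha)}$, one can replace $\log(4N/\delta)$ by $\log(c\,\eta^{-1}\delta^{-1})$ a posteriori, keeping all five constants independent of $N,Q,\delta,\eta$ as claimed. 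I would close by collecting $c_1,\dots,c_5$ from the constants produced in Lemmas~\ref{lem:validity-confidence-intervals}–\ref{lem:kde} and Example~\ref{ex:sub_lin_kern}.
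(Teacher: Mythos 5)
Your proposal follows essentially the same route as the paper's own proof: chain Lemmas~\ref{lem:validity-confidence-intervals} and~\ref{lem:validity-confidence-intervals-p} for confidence interval validity, Lemmas~\ref{lem:safety-guarantees}--\ref{lem:reset-guarantees} for the safety and reset claims, Lemma~\ref{lem:kde} and Lemma~\ref{lem:level_to_kde} plus the hypothesis on $Q$ to force the data-dependent part of $\beta_N^{s},\beta_N^{r},\beta_N^{p}$ below a constant once $\lambda=N^{-1}$ is set, and Lemma~\ref{lem:sample-complexity-confidence} with the Mat\'ern information-gain bound from Example~\ref{ex:sub_lin_kern} for the stopping time. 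Your added remarks — why Assumption~\ref{as:attainability} follows from Assumption~\ref{as:smoothness} with a Sobolev kernel, the $\lambda N=1$ normalization inside $\gamma_N$, and the need for $\beta_N^{\bullet}$ to stay bounded at every iteration so that $\Gamma_N$ is never corrupted — are all consistent with (and somewhat more careful than) what the paper's proof records implicitly.
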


\begin{proof} From Lemma \ref{lem:validity-confidence-intervals} and Lemma \ref{lem:validity-confidence-intervals-p},for any \((\theta, t) \in D \times [0, T_{\max}]\) and \(\lambda >0\),
\begin{align}
    &| \hat s_N(\theta, t) - s(\theta, t)| \leq  \beta_N^s \sigma_N(\theta, t),\\
    &| \hat r_N(\theta, t) - r(\theta, t)| \leq  \beta_N^r \sigma_N(\theta, t),\\
    &\| \hat p_{\theta}(t, \cdot) - p_{\theta}(t, \cdot)\|_{L^\infty(\R^n)} \leq \beta_N^p \sigma_N(\theta, t),
\end{align}
where \(\beta_N^s \triangleq \lambda^{-1/2}  \max_{i \in \llbracket 1, N \rrbracket} |\hat s_{\theta_i, t_i} - s(\theta_i, t_i)| + \|s\|_{\bmG}\), \(\beta_N^r \triangleq \lambda^{-1/2} \max_{i \in \llbracket 1, N \rrbracket} |\hat r_{\theta_i, t_i} - r(\theta_i, t_i)| + \|r\|_{\bmG}\), and \(\beta_N^p \triangleq \lambda^{-1/2} \max_{i \in \llbracket 1, N \rrbracket} \|\hat p_{\theta_i, t_i}(\cdot) - p_{\theta_i}(t_i, \cdot)\|_{L^\infty(\R^n)}  + \|p\|_{L^{\infty}(\R^n;\bmG)}\).

Then, from Lemma \ref{lem:level_to_kde}, we have
    \begin{align}
    &\max_{i \in \llbracket 1, N \rrbracket} |\hat s_{\theta_i, t_i} - s(\theta_i, t_i)| \leq  V_s \max_{i \in \llbracket 1, N \rrbracket} \|\hat p_i - p_i\|_{L^{\infty}},\\
    &\max_{i \in \llbracket 1, N \rrbracket} |\hat r_{\theta_i, t_i} - r(\theta_i, t_i)| \leq  V_r \max_{i \in \llbracket 1, N \rrbracket} \|\hat p_i - p_i\|_{L^{\infty}},
\end{align}
However, from Lemma \ref{lem:kde}, with probability at least \(1-\delta\), we have
\begin{align}
    \max_{i \in \llbracket 1, N \rrbracket} \|\hat p_i - p_i\|_{L^\infty(\R^n)} \leq c \log (4N/\delta)^{1/2} Q^{\frac{n - 2\nu}{2n + 4\nu}},
\end{align}
such that there exists \(c_1>0\) independent of \(N, Q, \delta, \eta\) such that if 
\begin{align}
    c_1 \log (4N/\delta)^{1/2} Q^{\frac{n - 2\nu}{2n + 4\nu}} \leq (V_s \vee V_r)^{-1} N^{-1/2},
\end{align}
then
\begin{align}
    \max_{i \in \llbracket 1, N \rrbracket} \|\hat p_i - p_i\|_{L^\infty(\R^n)}  \leq N^{-1/2}.
\end{align}

Therefore, from Lemma \ref{lem:level_to_kde}, we have, if \(\lambda=N^{-1}\), for any \(i \in \llbracket 1, N\rrbracket\),
\begin{align}
    \beta_N^s &\triangleq N^{1/2} \max_{j \in \llbracket 1, N \rrbracket} |\hat s_{\theta_j, t_j} - s_{\theta_j, t_j}| + \|s\|_{\bmG}\\
    &\leq 1 +\|s\|_{\bmG}.
\end{align}
Similarly, we have \(\beta_N^r \leq 1 +\|r\|_{\bmG}\) and \(\beta_N^p \leq 1 + \|p\|_{L^{\infty}(\R^n;\bmG)}\).

Assumption \ref{as:smoothness} simultaneously guarantees \(\|p\|_{L^\infty(\R^n;\bmG)} \triangleq\sup_{x\in\R^n}\bigl\|\;(\theta,t)\mapsto p(\theta,t,x)\bigr\|_{\bmG} < +\infty\), \(\sup_{(\theta,t)\in D\times[0,T_{\max}]}\|p_{\theta}(t,\cdot)\|_{H^\nu}<\infty\) with \(\nu >n/2\). Furthermore, considering a kernel that induces a Sobolev RKHS \(\bmG\) of order \(\nu\), Assumption \ref{as:attainability} holds true, and Assumption \ref{as:sublinear-growth} holds for any \(\alpha > (m + 1)/(m + 1 + 2\nu)\) as mentioned in the examples of Assumption \ref{as:sublinear-growth}.

Therefore, from Lemma \ref{lem:safety-guarantees}, and Lemma \ref{lem:reset-guarantees}, we have that:
\begin{itemize}
    \item All selected controls are safe, i.e.,  \(s^\infty(\theta_i, T_i) \geq 1 - \varepsilon\) for any \(i \in \N^*\).
    \item All selected controls ensure reset, i.e., \(r(\theta_i, T_i) \geq 1 - \xi\) for any \(i \in \N^*\).
\end{itemize}

Moreover, Lemma \ref{lem:sample-complexity-confidence} ensures that for any \(\alpha > (m + 1)/(m + 1 + 2\nu)\), with probability at least \(1-\delta\), the stopping condition
\begin{align}
    \max_{\theta, t, T \in \Gamma_N} \sigma_N(\theta, t) < \eta,
\end{align}
is reached for \(N \leq c_2 \eta^{-\frac{2}{1-\alpha}}\) where \(c_2\) does not depend on \(N, Q, \delta, \eta\).

\end{proof}

\section{Implementation details}\label{app:impl_details}

This section details the implementation of the method described in Section \ref{sec:method}, available on GitHub (lmotte/dynamics-safe-learn) as an open-source Python library. We detail all computational steps, including vectorized implementations using Python libraries such as NumPy for efficiency. Additionally, we outline the computational complexity of each step to provide insights into scalability.

\subsection{System estimation}

\paragraph{Density estimation.} For each data point \((\theta_i, t_i, T_i)\), the density is estimated with
\[
\hat{p}_{\theta_i, t_i}(x) = \frac{1}{Q} \sum_{j=1}^Q \rho_R(x - X_{u_{\theta_i}}(w_j^{i}, t_i)),
\]
where \(Q\) is the number of samples generated for each control \(\theta_i\) and time \(t_i\), \(\rho_R(x)\) is a kernel density function, typically defined as \(\rho_R(x) = R^{n/2} \|x\|^{-n/2} J_{n/2}(2 \pi R \|x\|)\), with \(R > 0\) and \(J_{n/2}\) the Bessel function of order \(n/2\), \(X_{u_{\theta_i}}(w_j^i, t_i)\) are the system trajectories generated under control \(u_{\theta_i}\).

% \paragraph{Computational complexity.} Evaluating \(\hat{p}_{\theta_i, t_i}(x)\) incurs a cost of \(\mathcal{O}(Q)\) per data point. This step is trivially parallelizable across data points, as each kernel evaluations are independent.

\paragraph{Computational complexity.} 
Evaluating $\hat{p}_{\theta_i, t_i}(x)$ requires $\mathcal{O}(Q)$ operations per data point. This step is trivially parallelizable across data points, since all kernel evaluations are independent.

\begin{algorithm}[h]
\DontPrintSemicolon
\caption{DensityEstimation$(\{x_j\}_{j=1}^Q, \rho_R)$}
\label{alg:density}
\KwIn{$Q$ trajectory samples $\{x_j\}_{j=1}^Q$ at $(\theta_i, t_i)$, kernel $\rho_R$}
\KwOut{density estimator $\hat p_{\theta_i,t_i}:\mathbb{R}^n \to \mathbb{R}_{\ge 0}$}
\BlankLine
\lIf{$x$ is queried}{
  \Return $\displaystyle \hat p_{\theta_i,t_i}(x) \gets \frac{1}{Q}\sum_{j=1}^Q \rho_R(x - x_j)$
}
\end{algorithm}

\paragraph{Probability Computation (\(\hat{s}, \hat{r}\)).} The vectors \(\hat{P}\), \(\hat{S}\), and \(\hat{R}\) are constructed from the observed data points \(\{(\theta_i, t_i, T_i)\}_{i=1}^N\) as follows:
\[
\hat{P}(\cdot) = 
\begin{bmatrix}
    \hat{p}_{\theta_1}(t_1, \cdot) \\
    \hat{p}_{\theta_2}(t_2, \cdot) \\
    \vdots \\
    \hat{p}_{\theta_N}(t_N, \cdot)
\end{bmatrix},
\quad
\hat{S} = 
\begin{bmatrix}
    \hat{s}_{\theta_1, t_1} \\
    \hat{s}_{\theta_2, t_2} \\
    \vdots \\
    \hat{s}_{\theta_N, t_N}
\end{bmatrix},
\quad
\hat{R} = 
\begin{bmatrix}
    \hat{r}_{\theta_1, t_1} \\
    \hat{r}_{\theta_2, t_2} \\
    \vdots \\
    \hat{r}_{\theta_N, t_N}
\end{bmatrix}.
\]

The vectors \(\hat{S}\) and \(\hat{R}\) are computed by integrating the densities over the safe and reset regions, respectively:
\[
\hat{s}_{\theta_i, t_i} = \int_{\{x \in \mathbb{R}^n : g(x) \geq 0\}} \hat{p}_{\theta_i, t_i}(x) \, dx,
\]
\[
\hat{r}_{\theta_i, t_i} = \int_{\{x \in \mathbb{R}^n : h(x) \geq 0\}} \hat{p}_{\theta_i, t_i}(x) \, dx.
\]
These integrals are approximated using Monte Carlo integration.  
If we can sample $x_k \sim \hat p_{\theta_i,t_i}$, then
\[
\hat{s}_{\theta_i, t_i} \approx \frac{1}{Q'} \sum_{k=1}^{Q'} \mathds{1}\{g(x_k) \geq 0\},
\qquad
\hat{r}_{\theta_i, t_i} \approx \frac{1}{Q'} \sum_{k=1}^{Q'} \mathds{1}\{h(x_k) \geq 0\}.
\]
Alternatively, using trajectory samples $\{X_{u_{\theta_i}}(w_j^i,t_i)\}_{j=1}^Q$,
\[
\hat{s}_{\theta_i, t_i} \approx \frac{1}{Q} \sum_{j=1}^{Q} \mathds{1}\{g(X_{u_{\theta_i}}(w_j^i, t_i)) \geq 0\},
\qquad
\hat{r}_{\theta_i, t_i} \approx \frac{1}{Q} \sum_{j=1}^{Q} \mathds{1}\{h(X_{u_{\theta_i}}(w_j^i, t_i)) \geq 0\}.
\]
Here $\mathds{1}\{\cdot\}$ denotes the indicator function.

% These integrals are approximated using Monte Carlo integration on the estimated density
% \[
% \hat{s}_{\theta_i, t_i} \approx \frac{1}{Q'} \sum_{k=1}^{Q'} \hat{p}_{\theta_i, t_i}(x_k) \mathds{1}\{g(x_k) \geq 0\},
% \]
% \[
% \hat{r}_{\theta_i, t_i} \approx \frac{1}{Q'} \sum_{k=1}^{Q'} \hat{p}_{\theta_i, t_i}(x_k) \mathds{1}\{h(x_k) \geq 0\},
% \]
% or Monte Carlo integration on trajectory samples
% \[
% \hat{s}_{\theta_i, t_i} \approx \frac{1}{Q} \sum_{j=1}^{Q} \mathds{1}\{g(X_{u_{\theta_i}}(w_j^i, t_i)) \geq 0\},
% \]
% \[
% \hat{r}_{\theta_i, t_i} \approx \frac{1}{Q} \sum_{j=1}^{Q} \mathds{1}\{h(X_{u_{\theta_i}}(w_j^i, t_i)) \geq 0\},
% \]
% where \(\mathds{1}\{\cdot\}\) is the indicator function.

\paragraph{Computational complexity.}
Constructing $\hat{P}$ involves $N$ density evaluations, each costing $\mathcal{O}(Q)$, for a total of $\mathcal{O}(NQ)$.
For $\hat{S}$ and $\hat{R}$, Monte Carlo based on samples from $\hat p$ costs $\mathcal{O}(NQ')$, while using trajectory samples costs $\mathcal{O}(NQ)$.
Overall, constructing $(\hat{P}, \hat{S}, \hat{R})$ costs $\mathcal{O}(NQ)$ (trajectory MC) or $\mathcal{O}(N(Q+Q'))$ (using both).
All computations are trivially parallelizable across evaluation points.

\begin{algorithm}[h]
\DontPrintSemicolon
\caption{ComputeProbabilities$(\hat p_{\theta_i,t_i},\, g,\, h,\, Q',\, \texttt{mode})$}
\label{alg:probs}
\KwIn{density estimator $\hat p_{\theta_i,t_i}$, constraint functions $g,h$, number of MC samples $Q'$, 
      mode $\in\{\texttt{density},\texttt{trajectory}\}$}
\KwOut{probabilities $(\hat s_{\theta_i,t_i},\, \hat r_{\theta_i,t_i})$}
\If{\texttt{mode} $=$ \texttt{density}}{
  Sample $\{x_k\}_{k=1}^{Q'}$ \textbf{from} $\hat p_{\theta_i,t_i}$\; % e.g., resample-and-perturb for KDE
  $\displaystyle \hat s \gets \frac{1}{Q'}\sum_{k=1}^{Q'} \mathbf{1}\{g(x_k)\ge 0\}$\;
  $\displaystyle \hat r \gets \frac{1}{Q'}\sum_{k=1}^{Q'} \mathbf{1}\{h(x_k)\ge 0\}$\;
}
\Else(\tcp*[f]{\texttt{trajectory}}){
  Reuse trajectory samples $\{x_j\}_{j=1}^{Q}$ at $(\theta_i,t_i)$\;
  $\displaystyle \hat s \gets \frac{1}{Q}\sum_{j=1}^{Q} \mathbf{1}\{g(x_j)\ge 0\}$\;
  $\displaystyle \hat r \gets \frac{1}{Q}\sum_{j=1}^{Q} \mathbf{1}\{h(x_j)\ge 0\}$\;
}
\Return $(\hat s, \hat r)$
\end{algorithm}

\paragraph{Kernel-based estimation of system maps.}

Given \(N\) observed data points \(\{(\theta_i, t_i, T_i)\}_{i=1}^N\), the Gram matrix \(K\) is constructed as
\[
K_{ij} = k((\theta_i, t_i), (\theta_j, t_j)),
\]
where \(k\) is the kernel function. For a new input \((\theta, t)\), the estimates for the system dynamics, safety, and reset functions are
\[
\hat{p}_\theta(t, x) = \hat{P}(x) (K + N\lambda I)^{-1} k(\theta, t),
\]
\[
\hat{s}_N(\theta, t) = \hat{S}(K + N\lambda I)^{-1} k(\theta, t),
\]
\[
\hat{r}_N(\theta, t) = \hat{R}(K + N\lambda I)^{-1} k(\theta, t),
\]
where \(k(\theta, t) = [k((\theta, t), (\theta_i, t_i))]_{i=1}^N\), \(\lambda\) is a regularization parameter, \(\hat{P}(\cdot) \triangleq (\hat{p}_{\theta_i, t_i}(\cdot))_{i=1}^N\), \(\hat{S} = (\hat{s}_{\theta_i, t_i})_{i=1}^N\), \(\hat{R} = (\hat{r}_{\theta_i, t_i})_{i=1}^N\).

The predictive uncertainty for \((\theta, t)\) is computed as
\[
\sigma^2_N(\theta, t) = k((\theta, t), (\theta, t)) - k(\theta, t)^* (K + N\lambda I)^{-1} k(\theta, t).
\]

\paragraph{Computational complexity.} Gram matrix construction requires \(\mathcal{O}(N^2)\) operations,
matrix inversion involves \(\mathcal{O}(N^3)\) operations. Evaluating \(\hat{p}_N(\theta, t)\), \(\hat{s}_N(\theta, t)\), \(\hat{r}_N(\theta, t)\), or \(\sigma^2_N(\theta, t)\), costs \(\mathcal{O}(N^2)\) per prediction.

\begin{algorithm}[h]
\DontPrintSemicolon
\caption{FitKernelMaps$(\{(\theta_i,t_i)\}_{i=1}^N,\, \hat P(\cdot),\, \hat S,\, \hat R,\, k,\, \lambda)$}
\label{alg:kernel}
\KwIn{inputs $\{(\theta_i,t_i)\}_{i=1}^N$, targets $\hat P(\cdot), \hat S, \hat R$, kernel $k$, regularization $\lambda$}
\KwOut{query operators for $\hat p_\theta(t,\cdot)$, $\hat s_N(\theta,t)$, $\hat r_N(\theta,t)$, $\sigma_N^2(\theta,t)$}
Build $K$ with $K_{ij}\gets k((\theta_i,t_i),(\theta_j,t_j))$; set $K_\lambda \gets K + N\lambda I$\;
Compute and store $K_\lambda^{-1}$\;
\textbf{Precompute:} $\alpha_S \gets K_\lambda^{-1}\hat S$, \quad $\alpha_R \gets K_\lambda^{-1}\hat R$\;
\BlankLine
\textbf{Query} at $(\theta,t)$ (and optionally at $x$):
\begin{enumerate}
  \item $k \gets [\,k((\theta,t),(\theta_i,t_i))\,]_{i=1}^N$
  \item $z \gets K_\lambda^{-1} k$ \tcp*{used for variance and density map}
  \item $\hat s_N(\theta,t) \gets k^\top \alpha_S$
  \item $\hat r_N(\theta,t) \gets k^\top \alpha_R$
  \item $\sigma_N^2(\theta,t) \gets k((\theta,t),(\theta,t)) - k^\top z$
  \item \textbf{If density at $x$ is requested:} build $\hat P(x) \gets [\hat p_{\theta_1,t_1}(x),\ldots,\hat p_{\theta_N,t_N}(x)]^\top$ and return $\hat p_\theta(t,x) \gets \hat P(x)^\top z$
\end{enumerate}
\end{algorithm}

\subsection{Safe sampling}

\paragraph{Feasibility criteria.} The feasibility criteria for safe sampling are defined as:
\[
\LCBS(\theta, T) \triangleq \inf_{t \in [0, T]} \left(\hat{s}_N(\theta, t) - \beta_N^s \sigma_N(\theta, t)\right),
\]
\[
\LCBR(\theta, T) \triangleq \hat{r}_N(\theta, T) - \beta_N^r \sigma_N(\theta, T),
\]
where \(\beta_N^s, \beta_N^r > 0\) are confidence parameters. The safe-resettable feasible set is:
\[
\Gamma_N = \Big\{
(\theta, t, T) \in D \times [0, T_{\max}]^2
\;\Big|\; t \leq T,\;\; \LCBS(\theta, T) \geq 1 - \varepsilon, \;\; \LCBR(\theta, T) \geq 1 - \xi
\Big\} \cup \Gamma_0,
\]
with
\[
\Gamma_0 = \left\{ (\theta, t, T_{\max}) \in D \times [0, T_{\max}]^2 \;\middle|\; (\theta, t) \in S_0 \cap R_0 \right\}.
\]

\paragraph{Sampling rule.} The next control, time, and trajectory are selected by solving:
\[
(\theta_{N+1}, t_{N+1}, T_{N+1}) = \argmax_{(\theta, t, T) \in \Gamma_N} \sigma_N(\theta, t),
\]
where
\[
\sigma^2_N(\theta, t) = k((\theta, t), (\theta, t)) - k(\theta, t)^* (K + N\lambda I)^{-1} k(\theta, t).
\]

\paragraph{Stopping rule.} Exploration terminates when:
\[
\max_{(\theta, t, T) \in \Gamma_N} \sigma_N(\theta, t) < \eta,
\]
where \(\eta > 0\) is the user-defined uncertainty threshold.

\paragraph{Computational complexity.}
\begin{itemize}
    \item \textbf{Set construction (\(\Gamma_N\)):} Evaluating \(\LCBS\) and \(\LCBR\) involves \(\mathcal{O}(N^2)\) operations for each candidate point. For \(M\) candidate points, constructing \(\Gamma_N\) costs \(\mathcal{O}(MN^2)\).
    \item \textbf{Uncertainty evaluation:} Evaluating \(\sigma_N\) involves matrix-vector multiplications and inversions, where Gram matrix construction costs \(\mathcal{O}(N^2)\), matrix inversion costs \(\mathcal{O}(N^3)\), and per-query uncertainty evaluation costs \(\mathcal{O}(N^2)\).
    \item \textbf{Optimization (\(\argmax\)):}
    \begin{itemize}
        \item For discretization over \(M\) candidates: \(\mathcal{O}(MN^2)\),
        \item For gradient-based optimization: \(\mathcal{O}(kN^2)\), where \(k\) is the number of optimization iterations, using a nonlinear constrained solver (e.g.\ SQP or interior‑point). Each iteration involves computing the gradient of \(\sigma_N(\theta, t)\), costing \(\mathcal{O}(N^2)\).
    \end{itemize}
\end{itemize}

\paragraph{Efficient sampling algorithm.} In practice, the sampling process can be improved to reduce computational costs by focusing on promising regions and avoiding unnecessary evaluation of low-uncertainty points: (1) threshold-based filtering: select any candidate where \(\sigma_N(\theta, t) > \eta\) to avoid costly global optimization while maintaining guarantees; (2) exclude evaluated points: skip candidates where \(\sigma_N(\theta, t) < \eta\), assuming the uncertainty is non-increasing (true for \(\lambda = 1/N\)); (3) localized sampling: restrict \(\Gamma_N\) to points near the initial safe set and previously selected points. Algorithm~\ref{alg:efficient-sampling} implements a region-growing strategy that encourages local exploration around previously selected safe points.

\begin{algorithm}[h]
\SetAlgoLined
\DontPrintSemicolon
\KwIn{Initial safe-resettable set \(S_0 \cap R_0\), threshold \(\eta\), feasible set \(\Gamma_N\)}
\KwOut{Updated sets \(\mathcal{P}_k\), \(\mathcal{A}_k\), and the selected candidate (if found)}

\textbf{Initialization:}\;
Set \(\mathcal{P}_0 = S_0 \cap R_0\) and \(\mathcal{A}_0 = \emptyset\)\;

Define the feasible set using a localized search region:
\[
\Gamma^k = \Gamma_N \cap \left\{ (\theta, t, T) \;\middle|\; d((\theta, t), \mathcal{P}_k) \leq r_k \right\} \setminus \mathcal{A}_k,
\]
where \(d((\theta, t), \mathcal{P}_k)\) denotes the minimum Euclidean distance from \((\theta, t)\) to any point in \(\mathcal{P}_k\).

\textbf{Iterations:}\;
\For{\(k = 0, 1, 2, \ldots\)}{
    \ForEach{candidate \((\theta, t, T) \in \Gamma^k\)}{
        \If{\(\sigma_N(\theta, t) > \eta\)}{
            Select the candidate \((\theta, t, T)\)\;
            Update the safe-resettable set:
            \[
            \mathcal{P}_{k+1} = \mathcal{P}_k \cup \{ (\theta, t, T) \}
            \]
            \Return \((\mathcal{P}_{k+1}, \mathcal{A}_k, (\theta, t, T))\)\;
        }
        \Else{
            Update the excluded set:
            \[
            \mathcal{A}_{k+1} = \mathcal{A}_k \cup \{ (\theta, t, T) \}
            \]
        }
    }
    Expand the radius: \(r_k \to r_{k+1}\)\;
    Recompute \(\Gamma^k\)\;
    \If{\(\Gamma^k = \emptyset\)}{
        \Return \((\mathcal{P}_k, \mathcal{A}_k, \text{None})\)\;
    }
}
\caption{Efficient sampling algorithm}
\label{alg:efficient-sampling}
\end{algorithm}

\subsection{Role and tuning of hyperparameters}
\label{app:hyper}

Hyperparameters critically influence the balance between exploration, safety, and computational efficiency in our method. The safety and reset thresholds (\(\varepsilon, \xi\)) directly control this balance: lower thresholds enforce stricter constraints, restricting exploration to safer regions at the expense of slower coverage, whereas higher thresholds allow broader exploration but increase risk. The confidence parameters (\(\beta_s, \beta_r\)) modulate how conservatively the safe-resettable set expands, reflecting tolerance to uncertainty in safety and reset predictions. The parameters \(\lambda\) and \(\gamma\) define the smoothness of the estimated functions, thus capturing prior knowledge about the system dynamics. Different hyperparameters may be chosen for each map: \(\gamma_{\text{kde}}\) and \(\lambda_{\text{kde}}\) for density estimation, and \(\gamma_{\text{collect}}, \lambda_{\text{collect}}, \beta_{\text{collect}}\) for safety and reset predictions, as these functions may have different smoothness characteristics. Typically, hyperparameters are tuned using validation data. Parameters for density estimation (\(\gamma_{\text{kde}}, \lambda_{\text{kde}}\)) can be optimized after data collection by maximizing log-likelihood. In contrast, the parameters governing safety and reset exploration (\(\gamma_{\text{collect}}, \lambda_{\text{collect}}, \beta_{\text{collect}}\)) must be set beforehand, as they directly impact safe exploration. When prior knowledge is limited, we recommend initially conservative settings—high \(\gamma\), low \(\lambda\), high \(\beta\), and large kernel bandwidth \(R\)—and gradually relaxing them based on data-driven insights. In our experiments, \(\gamma_{\text{kde}}\) and \(\lambda_{\text{kde}}\) were visually tuned using validation controls \((2\pi/3, -\pi/3)\), \((-2\pi/3, -\pi/3)\), and \((0, -\pi/3)\). Meanwhile, \(\gamma_{\text{collect}}, \lambda_{\text{collect}}, \beta_{\text{collect}}\) were set heuristically, assuming reasonable prior smoothness estimates to minimize computational overhead. Without such prior knowledge, comprehensive hyperparameter tuning would likely demand significantly higher computational resources, as extensive parameter searches become necessary.

\subsection{Computational considerations} 
\label{app:comp_cons}

To provide practical insight into computational requirements, we report measured execution times from experiments on a standard machine (Apple M3 Pro, 18GB RAM). Each training iteration required approximately \(1.92s \pm 0.02\), totaling around \(31.77 \pm 0.38\) minutes for 1000 iterations, based on 20 repeated runs. This includes candidate selection, trajectory simulation, computing safety and reset probabilities, and model updates at each iteration. Density predictions took approximately 10 seconds in average for computing \(p(\theta, t, x)\) over a grid of 2500 \(x\) for each considered \((\theta, t)\). Although the computational times are non-trivial, they remain manageable on standard hardware for the problem sizes considered. It should be noted that several approximation methods—such as sketching for matrix inversion and online matrix inversion—as well as parallelization techniques (e.g., parallelizing the simulations) can be leveraged to alleviate the computational burden. However, exploring these techniques is beyond the scope of this paper. 

\section{Additional experimental results} \label{app:exp_results}

\begin{figure}[t]
    \centering
    \begin{minipage}{0.3\textwidth}
        \centering
        \includegraphics[width=\textwidth]{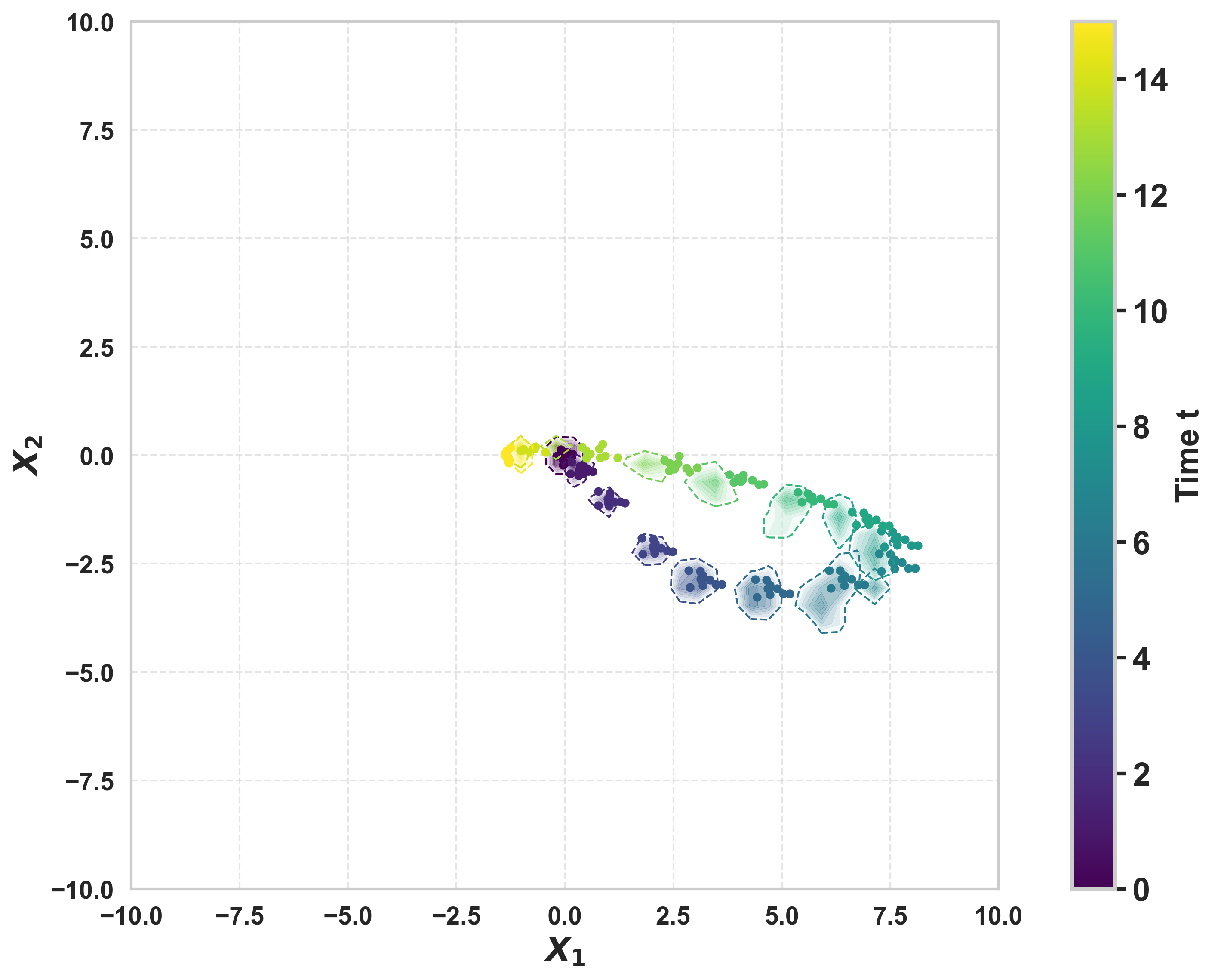}
    \end{minipage}
    \begin{minipage}{0.3\textwidth}
        \centering
        \includegraphics[width=\textwidth]{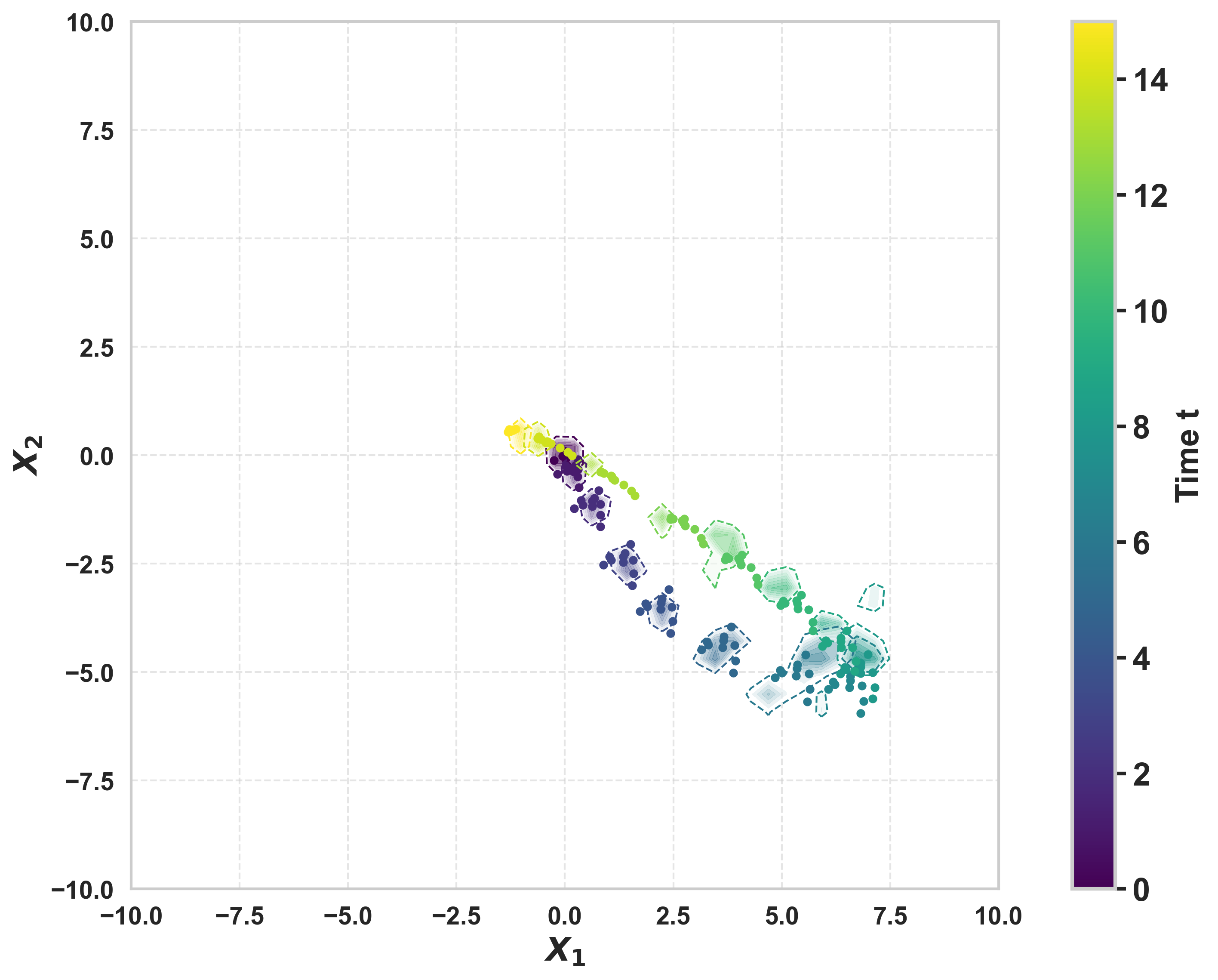}
    \end{minipage}
    \begin{minipage}{0.3\textwidth}
        \centering
        \includegraphics[width=\textwidth]{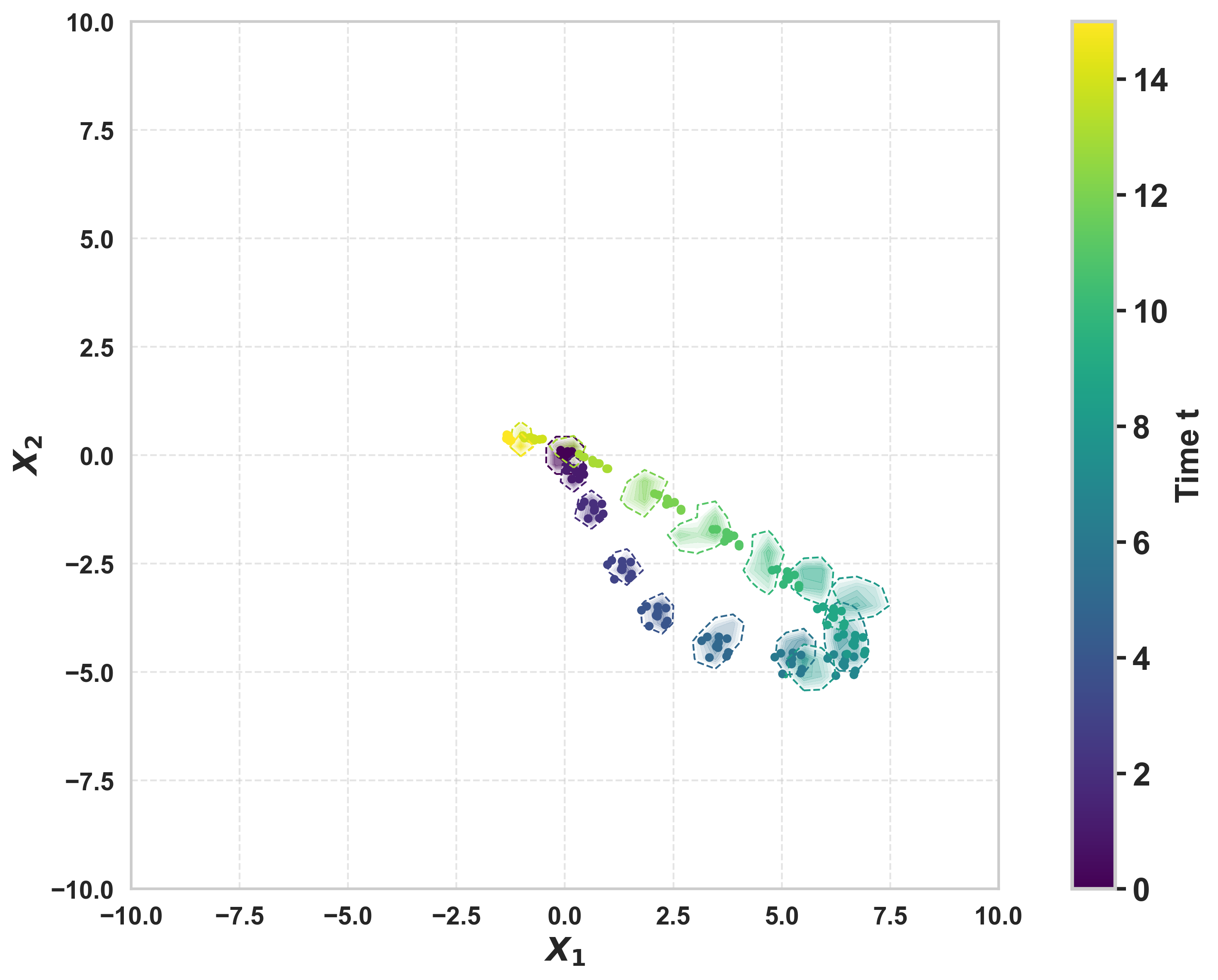}
    \end{minipage}
    \caption{Predicted probability density of the trained model (\(\varepsilon=\xi=0.1\)) along with 10 true trajectories, for three test controls ((-0.81  1.20), (-1.07  0.92), (-1.13  1.18)) from the known set of safe controls for this model. Predictions were computed on a spatial grid of 2500 points and times \(t \in \llbracket 0, 16\rrbracket\).}
    \label{fig:kde}
\end{figure}

\subsection{Dynamics prediction accuracy}
\label{app:dynamics-prediction}

In Figure~\ref{fig:kde}, we present the predicted probability density from the model trained with \(\varepsilon=\xi=0.1\), alongside 10 true trajectories for three test controls ((-0.81  1.20), (-1.07  0.92), (-1.13  1.18)) chosen from the known set of safe controls with uncertainty below 0.1. The density is evaluated over a spatial grid of 2500 points and time steps \(t \in \llbracket 0, 16\rrbracket\). This visualization provides a qualitative assessment of the model's dynamic prediction accuracy. The predicted probability distributions closely match the true dynamics, exhibiting similar means and variances over time.

\subsection{Information gain over iterations}
\label{app:info-gain}

To complement the analysis of exploration behavior, we report the cumulative information gain over the course of training, for different safety and reset thresholds \(\varepsilon = \xi \in \{0.1, 0.3, 0.5, +\infty\}\). Figure~\ref{fig:inf_gain} shows how the information gain evolves as new trajectory data is collected.

We observe that larger thresholds, which allow more aggressive exploration, result in faster information acquisition. In contrast, stricter thresholds slow down exploration and yield more gradual information growth. This reflects the fundamental trade-off between exploration and safety: ensuring high-probability safety requires restricting the sampling space, particularly in regions with high model uncertainty.

\begin{figure}[h!]
    \centering
    % First Row
    \begin{minipage}{0.5\textwidth}
        \centering
        \includegraphics[width=\textwidth]{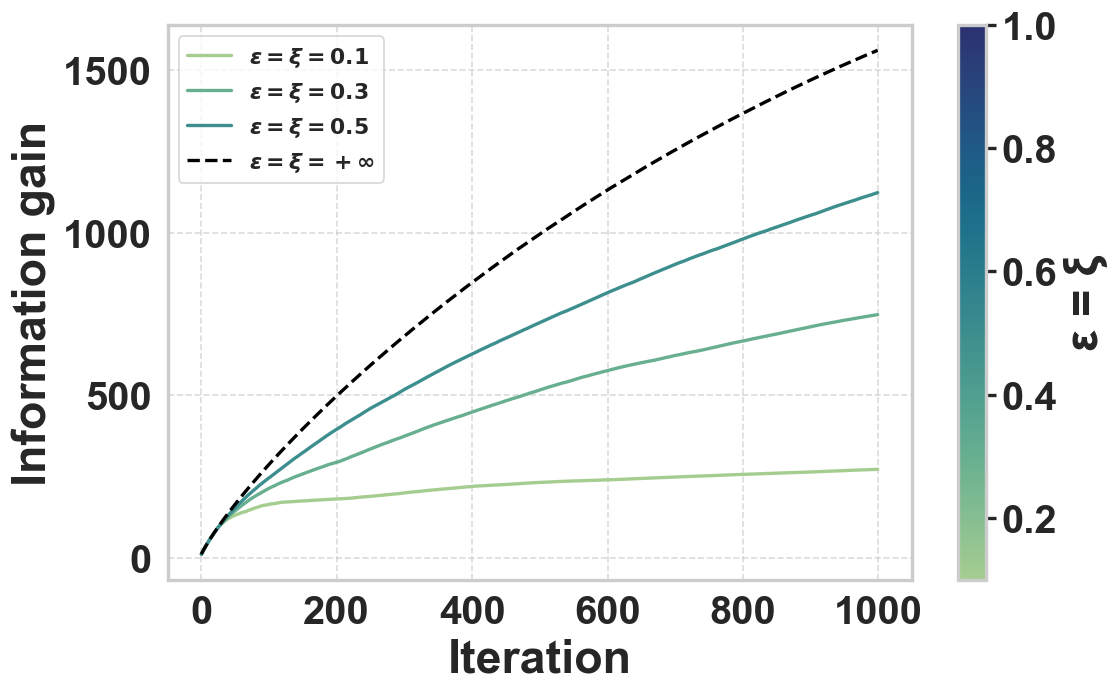}
    \end{minipage}
    \caption{Cumulative information gain over iterations for various thresholds \(\varepsilon = \xi \in \{0.1,\, 0.3,\, 0.5, +\infty\}\).}
    \label{fig:inf_gain}
\end{figure}

\end{document}